\theoremstyle{plain}
\newtheorem{theorem}{Theorem}[section]
\newtheorem{lemma}[theorem]{Lemma}
\newtheorem{corollary}[theorem]{Corollary}
\theoremstyle{definition}
\newtheorem{assumption}[theorem]{Assumption}
\newtheorem{observation}[theorem]{Observation}
\DeclareMathOperator*{\argmax}{arg\,max}
\DeclareMathOperator*{\argmin}{arg\,min}
\newcommand{\prob}{\textsc{EqCenter}}
\begin{document}

\title{A New Notion of Individually Fair Clustering: \\ $\alpha$-Equitable $k$-Center}

\author{Darshan Chakrabarti\thanks{{Harvard University.
Email: \href{mailto:dchakrabarti@seas.harvard.edu}{dchakrabarti@seas.harvard.edu}}} \and John P. Dickerson\thanks{{University of Maryland, College Park. 
Email: \href{mailto:john@cs.umd.edu}{john@cs.umd.eduu}}} \and Seyed A. Esmaeili\thanks{{University of Maryland, College Park. Email: \href{mailto:esmaeili@cs.umd.edu}{esmaeili@cs.umd.edu}}} \and Aravind Srinivasan\thanks{{University of Maryland, College Park. 
Email: \href{mailto:srin@cs.umd.edu}{srin@cs.umd.edu}}} \and Leonidas Tsepenekas \thanks{{University of Maryland, College Park.  
Email: \href{mailto:ltsepene@umd.edu}{ltsepene@umd.edu}}} }


\date{}
\maketitle

\begin{abstract}
   Clustering is a fundamental problem in unsupervised machine learning, and due to its numerous societal implications fair variants of it have recently received significant attention. In this work we introduce a novel definition of individual fairness for clustering problems. Specifically, in our model, each point $j$ has a set of other points $\mathcal{S}_j$ that it perceives as similar to itself, and it feels that it is being fairly treated if the quality of service it receives in the solution is $\alpha$-close (in a multiplicative sense, for some given $\alpha \geq 1$) to that of the points in $\mathcal{S}_j$. We begin our study by answering questions regarding the combinatorial structure of the problem, namely for what values of $\alpha$ the problem is well-defined, and what the behavior of the \emph{Price of Fairness (PoF)} for it is. For the well-defined region of $\alpha$, we provide efficient and easily-implementable approximation algorithms for the $k$-center objective, which in certain cases also enjoy bounded-PoF guarantees. We finally complement our analysis by an extensive suite of experiments that validates the effectiveness of our theoretical results.
\end{abstract}

\section{Introduction}\label{sec:intro}

In a typical clustering problem, there is a set of points $\mathcal{C}$ in a metric space characterized by a distance function $d: \mathcal{C}^2 \mapsto \mathbb{R}_{\geq 0}$, where $d$ is some non-increasing function of similarity or proximity. The goal is to choose a set $S \subseteq \mathcal{C}$ of at most $k$ representative centers, and subsequently construct an assignment $\phi: \mathcal{C} \mapsto S$ that maps each point to one of the chosen centers, thus creating a collection of at most $k$ clusters. In addition, the quantity that really matters for each $j \in \mathcal{C}$, is the distance $d(j,\phi(j))$ to its corresponding cluster center $\phi(j)$. This distance \emph{represents the quality of service $j$ receives}. In classical clustering applications $d(j,\phi(j))$ would correspond to how similar $\phi(j)$ is to $j$, and in facility-location applications to the distance $j$ needs to travel in order to reach its service-provider. Hence, from an individual perspective, each $j$ requires $d(j,\phi(j))$ to be as small as possible. The most popular objectives in the literature ($k$-center, $k$-median, $k$-means) ``boil down'' this large collection of values $d(j,\phi(j))$, into an increasing function they try to minimize.  

In scenarios where the points correspond to selfish agents, it is natural to assume that they will be mindful of the quality of service other points receive. Specifically, a point $j$ may feel that it is being handled unfairly by a solution $(S,\phi)$, if $d(j,\phi(j))$ is much larger than the assignment distances a group $\mathcal{S}_j$ of other points obtains. In this context, the points of $\mathcal{S}_j$ are exactly those which $j$ perceives as similar to itself, hence it arguably believes that it should obtain similar treatment as them. 
As a practical example, consider the following application in an e-commerce site, where the points of $\mathcal{C}$ correspond to its users and $d(j,j')$ measures how similar the profiles of $j$ and $j'$ are. In order to provide relevant recommendations, the website needs to choose a set $S$ of $k$ representative users, and then assign each point to one of those based on a mapping $\phi: \mathcal{C} \mapsto S$. The recommendations $j$ gets will be based on $\phi(j)$'s profile, and in this case the quantity $d(j,\phi(j))$ corresponds to how representative $\phi(j)$ is for $j$, and hence how suitable $j$'s recommendations are. In this scenario, a point $j$ may feel unfairly treated, if points that are similar to it (points $j'$ with small $d(j,j')$) get better recommendations and consequently better service (see, e.g., the work of \cite{ad-privacy} for studies on similar users receiving  different types of job recommendations).

In addition, this sort of fairness considerations are  applicable when seeking equity in healthcare provision, such as in vaccine allocation: the clusters could represent groups of people who would be given health-related resources such as treatment from a facility, and we aim for similar people to get similar commute-times to their resource provider.

Here we formalize this abstract notion of fairness via two rigorous and related constraints, which we incorporate into the $k$-center problem. We focus on $k$-center due to its numerous practical applications, but mostly because of its theoretical simplicity, which allows us to explore in depth the intricacies and the combinatorial structure of this novel notion of individually-fair clustering.

\subsection{Formal problem definitions}\label{sec:def}
We are given a set of points $\mathcal{C}$ in a metric space characterized by the distance function $d: \mathcal{C}^2 \mapsto \mathbb{R}_{\geq 0}$. Moreover, the input includes a positive integer $k$ and a value $\alpha \geq 1$. Finally, for every $j \in \mathcal{C}$ we have a \emph{similarity set} $\mathcal{S}_j \subseteq \mathcal{C}$, denoting the group of points that are deemed similar to $j$. 

The goal in our problems of interest is to choose a set $S \subseteq \mathcal{C}$ of at most $k$ centers, and then find an assignment $\phi: \mathcal{C} \mapsto S$, such that the $k$-center objective, i.e., $\max_{j \in \mathcal{C}}d(j,\phi(j))$, is minimized. Further, we use two different constraints to capture the notion of fairness we aim to study.
\begin{itemize}
    \item \textbf{Per-Point Fairness ($PP$):} When we study the problem under this constraint, we want to make sure that for all $j \in \mathcal{C}$ with $\mathcal{S}_j \neq \emptyset$, we have: 
    \begin{align}
        d(j,\phi(j)) \leq \alpha \cdot \min_{j' \in \mathcal{S}_j}d(j',\phi(j')) \label{cons:PP}
    \end{align}
    Here $j$ is satisfied if its quality of service is at most $\alpha$ times the ``best" quality found in $\mathcal{S}_j$. Equivalently, we should guarantee that $d(j,\phi(j)) \leq \alpha \cdot d(j',\phi(j'))$ for all $j \in \mathcal{C}$ and $j' \in \mathcal{S}_j$.
    \item \textbf{Aggregate Fairness ($AG$):} Here for each $j \in \mathcal{C}$ with $\mathcal{S}_j \neq \emptyset$, we want to guarantee that: 
    \begin{align}
        d(j,\phi(j)) \leq \alpha \frac{\sum_{j' \in \mathcal{S}_j}d(j',\phi(j'))}{|\mathcal{S}_j|} \label{cons:AG}
    \end{align} Hence, here $j$ feels fairly treated if $d(j,\phi(j))$ is at most $\alpha$ times the average quality of $\mathcal{S}_j$.
\end{itemize}
We call our problem \textbf{$\alpha$-Equitable $k$-Center}, and denote it by \prob. Moreover, we consider it either under constraint (\ref{cons:PP}) or under constraint (\ref{cons:AG}). When we study it under (\ref{cons:PP}) we refer to it as \prob{-PP}, and similarly when we use constraint (\ref{cons:AG}) we denote it by \prob{-AG}. Further, both variants are NP-hard, since they trivially generalize $k$-center, which is known to be NP-hard.

Constraint (\ref{cons:PP}) provides a stronger notion of fairness, in that each point $j$ cares explicitly about every point $j' \in \mathcal{S}_j$. Constraint (\ref{cons:AG}) is weaker, in the sense that the points now compromise to  comparing their quality of service to the \emph{average} quality obtained by their similarity set. Due to this, a solution for (\ref{cons:PP}) also constitutes a solution for (\ref{cons:AG}), and hence for the same instance the optimal value of \prob{-AG} must be no larger than that of \prob{-PP}. \emph{This observation reveals an intriguing trade-off between how strict we want to be in our fairness constraints, and how much we care about the overall objective cost.} We further explore this issue in Section \ref{sec:exps}.

\subsubsection{The structure of the similarity sets $\mathcal{S}_j$}\label{sec:asm}
In our work we do not consider an arbitrary model of similarity, but we rather focus on distance based similarity. On a high-level, this means that points which are far apart in the metric space, cannot really be similar. Such an approach for instantiating similarity is extensively utilized for fair clustering \cite{brubach2020, brubach2021, anderson2020}, with elements of it appearing in \cite{Dwork2012, jung2019} as well. Moreover, this concept is highly realistic, since in many conceivable applications the function $d$ already captures a notion of resemblance. For instance, in the previously mentioned use-case of a recommendation system, two users that are close under $d$ have comparable profiles, and thus can be seen as similar.

The way we capture distance-based similarity in this paper, is by considering sets $\mathcal{S}_j$ that satisfy a well-established assumption from \cite{brubach2020}, which was used to define similarity between points in a different individually-fair clustering problem. Specifically, suppose that we have an instance $(\mathcal{C}, k)$ of vanilla/``unfair'' $k$-center, whose optimal value is $R^{*}_{unf}$. In other words, this is merely an instance of the standard $k$-center problem, where we want to choose $(S, \phi)$ with $|S| \leq k$ such that $\max_{j \in \mathcal{C}}d(j,\phi(j))$ is minimized, and no fairness constraints are imposed. Further, assume that this instance is extended to an instance of either \prob{-PP} or \prob{-AG}, by choosing an arbitrary $\alpha$ value and sets $\mathcal{S}_j$. Then the following will hold. 

\begin{assumption}\label{icml-asm}
For every $j \in \mathcal{C}$ we have $\mathcal{S}_j \subseteq \{j' \in \mathcal{C}: d(j,j') \leq \psi R^{*}_{unf}\}$, for some $\psi = O(1)$.
\end{assumption}

Therefore, for instances of \prob{-PP} and \prob{-AG}, two points can be similar if their distance is at most $\psi R^{*}_{unf}$, where $\psi$ is some small constant and $R^{*}_{unf}$ is the optimal value of the underlying unfair $k$-center instance. 

Although Assumption \ref{icml-asm} is adequately justified in \cite{brubach2020}, we also give some intuition for it. Consider the optimal solution for the unfair problem on $(\mathcal{C}, k)$. Then, the triangle inequality implies that a point $j$ will never be placed in the same cluster as some other $j'$ with $d(j,j') > 2 R^{*}_{unf}$. Hence, the optimal unconstrained/unfair solution that can be thought of as an expert when it comes to determining similarity (it constructs the most intra-similar clusters), does not deem the two points comparable enough to place them in the same cluster. Therefore, following the ``advice'' of the optimal unconstrained solution yields $\psi = 2$ in Assumption \ref{icml-asm}, and due to the previous explanation, this value can be actually interpreted as the \emph{canonical case} for $\psi$. 

For scenarios where we are not certain of whether Assumption \ref{icml-asm} holds, or the points have a fuzzy understanding of similarity that does not allow them to meaningfully define their sets $\mathcal{S}_j$, see Appendix \ref{sec:asm-enf} for an explainable way of enforcing $\mathcal{S}_j \subseteq \{j' \in \mathcal{C}: d(j,j') \leq \psi R^{*}_{unf}\}$ for all $j$.

To conclude, we need to define some more notation. Given similarity sets $\mathcal{S}_j$ for every $j \in \mathcal{C}$, we define $R_j = \max_{j' \in \mathcal{S}_j}d(j,j')$ and $R_m = \max_{j \in \mathcal{C}}R_j$.

\subsection{Our contributions and discussion of our results}\label{sec:contr}
In Section \ref{sec:str} we investigate the combinatorial structure of our newly introduced fairness constraints. At first, a question that naturally arises is for what values of $\alpha$ are our problems well-defined. We call a problem well-defined if it always admits a feasible solution $(S,\phi)$, i.e., $|S| \leq k$ and $\phi$ satisfies the corresponding fairness constraint for all $j$. Ideally, we would like our problems to admit feasible solutions for any possible value of $\alpha$. However we give the next result which indicates that absolute equity is not achievable.
\begin{theorem}
For both \prob{-PP} and \prob{-AG}, there exist instances with $\alpha < 2$ that do not admit any feasible solution.
\end{theorem}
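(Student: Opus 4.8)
The plan is to prove the statement by exhibiting a single explicit instance and checking, via a short finite case analysis, that it admits no feasible solution once $\alpha < 2$. The key simplification is to make every nonempty similarity set a singleton: if $\mathcal{S}_j = \{j''\}$ then $\frac{1}{|\mathcal{S}_j|}\sum_{j' \in \mathcal{S}_j} d(j',\phi(j')) = \min_{j' \in \mathcal{S}_j} d(j',\phi(j')) = d(j'',\phi(j''))$, so constraints (\ref{cons:PP}) and (\ref{cons:AG}) literally coincide on such an instance. Hence one construction settles both \prob{-PP} and \prob{-AG} simultaneously, and we need only analyze a single constraint.

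Concretely, I would take four points $v_0,v_1,v_2,v_3$ on a line with $d(v_i,v_j) = |i-j|$, set $k=1$, and put $\mathcal{S}_{v_0} = \{v_3\}$, $\mathcal{S}_{v_3} = \{v_0\}$, $\mathcal{S}_{v_1} = \mathcal{S}_{v_2} = \emptyset$. This instance obeys Assumption \ref{icml-asm}: the underlying unfair $1$-center optimum is $R^{*}_{unf} = 2$ (attained by choosing $v_1$ or $v_2$ as the center), and the only nonempty similarity sets require $d(v_0,v_3) = 3 \le \psi R^{*}_{unf}$, which holds for the canonical $\psi = 2$. Since $k = 1$, the single center is one of $v_0,\dots,v_3$ and every point is assigned to it, so there are just four candidate solutions. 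If the center is $v_0$ or $v_3$, then one of $v_0,v_3$ coincides with the center while the other lies at distance $3$, so the latter's fairness constraint reads $3 \le \alpha\cdot 0$ and fails for every $\alpha \ge 1$. If the center is $v_1$, the constraint of $v_3$ becomes $d(v_3,v_1) \le \alpha\, d(v_0,v_1)$, i.e.\ $2 \le \alpha$; symmetrically, choosing $v_2$ forces $2 \le \alpha$ via the constraint of $v_0$. Therefore, for any $\alpha < 2$ no choice of center is feasible, which is exactly the claim; conversely, choosing $v_1$ (or $v_2$) is feasible once $\alpha \ge 2$, so the bound $2$ in the statement is tight.

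I expect essentially all the work to lie in \emph{selecting} the instance rather than in the verification. The pitfall is that the most naive infeasible instances are misleading examples: e.g.\ two mutually-similar points $a,b$ with $k=1$ is infeasible for \emph{every} $\alpha$, simply because whichever of $a,b$ is chosen as the center sits at distance $0$ from itself while the other does not, so no multiplicative slack ever rescues it. Such an instance proves the literal statement but obscures the fact that the obstruction persists all the way up to, but not beyond, $\alpha = 2$. Stretching the two mutually-similar points apart along a short path, so that an intermediate point can serve both of them at a service ratio of exactly $2$, is precisely what makes the example simultaneously infeasible for all $\alpha < 2$, feasible at $\alpha = 2$, and compliant with Assumption \ref{icml-asm}. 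Once the instance is fixed, the case check above is routine.
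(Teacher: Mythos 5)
There is a genuine gap: your instance has $k=1$, but the paper's standing convention for this section (stated just after the theorem summaries in the introduction and repeated at the top of Section~\ref{sec:str}) is that all structural results are proven for $k \geq 2$. This restriction is not cosmetic. For $k=1$ the companion result, Theorem~\ref{thm:a=2} (feasibility whenever $\alpha \geq 2$), is simply false — two mutually similar points at positive distance with a single center are infeasible for \emph{every} $\alpha$, because the chosen center serves itself at distance $0$ — so the whole question of ``for which $\alpha$ is the problem well-defined'' is only meaningful once $k \geq 2$. Your own case analysis exhibits this degeneracy: when the center is $v_0$ or $v_3$ the violation $3 \leq \alpha \cdot 0$ has nothing to do with $\alpha < 2$. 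And once you allow $k = 2$, your instance becomes feasible even at $\alpha = 1$: take centers $\{v_1, v_2\}$ and assign $v_0 \mapsto v_2$, $v_3 \mapsto v_1$, so that $d(v_0,\phi(v_0)) = d(v_3,\phi(v_3)) = 2$ and both nontrivial constraints hold with ratio $1$. So the construction collapses exactly in the regime the theorem is meant to address.

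The real content of the theorem is to produce an instance that remains infeasible for $\alpha < 2$ even when two (or more) centers are available, which is why the paper resorts to a cycle on $2m$ points with $k=2$ and a carefully chosen pairing $\pi$ (points of odd index paired at distance $m$, points of even index paired at distance $m/2$): there the argument must rule out \emph{every} placement of two centers and \emph{every} assignment, via a case analysis on $d(c_1,c_2)$, showing that some pair $(j,\pi(j))$ is always forced into a service ratio approaching $2$ from above. Your observation that singleton similarity sets make constraints (\ref{cons:PP}) and (\ref{cons:AG}) coincide is correct and is exactly the device the paper also uses, but the instance itself needs to be redesigned to survive $k \geq 2$.
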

We then proceed by showing that for $\alpha \geq 2$ there is always a feasible solution, thus settling the crucial question about the regime of $\alpha$ for which our problems are well-defined. 
\begin{theorem}
For both \prob{-PP} and \prob{-AG}, every instance with $\alpha \geq 2$ always admits a feasible solution.
\end{theorem}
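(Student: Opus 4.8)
First, recall from Section~\ref{sec:def} that any pair $(S,\phi)$ satisfying the per-point constraint~(\ref{cons:PP}) automatically satisfies the aggregate constraint~(\ref{cons:AG}), so it suffices to exhibit, for an arbitrary instance, a single solution that is feasible for \prob{-PP}; moreover, the solution I build will satisfy the constraint for every $\alpha\ge 2$ at once, so I can simply aim at $\alpha=2$.

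The plan is to discard the optimal unfair solution as a starting point and instead construct a solution in which \emph{every} point receives essentially the same quality of service; then~(\ref{cons:PP}) holds for free. Assume $k\ge 2$ (the $k=1$ case is degenerate and is discussed at the end). Let $c_1,c_2\in\mathcal C$ be a diametral pair, i.e.\ $D:=d(c_1,c_2)=\max_{p,p'\in\mathcal C}d(p,p')$. Set $S=\{c_1,c_2\}$, so $|S|\le k$, and let $\phi$ assign each point $j$ to whichever of $c_1,c_2$ is \emph{farther} from it, breaking ties arbitrarily.

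The verification is a two-line application of the triangle inequality. For every $j\in\mathcal C$ we have $d(j,c_1)+d(j,c_2)\ge d(c_1,c_2)=D$, hence $d(j,\phi(j))=\max\{d(j,c_1),d(j,c_2)\}\ge D/2$; on the other hand $d(j,\phi(j))\le D$ since $D$ is the diameter. Thus every service distance lies in the band $[D/2,\,D]$, so for any $j$ with $\mathcal S_j\neq\emptyset$ and any $j'\in\mathcal S_j$ we get $d(j,\phi(j))\le D=2\cdot(D/2)\le \alpha\,d(j',\phi(j'))$ whenever $\alpha\ge 2$. This is precisely~(\ref{cons:PP}), and therefore also~(\ref{cons:AG}). (If $D=0$ all points coincide and the claim is trivial.) Note that this argument invokes neither Assumption~\ref{icml-asm} nor the optimal $k$-center value.

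The step I expect to take the most thought — the ``obstacle'' — is the conceptual one: realizing that one should target \emph{globally} equalized service distances rather than trying to repair the optimal unfair assignment. The latter is awkward precisely because an optimal center lies at distance $0$ from itself (and from any point coinciding with it), so it can never belong to any similarity set, and repairing that would force delicate reassignments. Lastly, the $k=1$ case is not covered by this construction: with a single center $c$ one must take $\phi\equiv c$, and then~(\ref{cons:PP}) fails as soon as some $j\neq c$ has $c\in\mathcal S_j$; so the statement should be understood with $k\ge 2$ (or with an explicit side condition ruling such instances out).
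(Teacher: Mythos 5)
Your proposal is correct and follows essentially the same route as the paper: the paper's Lemma~\ref{thm:a=2-aux} likewise picks a diametral pair $c_1,c_2$, assigns every point to the \emph{farther} of the two, and uses the triangle inequality to trap all service distances in $[d(c_1,c_2)/2,\,d(c_1,c_2)]$, after which Theorem~\ref{thm:a=2} follows exactly as you argue (including the reduction from the AG constraint to the PP constraint). Your remark about $k=1$ is also consistent with the paper, which explicitly restricts all of Section~\ref{sec:str} to $k\ge 2$.
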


Given that $\alpha \geq 2$ is the range we should focus on, we proceed by studying another vital concept, and that is the \emph{Price of Fairness (PoF)} \cite{Bertsimas11, Caragiannis09}. This notion is just a measure of relative loss in system efficiency, when fairness constraints are introduced.
Specifically, for a given instance of either \prob{-PP} or \prob{-AG}, PoF is defined as the value of the optimal solution to our fair problem, over the value of the optimal solution to the underlying $k$-center instance, where we drop the fairness constraints from the problem's requirements. In other words, PoF $=$ (optimal fair value)/(optimal unfair value). In the vast majority of fair clustering problems it is known that there exist instances with unbounded PoF. In line with those results, we show the following.
\begin{theorem}
There exist instances of \prob{-PP} and \prob{-AG} with unbounded PoF. 
\end{theorem}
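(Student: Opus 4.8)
The plan is to exhibit a one-parameter family of instances on which the optimal \emph{unfair} $k$-center value stays fixed and small, while the optimal \emph{fair} value grows without bound. The blow-up mechanism is the following: whenever a solution opens a center \emph{on} a point $j$, that point gets the perfect service $d(j,\phi(j))=0$, and then the fairness constraint (either (\ref{cons:PP}) or (\ref{cons:AG})) forces every point $j'$ with $j \in \mathcal{S}_{j'}$ to also receive $d(j',\phi(j')) \le \alpha \cdot 0 = 0$, i.e.\ to be opened as a center itself. If the budget $k$ is too small to open all such ``entangled'' points simultaneously, a feasible solution is forced to open \emph{none} of them, and can therefore be driven to route these points to far-away centers.

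Concretely, I would take $k=2$ and three points $\mathcal{C}=\{a,b,c\}$ realized as $0,\varepsilon,D$ on the real line, so $d(a,b)=\varepsilon$, $d(a,c)=D$ and $d(b,c)=D-\varepsilon$, for a fixed $\varepsilon>0$ and a growing parameter $D \gg \varepsilon$; set $\mathcal{S}_a=\{b\}$, $\mathcal{S}_b=\{a\}$, $\mathcal{S}_c=\emptyset$, and fix any $\alpha \ge 2$. Since every nonempty similarity set here is a singleton, constraints (\ref{cons:PP}) and (\ref{cons:AG}) coincide on this instance, so one construction simultaneously handles \prob{-PP} and \prob{-AG}. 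Also $R^{*}_{unf}=\varepsilon$ and both $\mathcal{S}_a,\mathcal{S}_b$ lie within distance $\varepsilon = R^{*}_{unf}$ of their point, so Assumption~\ref{icml-asm} holds (even with $\psi=1$, and a fortiori with the canonical $\psi=2$).

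Next I would pin down the two optima. For the unfair instance, opening $\{a,c\}$ assigns $b$ to $a$ at cost $\varepsilon$, and no choice of at most two centers does better (opening $\{a,b\}$ leaves $c$ at distance $D-\varepsilon$), so the optimal unfair value is $\varepsilon$. For the fair instance I would enumerate the few candidate center sets: any set that contains exactly one of $a,b$ (namely $\{a\}$, $\{b\}$, $\{a,c\}$, $\{b,c\}$) makes that point a zero-cost center, which by the argument above forces its partner to distance $0$ as well; since $b$ is at distance $\varepsilon>0$ from $a$ and at distance $D-\varepsilon$ from $c$, this is impossible, so all four are infeasible. The remaining possibilities are $\{a,b\}$, which is feasible with value $D-\varepsilon$ (the points $a,b$ get cost $0$ and $c$ is unconstrained), and $\{c\}$, which is feasible with value $D$ (here $\phi(a)=\phi(b)=c$, and the constraint $D \le \alpha(D-\varepsilon)$ holds since $\alpha\ge2$ and $D\ge 2\varepsilon$). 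Hence the optimal fair value equals $D-\varepsilon$, so PoF $= (D-\varepsilon)/\varepsilon \to \infty$ as $D\to\infty$.

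The bulk of the work is just the finite case check over center sets, together with verifying that the fairness constraint genuinely bites in the ``exactly one of $a,b$'' configurations; the only mild subtlety is keeping the instance in the well-defined regime $\alpha\ge2$ — and with a genuinely positive unfair optimum — while the cheap configurations remain infeasible, which is why $a$ and $b$ sit at positive distance $\varepsilon$ rather than coincident, and why $c$ is placed far enough that the zero-distance forcing simply cannot be met. I do not expect a real obstacle beyond this bookkeeping; if one wants robustness, the same three-point gadget can be replicated or scaled for larger $k$, but a single copy already suffices.
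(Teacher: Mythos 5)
Your construction is essentially the paper's (the paper uses two far-apart mutually-similar pairs rather than one pair plus a lone point with $\mathcal{S}_c=\emptyset$), and your final numbers are right, but one step of your case analysis rests on a false premise. You assert that any center set containing exactly one of $a,b$ ``makes that point a zero-cost center,'' i.e.\ that opening a center at $j$ forces $\phi(j)=j$. This is not true in this model: a chosen center may be assigned to a \emph{different} center, and the paper explicitly flags this (``for a $c \in S^*$, we might end up having $\phi^*(c) \neq c$''). Consequently your claim that $\{a,c\}$ and $\{b,c\}$ are infeasible is wrong. With centers $\{a,c\}$, setting $\phi(a)=\phi(b)=\phi(c)=c$ gives $d(a,\phi(a))=D$ and $d(b,\phi(b))=D-\varepsilon$, and the only binding constraint, $D \le \alpha(D-\varepsilon)$, holds for $\alpha\ge 2$ and $D\ge 2\varepsilon$; so this set is feasible, just with value $D$. (Only the singleton sets $\{a\}$ and $\{b\}$ are genuinely infeasible, since a lone center has no one else to be assigned to.)

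The theorem still follows from your gadget, because the repaired case analysis gives: $\{a\},\{b\}$ infeasible; $\{a,b\}$ feasible with value $D-\varepsilon$; $\{c\},\{a,c\},\{b,c\}$ feasible but with value $D$ (in the two-center cases one must rule out $\phi(a)=a$, which forces $d(b,\phi(b))=0$, and rule out $\phi(b)=a$ under $\{a,c\}$, which violates $D\le\alpha\varepsilon$ for large $D$). So the fair optimum is $D-\varepsilon$ and $\mathrm{PoF}=(D-\varepsilon)/\varepsilon\to\infty$. But as written, your ``blow-up mechanism'' paragraph and the resulting infeasibility claims do not stand; the correct mechanism, which is the one the paper uses, is the dichotomy ``either $\phi(j)=j$, which kills $j$'s partner, or $j$ is shipped to a far-away center, which inflates the objective'' --- both branches must be argued, not just the first.
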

All results of Section \ref{sec:str} are proven for $k \geq 2$. See that the $k=1$ case is trivial, since one can efficiently try each point as a center, see if any yields a feasible solution, and also find the optimal solution among the computed feasible ones. On the other hand, even when $k = 2$ and we only have ${|\mathcal{C}| \choose 2} +|\mathcal{C}|$ center sets to check, the number of possible assignments for each set of size $2$ is $2^{|\mathcal{C}|}$.

In Section \ref{sec:alg} we provide an approximation algorithm that covers instances with $\alpha \geq 2$ for both \prob{-PP} and \prob{-AG}. The main body of the algorithm remains the same for the two problems, with minor differences to capture each unique case. Our process of choosing centers constitutes an extension of a result by \cite{Khuller2000}. Our procedure gives useful guarantees regarding the distances between chosen centers, a feature that is crucially exploited in the assignment phase of the algorithm, where we carefully construct the mapping $\phi$. Our result is:

\begin{theorem}\label{thm:alg-res}
Suppose we are given an instance with $\alpha \geq 2$ for either \prob{-PP} or \prob{-AG}, whose optimal value is $R^*$. Our algorithm provides a feasible solution $(S,\phi)$ to either problem, for which $\max_{j \in \mathcal{C}}d(j,\phi(j)) \leq 5 \max\{R^*, R_m\}$.
\end{theorem}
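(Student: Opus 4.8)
The plan is the standard two-phase template for $k$-center-type problems: first select a set $S$ of at most $k$ centers with good structural properties, then construct the assignment $\phi$. Write $L=\max\{R^*,R_m\}$. As usual we may assume $L$ is known: $R_m$ is computable from the input and $R^*$ is one of the $O(|\mathcal{C}|^2)$ pairwise distances, so we binary-search over thresholds $R$, run the algorithm on each, and return the best feasible solution produced; the smallest threshold on which the algorithm succeeds is at most $R^*$, so the reported objective is at most $5\max\{R^*,R_m\}$.

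For the first phase I would adapt the greedy center-selection of \cite{Khuller2000}, run at scale $2L$, so that the returned set $S$ comes with three guarantees that drive the rest of the proof: (a) \emph{covering} --- every $j\in\mathcal{C}$ has a center within $2L$; (b) \emph{separation} --- distinct centers are more than $2L$ apart, which, since every optimal cluster has diameter at most $2R^*\le 2L$, lets us charge each chosen center to a distinct optimal cluster and conclude $|S|\le k$; and (c) a \emph{bounded-gap} property guaranteeing that any center we might later be forced to ``evacuate'' has another center within $3L$. I expect the final constant $5$ to be exactly $2+3$: a point sits within $2L$ of a center, which in turn is within $3L$ of the center it is ultimately routed to.

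For the second phase --- the heart of the argument --- the guiding lemma is: if $j'\in\mathcal{S}_j$ and both $j,j'$ are assigned to a common center $c$ with $d(j',c)\ge R_m$, then $d(j,c)\le d(j,j')+d(j',c)\le R_m+d(j',c)$, and since $d(j',c)\ge R_m$ and $\alpha\ge 2$ this is at most $2d(j',c)\le\alpha d(j',c)$, so constraint (\ref{cons:PP}) holds for the pair. The assignment is therefore designed so that every similarity pair is co-served by a center at distance at least $R_m$ from the ``witness'' point $j'$ --- except in the unavoidable case where the witness is a center assigned to itself (distance $0$). That case is handled separately: such a center is chosen \emph{private}, i.e. lying in no other point's similarity set, and then the separation property prevents any similar point from being a different center, so the constraint degenerates. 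Routing non-center points to nearby centers, evacuating the problematic centers to the partners from (c), and propagating these moves consistently should yield both (\ref{cons:PP}) and the bound $d(j,\phi(j))\le 5L$; since a \prob{-PP}-feasible solution is automatically \prob{-AG}-feasible (the averaged bound is weaker), the same conclusion transfers to \prob{-AG}, up to the minor adjustments in routing noted in Section~\ref{sec:alg}.

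The step I expect to be the real obstacle is making fairness and the distance bound hold \emph{simultaneously} in the assignment. Fairness forces similar points to receive \emph{comparable} service, a subtle global constraint: being too generous to a single point --- assigning it much closer than $R_m$ to its center --- over-constrains \emph{every} point that deems it similar, and a naive ``nearest center'' rule routinely does exactly this; being too stingy violates the $5L$ bound. Threading between these demands within the rigid geometry of a $2L$-separated center set is where the adaptation of \cite{Khuller2000} must do careful work, and verifying that its three guarantees are precisely what the assignment consumes --- in particular that every center that could ever need evacuating has a partner within $3L$ while still keeping $|S|\le k$ --- is the delicate part of the proof.
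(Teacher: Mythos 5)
Your high-level template matches the paper's: guess a threshold $R\ge R_m$ among the $O(|\mathcal{C}|^2)$ candidate values, greedily select well-separated centers, and then build an assignment that keeps every point's service distance in a controlled band so that ratios between similar points stay at most $2$. Your ``guiding lemma'' ($d(j,c)\le d(j,j')+d(j',c)\le 2d(j',c)$ whenever $d(j',c)\ge R_m$) is exactly the engine the paper runs on, and your prediction that $5=2+3$ is how the constant arises is correct. However, two of the load-bearing pieces of your plan are not sound as stated.

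First, your structural guarantee (c) --- that every center that might need ``evacuating'' has a partner center within $3L$ --- is false for precisely the centers that cause trouble. The paper's selection procedure (Algorithm \ref{alg-1}) explicitly distinguishes \emph{isolated} centers, whose entire $3R$-neighborhood contains no other center, from non-isolated ones; Observation \ref{obs-2} (the $3R$-partner property) holds only for the latter. For an isolated center $c$ there is nowhere within $3L$ to evacuate to, so your routing scheme has no move available. The paper's fix is structural: Lemma \ref{choose-cntr} shows the group $G_c$ of an isolated center is separated by more than $R\ge R_m$ from everything else, hence is similarity-closed (Corollary \ref{balls-cont}), and the subproblem on $G_c$ is solved \emph{internally} --- either by brute-forcing a single feasible center in $G_c$, or by opening the two farthest points of $G_c$ and assigning every point to its \emph{farther} of the two (Lemma \ref{thm:a=2-aux}), which pins all distances into $[D/2,D]$ and makes every ratio at most $2$. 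Your ``private center'' idea (a center lying in no other point's similarity set) is not available in general --- there may be no such point --- and self-assigned centers at distance $0$ are fatal to any multiplicative constraint, which is why the paper avoids self-assignment entirely outside the similarity-closed groups.

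Second, once two centers may be opened inside an isolated group, your cardinality argument breaks: the standard ``centers are $>2L$ apart, optimal clusters have diameter $\le 2R^*$'' charging bounds only the number of \emph{initially selected} centers by $k$, not the augmented set. The paper needs the separate accounting of Lemma \ref{iso-k}: whenever the algorithm opens two centers in $G_c$, the single-center feasibility check has already certified that the optimal solution must also place at least two of its centers inside $G_c$ (using again that $G_c$ is similarity-closed and $R$-separated), so $|S'_I|+|S_N|\le|S^*|\le k$; and the contrapositive of this count is what produces the certified infeasibility message when the guess $R$ is too small. Without both of these ingredients --- the isolated/non-isolated dichotomy with its internal two-center subroutine, and the per-group charging against $S^*$ --- the plan does not close.
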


Due to Assumption \ref{icml-asm}, we immediately have $R_m \leq \psi R^{*}_{unf}$ with $\psi = O(1)$. Moreover, because $R^{*}_{unf}$ is an obvious lower bound for $R^*$, our algorithm produces constant-factor approximate solutions. For example, in the canonical case of $\psi = 2$ it gives a $10$-approximate solution.

Even though $R_m = O(R^{*}_{unf})$, notice that because we might have $R^* \geq R_m$, the algorithm of Theorem \ref{thm:alg-res} does not provide bounded PoF guarantees. Nonetheless, in Section \ref{sec:alg} we also study the PoF behavior of our algorithms, and specifically we prove the following. 
\begin{theorem}\label{thm:pof-1}
A small modification to our main algorithm yields $(S,\phi)$, with: \textbf{(i)} $|S| \leq 2k$, \textbf{(ii)} both constraints (\ref{cons:PP}) and (\ref{cons:AG}) satisfied by $\phi$, and \textbf{(iii)} $\max_{j \in \mathcal{C}}d(j,\phi(j)) \leq 5 \max\{\psi R^{*}_{unf}, R^{*}_{unf}\} $.
\end{theorem}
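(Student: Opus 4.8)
The plan is to reuse the two-phase structure of the algorithm behind Theorem~\ref{thm:alg-res} --- a center-selection phase extending \cite{Khuller2000} that outputs a well-spread set $S$, followed by an assignment phase that turns the spacing guarantees into a fair $\phi$ --- but to run it at the scale of $R^{*}_{unf}$ rather than at the scale of the (possibly huge) fair optimum $R^*$, compensating for the lost slack by doubling the center budget. Concretely, observe that in the bound $5\max\{R^*,R_m\}$ of Theorem~\ref{thm:alg-res} the term $R_m$ is always harmless, since by Assumption~\ref{icml-asm} it is at most $\psi R^{*}_{unf}$; the only obstruction to a bounded-PoF guarantee is the appearance of $R^*$, and the goal is to replace it by $R^{*}_{unf}$ at the cost of $|S|\le 2k$.

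First I would guess $R^{*}_{unf}$, either by enumerating the $O(|\mathcal{C}|^2)$ distinct pairwise distances and keeping the best feasible run, or by using a $2$-approximation and folding the extra constant into the final bound. I then run the center-selection subroutine with working radius $\Theta(\max\{R^{*}_{unf},R_m\})$ but with the cap on the number of opened centers raised from $k$ to $2k$; equivalently, one may take the $\le k$ well-spread centers the subroutine produces at this scale and union them with the $\le k$ centers of an (approximately) optimal unfair $k$-center solution on $(\mathcal{C},k)$. Either way we obtain $S$ with $|S|\le 2k$, which is item~(i), such that (a) every point of $\mathcal{C}$ has a center within $O(1)\cdot\max\{R^{*}_{unf},R_m\}$, and (b) the opened centers satisfy the pairwise-distance invariants (now at the smaller scale) that the assignment phase relies on. Property~(a) is exactly what the larger budget buys: an unfair $k$-center solution already covers $\mathcal{C}$ at radius $R^{*}_{unf}$, and the extra $k$ centers supply the ``room'' that, with a budget of only $k$, the assignment phase could obtain only by working at the larger scale $R^*$.

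Next I would run the assignment phase verbatim. Its correctness never used optimality of $R^*$ as such, only properties~(a) and~(b) together with $\alpha\ge 2$; since both now hold at scale $\max\{R^{*}_{unf},R_m\}$, the same construction produces a $\phi$ satisfying constraint~(\ref{cons:PP}), and hence also~(\ref{cons:AG}) (recall from Section~\ref{sec:def} that a solution to~(\ref{cons:PP}) is also a solution to~(\ref{cons:AG})), which is item~(ii). Repeating the distance bookkeeping of Theorem~\ref{thm:alg-res} with $R^{*}_{unf}$ in place of $R^*$ yields $\max_{j\in\mathcal{C}}d(j,\phi(j))\le 5\max\{R^{*}_{unf},R_m\}$, and invoking $R_m\le\psi R^{*}_{unf}$ gives $\max_{j\in\mathcal{C}}d(j,\phi(j))\le 5\max\{\psi R^{*}_{unf},R^{*}_{unf}\}$, which is item~(iii).

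The \textbf{main obstacle} is establishing property~(b) and, more importantly, checking that the assignment phase still goes through after the scale is shrunk: one must revisit the proof of Theorem~\ref{thm:alg-res} and verify that every step that previously charged a distance to $R^*$ can now be charged either to a center of the unfair solution at distance $\le R^{*}_{unf}$ or to a well-spread center at distance $O(\max\{R^{*}_{unf},R_m\})$, and that adding the second batch of centers creates no new fairness violations --- intuitively it cannot, since extra centers only ever give a point a closer option and thus only help the upper-bound side of~(\ref{cons:PP}), while the ``leveling'' side still rests on the well-spread centers. A secondary, routine point is bookkeeping the $O(|\mathcal{C}|^2)$ guesses and the $2$-approximation so that the whole procedure stays polynomial-time with the stated constant.
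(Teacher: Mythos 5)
Your high-level strategy --- run the machinery at the unfair scale $\max\{R^{*}_{unf},R_m\}$ and pay for the lost slack with a doubled center budget --- matches the paper's, but the proposal leaves the two load-bearing steps unproven, and one of its supporting claims is wrong. First, item (i) is asserted rather than derived. The paper's actual modification is confined to Algorithm~\ref{alg-2}: for each isolated center $c\in S_I$ it opens at most \emph{two} centers inside $G_c$ (via Lemma~\ref{thm:a=2-aux}), and the count $|S'_I|+|S_N|\le 2|S_I|+|S_N|\le 2k$ then follows from Observation~\ref{obs-1}: at a guess $R\ge R^{*}_{unf}$ the centers produced by Algorithm~\ref{alg-1} are pairwise more than $2R^{*}_{unf}$ apart, so each must lie in a distinct cluster of the optimal \emph{unfair} solution, giving $|S_I|+|S_N|\le k$. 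This substitutes for Lemma~\ref{iso-k}, whose proof genuinely needs $R\ge R^*$ (it charges $|S'_I\cap G_c|$ against optimal \emph{fair} centers inside $G_c$, which may not exist at the smaller scale). Your write-up never identifies where the original center-count argument breaks, and speaks of ``raising a cap'' in the center-selection subroutine, which has no such cap; the doubling happens in the isolated-group assignment, not in center selection.

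Second, your fallback of unioning the well-spread centers with an unfair $k$-center solution, justified by ``extra centers only ever give a point a closer option and thus only help,'' fails. Constraint (\ref{cons:PP}) \emph{lower-bounds} the service of $j$'s neighbors: if an added center lets some $j'\in\mathcal{S}_j$ be assigned much closer, the right-hand side $\alpha\, d(j',\phi(j'))$ shrinks and $j$'s constraint becomes harder, not easier --- this is exactly why Algorithm~\ref{alg-3} deliberately assigns type-1 points \emph{away} from their nearest center. Moreover, the added unfair centers would destroy Observations~\ref{obs-1} and~\ref{obs-2} (the $>2R$ separation and the $\le 3R$ neighbor guarantee), on which the whole case analysis of Lemma~\ref{alg-feas-aux} rests, so ``running the assignment phase verbatim'' on the enlarged $S$ is not licensed. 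The parts you correctly reuse are Lemmas~\ref{iso-feas} and~\ref{non-iso-feas}, which indeed only need $R\ge R_m$ and hence survive at the guess $\psi R^{*}_{unf}$; what is missing is the counting argument for (i) and a valid reason why feasibility is unaffected by the modification.
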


\begin{theorem}\label{thm:pof-2}
When for all $j \in \mathcal{C}$ we have $R_j = R_d$ for some $R_d$, our algorithm for \prob{-AG} provides a feasible solution with cost at most $5 \max\{\psi R^{*}_{unf}, R^{*}_{unf}\} $.
\end{theorem}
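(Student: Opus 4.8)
The plan is to derive the statement from Theorem~\ref{thm:alg-res}. That theorem already guarantees that the algorithm for \prob{-AG} outputs a feasible $(S,\phi)$ with $\max_{j\in\mathcal C}d(j,\phi(j))\le 5\max\{R^*,R_m\}$. Under the hypothesis $R_j=R_d$ for every $j$ we have $R_m=\max_j R_j=R_d$, and Assumption~\ref{icml-asm} gives $R_d=R_j\le\psi R^*_{unf}$, hence $R_m\le\psi R^*_{unf}$. So the entire theorem reduces to the single inequality $R^*\le\widehat R$, where $\widehat R:=\max\{R_d,R^*_{unf}\}$: indeed, $R^*\le\widehat R$ would give
\[
\max\{R^*,R_m\}\le\max\{\widehat R,R_d\}=\widehat R=\max\{R_d,R^*_{unf}\}\le\max\{\psi R^*_{unf},R^*_{unf}\},
\]
the last step because $R_d\le\psi R^*_{unf}$, and plugging into Theorem~\ref{thm:alg-res} yields exactly $\max_j d(j,\phi(j))\le 5\max\{\psi R^*_{unf},R^*_{unf}\}$. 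Since $R^*$ is the optimum, it suffices to \emph{exhibit} a feasible \prob{-AG} solution of cost at most $\widehat R$.

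To build such a solution I would work at the single scale $\widehat R$, which is legitimate because $\widehat R\ge R^*_{unf}$: reuse the center-selection phase of the algorithm of Theorem~\ref{thm:alg-res} run at scale $\widehat R$ — this produces at most $k$ centers that cover $\mathcal C$ within $O(\widehat R)$ and, crucially, come with the pairwise-distance guarantees the assignment phase relies on — and then redo only the assignment $\phi$, again at scale $\widehat R$. (The cases $R_d=0$, and points with $\mathcal S_j=\emptyset$ which carry no constraint, are handled directly.) The subtlety is that the naive "nearest center" assignment can violate (\ref{cons:AG}): a point $j$ may be served at distance close to $R^*_{unf}$ while every $j'\in\mathcal S_j$ sits next to a center, so the averaging term on the right of (\ref{cons:AG}) is tiny. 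The fix is to assign each point, among the centers within distance $\widehat R$ of it, so as to push the service distances inside every similarity set up toward the common scale; because \emph{every} similarity radius equals the single value $R_d\le\widehat R$, one global threshold $\widehat R$ suffices to coordinate this choice across all points. This is precisely what fails for general instances and forces the extra factor of two in the center budget of Theorem~\ref{thm:pof-1}; here uniformity of the $R_j$'s buys us a $k$-center solution. The verification of (\ref{cons:AG}) would then compare $d(j,\phi(j))$ against the $d(j',\phi(j'))$ for $j'\in\mathcal S_j$ using the triangle inequality together with $d(j,j')\le R_j=R_d$ and the center-distance guarantees.

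The main obstacle is exactly this last step: proving that a single uniform scale $\widehat R$ simultaneously lets every point attain a service distance within a factor $\alpha\ge2$ of the \emph{average} service distance over its similarity set, while keeping all service distances at $O(\widehat R)$ and using only $k$ centers. The delicate configuration is a point $j$ several of whose similar points are "cheap to serve" — they are themselves chosen centers (service $0$) or they lie in a cluster isolated from all other centers — so they drag the averaging term in (\ref{cons:AG}) down; one must argue that in that situation the geometry (again the triangle inequality with the uniform radius $R_d$, plus the inter-center distances) forces $d(j,\phi(j))$ itself to be correspondingly small, so that the averaged constraint still holds. Once a feasible solution of cost at most $\widehat R$ is in hand, the chain $\max\{R^*,R_m\}\le\widehat R\le\max\{\psi R^*_{unf},R^*_{unf}\}$ combined with Theorem~\ref{thm:alg-res} closes the argument.
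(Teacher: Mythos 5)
Your plan has a genuine gap, and it sits exactly where you flag it. You reduce the theorem to exhibiting a feasible \prob{-AG} solution of cost at most $\widehat R=\max\{R_d,R^*_{unf}\}$ using at most $k$ centers, and then concede that you cannot prove the key step. But that step is not a technicality to be filled in later --- it is essentially the entire content of the theorem, and the paper's proof supplies a specific combinatorial lemma for it that your sketch never touches: at the guess $R=\psi R^*_{unf}$, in every isolated group $G_c$ there exists a \emph{single} point $x$ such that $\sum_{j'\in\mathcal S_j}d(j,j')\le\sum_{j'\in\mathcal S_j}d(j',x)$ for all $j\in G_c$; by the triangle inequality this $x$ alone satisfies (\ref{cons:AG}) for all of $G_c$ with $\alpha\ge 2$, so Algorithm~\ref{alg-2} never opens a second center in an isolated group and the total center count stays at $|S_I|+|S_N|\le k$. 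The existence of $x$ is proved by building a dependency digraph on $G_c$, summing the defining inequalities around a directed cycle, and deriving a contradiction from the uniformity of the radii in the precise form ``$j'\in\mathcal S_z$ and $j'\notin\mathcal S_y$ imply $d(j',z)<d(j',y)$.'' This is the only place the hypothesis $R_j=R_d$ is used, and your proposal never identifies how uniformity would actually be exploited.

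Two further problems with the reduction itself. First, it needs a feasible solution of cost \emph{at most} $\widehat R$ --- not $O(\widehat R)$ --- because you are routing the conclusion through the factor-$5$ bound of Theorem~\ref{thm:alg-res}; your construction only promises coverage within $O(\widehat R)$, which would inflate the constant beyond the claimed $5$. Second, and more seriously, the target inequality $R^*\le\widehat R$ is false under the literal hypothesis ``$R_j=R_d$ for all $j$'': the instance of Theorem~\ref{thm:unbound-PoF} has $R_j=R$ for every point yet every feasible solution costs $D\gg\max\{R_d,R^*_{unf}\}$. What rescues the argument in the paper is the implicit use of the $\mathcal S_j$ being full metric balls of the common radius $R_d$ (so that $j'\notin\mathcal S_y$ forces $d(j',y)>R_d$), which is exactly the property the cycle argument leans on. The paper therefore never bounds $R^*$ at all; it analyzes the single iteration with guess $\psi R^*_{unf}$ directly, shows Lemmas~\ref{iso-feas} and~\ref{non-iso-feas} apply, and bounds the number of centers via Observation~\ref{obs-1} plus the one-center-per-isolated-group claim above. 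To repair your proof you would need to abandon the detour through $R^*$ and Theorem~\ref{thm:alg-res} and instead prove the single-center existence lemma for isolated groups.
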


The result of Theorem \ref{thm:pof-1} says that there is an easy way to get an algorithm with bounded PoF guarantees, if we are willing to sacrifice the cardinality constraint on the set of chosen centers. On the other hand, Theorem \ref{thm:pof-2} says that when the value $R_j$ is the same for all points, then our main result yields a true approximation with bounded PoF for \prob{-AG}.

Furthermore, we mention that \emph{all algorithms of Section \ref{sec:alg} are purely combinatorial (e.g., do not require convex programming), and hence very efficient and easily implementable}. 

In Section \ref{sec:assgn} we study the assignment problem for \prob{-PP} and \prob{-AG}. To be more precise, if we are given the optimal set of centers $S^*$, can we find the corresponding optimal assignment $\phi^*$? In a vanilla clustering setting this is trivial, since assigning points to their closest center is easily seen to yield the necessary results. \emph{However, as is the case in almost all literature on fair clustering, in the presence of fairness constraints like (\ref{cons:PP}) or (\ref{cons:AG}), such an assignment is not necessarily correct}. This was actually among the first observations made in the seminal work of \cite{Chierichetti2017}, which initiated the research area of fair clustering. As a side note, the aforementioned observation implies that for a $c \in S^*$, we might end up having $\phi^*(c) \neq c$. Nonetheless, for our problems this does not constitute a modeling issue. Recalling the motivational example of a recommendation system for a website, we see that for a client $j$ chosen as a representative, assigning $j$ to a different representative $j'$ is an acceptable outcome, as long as all individuals feel fairly treated. 

Therefore, since from a theoretical perspective the assignment problem is fundamental in a clustering setting and because in our case it appears highly non-trivial, we choose to address it explicitly. In the end, we manage to show that with a slightly intricate iterative algorithm, we can indeed compute the optimal assignment $\phi^*$ in polynomial time.

Finally, Section \ref{sec:exps} contains an extensive experimental evaluation, that validates the effectiveness and the efficiency of our proposed algorithms.

\subsection{Related work}\label{sec:rel-wrk}

The most well-studied notion of fairness in clustering is the demographic one. Herein, the points are partitioned into demographic groups, and what is required is a fair treatment or a proportional representation of these groups in the solution. This area was initiated by the groundbreaking work of \cite{Chierichetti2017}. Further work on demographic fairness includes \cite{bercea2019,bera2020,esmaeili2020,huang2019,backurs2019,ahmadian2019,kleindessner19a, chen19, abbasi2021}.

The concept of fairness we consider here falls under the broader umbrella of individual fairness. The fundamentals of individual fairness were introduced in the seminal work of \cite{Dwork2012} in the context of classification. In addition, \cite{Dwork2012} demonstrated a series of shortcomings for demographic fairness, making the case for individual fairness stronger. The high-level idea proposed in that work was that \emph{similar individuals should be treated similarly}. Our model follows this paradigm by modeling similarity through the sets $\mathcal{S}_j$, and requiring similar treatment through constraints (\ref{cons:AG}) and (\ref{cons:PP}).

Previous work on individually-fair clustering that adheres to the notion of \cite{Dwork2012} includes \cite{anderson2020, brubach2020, brubach2021, kar2021}. However, these papers interpret similar treatment in a different way. Specifically, two points $j,j'$ that are similar should be placed in the same cluster (under some stochastic or lower-bounding sense). Hence, similar treatment is defined as guaranteeing $\phi(j) = \phi(j')$. Unlike our model, these papers provide no guarantee on the gap between $d(j,\phi(j))$ and $d(j',\phi(j'))$. 

There are also individually-fair clustering problems that do not follow the concept of ``similar points should be treated similarly''. \cite{mahabadi2020, jung2019} define individual fairness as ensuring that for each $j$ there will be a chosen center within distance $r_j$ from it, where $r_j$ is the minimum radius such that $|\{j' \in \mathcal{C}~|~d(j,j') \leq r_j\}| \geq |\mathcal{C}|/k$. Finally, \cite{kleindessner2020} views individual fairness as ensuring that each point is on average closer to the points in its own cluster than to the points in any other cluster.

Another work that is closely related to our model is that of \cite{Balcan2019}. In that paper the authors study a classification problem where there is a set of already-known labels, and the points need to be assigned to those via some stochastic classifier. The points have preferences over the labels, given by some utility function, and the final classification should be envy-free in the standard sense. Our model differs from that of \cite{Balcan2019} for two reasons. First, our focus is on a clustering problem, where the labels are not known, a metric related objective needs to be minimized, and also the assignment has to be deterministic. Secondly, although the concept of envy-freeness is related to constraint (\ref{cons:PP}), there is the crucial difference of points in our case not envying the resources allocated to other individuals, but rather their final utility. In other words, in the language of \emph{Fair Division of Goods}, our model is closer to the notion of an \emph{equitable allocation} \cite{Varian1974} rather than an \emph{envy-free} one.
 
Regarding the ``vanila''/``unfair'' (fairness-constraints-free) $k$-center, the best known approximation ratio for it is $2$ \cite{Hochbaum1985, Gonzalez1985}; this is  best-possible unless P=NP \cite{Hochbaum1986}. This hardness result also trivially extends to both variants of \prob{} as well. 
\section{Structural properties of the problem}\label{sec:str}

As mentioned in the introduction, all our results here are for $k \geq 2$, since $k=1$ is a trivial case. At first, we want to investigate the range of $\alpha$ for which our problems always admit a feasible solution. Ideally, an $\alpha$ value close to $1$ would be the most fair, but as the following theorem suggests, such a guarantee is impossible.

\begin{theorem}\label{thm:lb}
For both \prob{-PP} and \prob{-AG}, there exist instances with $\alpha < 2$ that do not admit any feasible solution.
\end{theorem}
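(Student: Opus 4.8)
The plan is to prove the statement by exhibiting, for each fixed $\alpha<2$, one explicit finite instance and then verifying infeasibility by a case analysis over all choices of the center set. Since any solution feasible for \prob{-PP} is also feasible for \prob{-AG}, it is enough to build an instance with \emph{no} feasible \prob{-AG} solution; that same instance then witnesses the claim for both problems. I would take $k=2$ (all structural claims in the paper are for $k\ge 2$) and use a metric with two well‑separated distance scales, together with similarity sets that couple points living at the two scales. The design has three aims: (i) keep $R^{*}_{unf}$ small enough that every pair appearing inside some $\mathcal{S}_j$ is at distance at most $2R^{*}_{unf}$, so that Assumption~\ref{icml-asm} holds with the canonical $\psi=2$ and the hard instance is not an artifact of pathological similarity; (ii) place a ``pivot'' vertex $x$ between two tight clusters located at \emph{different} distances from it, so that the triangle inequality pins $x$'s attainable service cost against that of a point declared similar to it; (iii) choose the similarity sets so that the remaining freedom in the assignment cannot be exploited to escape.

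The conceptual heart, which I would isolate first, is the elementary remark that explains why a \emph{two}‑scale metric is unavoidable: if some solution $(S,\phi)$ gives every point a service cost lying in a single multiplicative window $[\ell,\alpha\ell]$, then every instance of~(\ref{cons:AG}) (indeed of~(\ref{cons:PP})) is satisfied, because $d(j,\phi(j))\le\alpha\ell$ while the average of $d(j',\phi(j'))$ over $j'\in\mathcal{S}_j$ is at least $\ell$. Hence, for infeasibility it is \emph{necessary} that for every $S$ with $|S|\le 2$ some point $j$ has candidate costs $\{d(j,s):s\in S\}$ missing every factor‑$\alpha$ window; this immediately rules out uniform‑distance constructions. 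A pivot $x$ at distance roughly $1$ from one tight cluster $A$ and roughly $2$ from another tight cluster $B$ (with $d(A,B)=3$, the maximum the triangle inequality allows) realizes this: wherever the two centers sit, either $x$ is forced to roughly unit cost while a point of $B$ that is declared similar to $x$ is forced to cost at least $2>\alpha\cdot 1$, or the placement is ``balanced'' and a small auxiliary similarity constraint attached to a cluster point is violated.

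Carrying this out, I would enumerate all choices of $S$ with $1\le|S|\le 2$ (finitely many, as the point set has constant size) and in each case use the constraints to pin the cost vector down to the tiny intra‑cluster slack $\delta$ (which I would take minuscule relative to $2-\alpha$), exhibiting a violated constraint. The routine cases are the ones in which a whole cluster is left uncovered: there the triangle inequality directly forces a coupled pair of similar points to have costs differing by a factor at least $2$. The delicate cases---where I expect the real work to be---are the ``balancing'' center placements, e.g.\ putting both centers inside the far cluster, or putting a center at the pivot; these are exactly the configurations in which the adversary tries to equalize all service costs, and defeating them is the purpose of the auxiliary similarity sets on $A$ and $B$ and of using the \emph{averaging} form of the constraint at $x$ (so that ``parking'' a point similar to $x$ at a cost‑$0$ center drags the average below $x$'s forced cost). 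Making the chosen distances and these auxiliary similarity sets mutually consistent while still respecting Assumption~\ref{icml-asm} is the fiddly part; once it is in place the theorem follows, the $k=1$ case being excluded for the trivial reason recorded in the introduction.
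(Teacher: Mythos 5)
There is a genuine gap here, and it sits exactly where you defer to ``the fiddly part.'' Your window observation (if every service cost lies in a single interval $[\ell,\alpha\ell]$ then all constraints hold) correctly identifies what a construction must defeat, but the two-cluster-plus-pivot geometry you propose does not defeat it for $\alpha$ close to $2$. Concretely, with clusters $A,B$ at mutual distance $3$ and the pivot $x$ at distances $1$ and $2$ from them, the adversary chooses centers $\{x,b\}$ for some $b\in B$ and assigns $x$ and the points of $B$ across to one another (costs $\approx 2$) and the points of $A$ to $b$ (costs $\approx 3$): every service cost then lies in $[2,3+\delta]$, a multiplicative window of ratio about $3/2$, so by your own observation \emph{every} constraint is satisfied whenever $\alpha>3/2$, regardless of how the similarity sets are chosen. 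Rescaling the pivot's position does not help: if you push $x$ toward the midpoint to make the $\{x,b\}$ placement costly, then the placement $\{a,x\}$ (or $\{a,b\}$ with everyone sent to the far center) equalizes all costs to $\approx D/2$ instead. Small line-like configurations always seem to admit some cost-equalizing placement of two centers, which is precisely the obstruction your sketch acknowledges but does not overcome.

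The paper's proof shows what appears to be actually required: a highly symmetric instance --- a cycle on $2m$ points with two interleaved involutive pairings, one matching points at distance $m$ (half the circumference) and one at distance $m/2$ --- so that for \emph{every} placement of two centers there is a point near the middle of the arc between them whose partner sits essentially on top of a center; that partner must then take cost near $0$ (killing the midpoint's constraint) or near the diameter (killing its own), and either way some similar pair is forced into a cost ratio tending to $2$ as $m\to\infty$. The number of points grows as $\alpha\to 2$, and the cycle's homogeneity is what eliminates the balanced placements that defeat your constant-size gadget. So while your reduction to the $AG$ case and your necessary condition are both correct, the construction itself --- the entire content of the theorem --- is missing, and the candidate you describe provably fails on the regime $\alpha\in(3/2,2)$ that the theorem must cover.
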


\begin{proof}
\begin{figure}[t]
\centering
\includegraphics[scale=0.25]{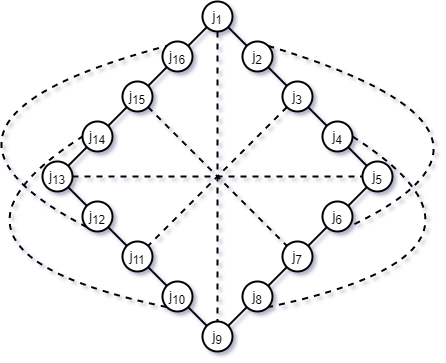}
\caption{Here $m=8$. Solid lines represent a distance of $1$ between points. Dashed lines correspond to similarity sets, eg., the dashed line between $j_1$ and $j_9$ shows that $\pi(j_1) = j_9$ and $\pi(j_9) = j_1$.}
\label{fig-1}
\end{figure}
Let $m$ be a very large even integer, with $\frac{m}{2}$ also being an even integer. We consider $2m$ points $\mathcal{C} = \{j_1, j_2, \hdots, j_{2m-1}, j_{2m}\}$ in a cycle, where $d(j_i, j_{i+1}) = 1$ for all $i \in [2m-1]$, and also $d(j_{2m}, j_1) = 1$. The rest of the distances are set to be the shortest path ones, based on those already defined. This is a valid metric space, since it constitutes the shortest path metric resulting from a simple cycle graph of $2m$ vertices.

To construct the similarity sets, we map each point $j$ to another point $\pi(j) \neq j$, such that the function $\pi: \mathcal{C} \mapsto \mathcal{C}$ is one-to-one and $\pi(\pi(j)) = j$. Given that, the similarity set of point $j$ will be set to be $\mathcal{S}_j = \{\pi(j)\}$. Now let $\mathcal{C}_1 = \{j_i ~|~ i \text{ is odd}\}$ and $\mathcal{C}_2 = \{j_i ~|~ i \text{ is even}\}$. For every odd $i \in [m]$, set $\pi(j_i) = j_{i+m}$ and $\pi(j_{i+m}) = j_i$. In this way, because $m$ is even, we map every point of $\mathcal{C}_1$ to some other point of $\mathcal{C}_1$. Also, note that for every $j \in \mathcal{C}_1$ we will have $d(j,\pi(j)) = m$. For the points $j_i \in \mathcal{C}_2$, consider them in increasing order of $i$. If $j_i$ is not already mapped to some other point, set $\pi(j_i) = j_{i + \frac{m}{2}}$ and $\pi(j_{i + \frac{m}{2}}) = j_i$. This is a valid assignment because $\frac{m}{2}$ is assumed to be an even integer. At the end of the above process, we have created a one-to-one mapping between the points of $\mathcal{C}_2$, such that for every $j \in \mathcal{C}_2$ we have $d(j,\pi(j)) = \frac{m}{2}$. This concludes the description of the similarity sets. Finally, this pairing process for $\mathcal{C}_1$ and $\mathcal{C}_2$ is possible, because both sets include an even number of points. See Figure \ref{fig-1} for an example.

To conclude the description of the input we also assume that $k=2$. At this point observe that the constructed instance also satisfies Assumption \ref{icml-asm} for $\psi \geq 2$, therefore covering the canonical case for $\psi$. This is because the optimal unfair value for the instance is easily seen to be $\frac{m}{2}$, while the maximum distance between similar points is $m$.

In addition, note that because for all $j$ we have $|\mathcal{S}_j| = 1$, constraints (\ref{cons:PP}) and (\ref{cons:AG}) are equivalent and hence showing infeasibility for this instance covers both \prob{-PP} and \prob{-AG}. Finally, to prove the statement of the theorem, it suffices to show that for all possible choices of centers and all possible corresponding assignments $\phi$, there will always be a point $j_p$ for which $d(j_p, \phi(j_p)) \geq 2 d(\pi(j_p), \phi(\pi(j_p)))$.

At first, notice that there exists no feasible solution that uses just one center. Supposing otherwise, let $c$ be the only chosen center. Then there exists only one possible assignment for $c$, and that is $\phi(c) = c$. Hence $d(c,\phi(c)) = 0$, and the fairness constraint for $\pi(c)$ will never be satisfied.

Now we will show that even solutions that pick two centers $c_1, c_2$ cannot admit any feasible assignment. We proceed via a case analysis on $d(c_1, c_2)$.
\begin{itemize}
    \item $d(c_1, c_2) \leq \frac{m}{3}$: Because the points of $\mathcal{C}_1$ and $\mathcal{C}_2$ alternate in the metric cycle, we know that there exists a $j \in \mathcal{C}_1$ such that $d(j,c_1) \leq 1$ (in the example of Figure~\ref{fig-1} we might have $c_1 = j_2, c_2=j_3$ and $j=j_1$, $\pi(j) = j_9$). By the triangle inequality we also get $d(j,c_2) \leq \frac{m}{3} + 1$. As for the point $\pi(j)$, we have:
    \begin{align}
        &d(\pi(j), c_1) \geq d(\pi(j), j) - d(j,c_1) \geq m - 1 \notag\\
        &d(\pi(j), c_2) \geq d(\pi(j), j) - d(j,c_2) \geq m - \frac{m}{3} - 1 = \frac{2m}{3} - 1\notag
    \end{align}
    From $\pi(j)$'s perspective, the best case situation regarding its fairness constraint is if $\pi(j)$ gets assigned to its closest center, and $j$ gets assigned to its farthest one. Given all the previous inequalities, we see that the best possible service for $\pi(j)$ is $\frac{2m}{3} -1$, and the worst possible service for $j$ is $\frac{m}{3} + 1$. We next show that even in this ideal situation for $\pi(j)$, its fairness constraint with $\alpha < 2$ will never be satisfied if $m$ is significantly large. To see this, note that $\frac{2m/3 -1}{m/3 + 1}$ is an increasing function of $m$ and also:
    \begin{align}
        \lim_{m \to \infty}\Big{(}\frac{\frac{2m}{3} -1}{\frac{m}{3} + 1}\Big{)} = \frac{2/3}{1/3} = 2 \notag
    \end{align}
    Therefore, for every given $\alpha < 2$, there exists an $m_a$ such that $\frac{2m_a/3 -1}{m_a/3 + 1} > \alpha$.
    \item $\frac{m}{3} < d(c_1, c_2) \leq \frac{2m}{3}$: In this case, because $m$ is assumed to be significantly large and because the points of $\mathcal{C}_1, \mathcal{C}_2$ alternate in the metric cycle, we can find a point $j \in \mathcal{C}_1$ in the shortest path between $c_1$ and $c_2$, which will be approximately in the middle of the path. Letting $\gamma \in (\frac{1}{3}, \frac{2}{3}]$ such that $d(c_1, c_2) = \gamma m$, we have $\frac{\gamma m}{2} - 1 \leq d(j,c_1), d(j,c_2) \leq \frac{\gamma m}{2} + 1$ (in the example of Figure~\ref{fig-1} we might have $c_1 = j_1, c_2 = j_5$ and $j=j_3$, $\pi(j)=j_{11}$). Regarding the possible assignments for $\pi(j)$ we have:
    \begin{align}
        &d(\pi(j),c_1) \geq d(j,\pi(j)) - d(j,c_1) \geq m - \frac{\gamma m}{2} - 1 = m\Big{(}\frac{2-\gamma}{2}\Big{)} - 1 \notag \\
        &d(\pi(j),c_2) \geq d(j,\pi(j)) - d(j,c_2) \geq m - \frac{\gamma m}{2} - 1 = m\Big{(}\frac{2-\gamma}{2}\Big{)} - 1 \notag
    \end{align}
    Again we will focus on the best case situation for $\pi(j)$, which according to the previous analysis is $\pi(j)$ getting assigned to a center at distance $\frac{m(2-\gamma)}{2} - 1$ from it, and $j$ getting assigned to a center at distance $\frac{\gamma m}{2} + 1$. Therefore, we consider the ratio $\frac{m(2-\gamma)/2 - 1}{\gamma m / 2 + 1}$, and we are going to prove that even in this ideal case for $\pi(j)$, its fairness constraint for $\alpha < 2$ will not be satisfiable if $m$ is suffieciently large. At first, because $\frac{2-\gamma}{2}, \frac{\gamma}{2} > 0$ the previous ratio will be an increasing function of $m$. In addition,
    \begin{align}
        \lim_{m \to \infty}\Big{(}\frac{m(2-\gamma)/2 - 1}{\gamma m / 2 + 1}\Big{)} = \frac{(2-\gamma)/2}{\gamma/2} = \frac{2-\gamma}{\gamma} \geq 2\notag
    \end{align}
    The last inequality follows since $\frac{2-\gamma}{\gamma}$ is a decreasing function, and for $\gamma \in (\frac{1}{3},\frac{2}{3}]$ we have $\frac{2-\gamma}{\gamma} \in [2, 5)$. Hence, for every $\alpha < 2$ there exists an $m_b$ such that $\frac{m_b(2-\gamma)/2 - 1}{\gamma m_b / 2 + 1} > \alpha$.
    \item $\frac{2m}{3} < d(c_1, c_2) \leq m$: Because $m$ is assumed to be significantly large and because the points of $\mathcal{C}_1, \mathcal{C}_2$ alternate in the metric cycle, we can find a point $j \in \mathcal{C}_2$ in the shortest path between $c_1$ and $c_2$, which will be approximately in the middle of the path. Letting $\gamma \in (\frac{2}{3}, 1]$ such that $d(c_1, c_2) = \gamma m$, we have $\frac{\gamma m}{2} - 1 \leq d(j,c_1), d(j,c_2) \leq \frac{\gamma m}{2} + 1$ (in Figure~\ref{fig-1} we might have $c_1=j_{2}, c_2=j_{10}$ and $j = j_{14}$, $\pi(j) = j_{10}$). Consider now $\pi(j)$, and without loss of generality assume that $d(\pi(j),c_1) \geq d(\pi(j),c_2)$ (when $d(\pi(j), c_1) \leq d(\pi(j),c_2)$ the situation is symmetric, with the roles of $c_1$, $c_2$ switched.). 
    
    At first, suppose that $\pi(j)$ is a point in the shortest path between $c_1$ and $c_2$ (in the example of Figure~\ref{fig-1} $c_1=j_{2}, c_2=j_{10}$ and $j = j_{14}$ would result in that). Thus, because $d(j, \pi(j)) = m/2, ~d(\pi(j),c_1) \geq d(\pi(j),c_2)$ and $d(c_1, c_2) \leq m$, we can focus on the line segment $c_1, j, \pi(j), c_2$, where the triangle inequality holds with equality. Here we get, $$d(\pi(j), c_2) = d(j,c_2) - d(j,\pi(j)) \leq \frac{\gamma m}{2} + 1 - \frac{m}{2} = \frac{(\gamma - 1)m}{2} + 1 \leq 1$$ In addition, $$d(\pi(j),c_1) = d(j,\pi(j)) + d(j,c_1) \geq \frac{m}{2} + \frac{\gamma m}{2} - 1 = \frac{(1+\gamma)m}{2} -1$$
    
    The second case we consider is when $\pi(j)$ is not on the shortest path between $c_1$ and $c_2$ (in Figure~\ref{fig-1} take for instance $c_1=j_1, c_2=j_{11}$ and hence $j=j_{14}$ and $\pi(j) = j_{10}$). In that scenario, because $d(\pi(j),c_1) \geq d(\pi(j),c_2)$, we turn our attention to the line segment $c_1, j, c_2, \pi(j)$, where the triangle inequality holds with equality. Here we have $$d(\pi(j), c_2) = d(j,\pi(j)) - d(j,c_2) \leq \frac{m}{2} - \frac{\gamma m}{2} + 1 = \frac{(1-\gamma)m}{2} + 1$$ In addition, $$d(\pi(j), c_1) = d(j,\pi(j)) + d(j,c_1) \geq \frac{m}{2} + \frac{\gamma m}{2} - 1 = \frac{(1+\gamma)m}{2} - 1$$ 
    
    Therefore, in every case we have the following:
    \begin{align}
        &d(\pi(j),c_1) \geq \frac{(1+\gamma)m}{2} - 1 \text{ and }
        d(\pi(j),c_2) \leq \frac{(1-\gamma)m}{2} + 1 \label{sec2:aux-1}
    \end{align}
    
    Now that we have the bounds (\ref{sec2:aux-1}) for the assignment distance of $\pi(j)$ to both centers, we proceed with the final case analysis.
    
    Suppose that $\pi(j)$ gets assigned to $c_1$. Then from $\pi(j)$'s perspective, the best possible situation is if its own assignment distance is exactly $\frac{(1+\gamma)m}{2} -1$, and $j$ gets an assignment distance of $\frac{\gamma m}{2} + 1$. In this case, the ratio $\frac{(1+\gamma)m/2 - 1}{\gamma m / 2 + 1}$ is an increasing function of $m$, because $(1+\gamma)/2, \gamma/2 > 0$. In addition we have:
    \begin{align}
        \lim_{m \to \infty}\frac{(1+\gamma)m/2 - 1}{\gamma m / 2 + 1} = \frac{1+\gamma}{\gamma} \geq 2 \notag
    \end{align}
    The last inequality is because $\frac{1+\gamma}{\gamma}$ is a decreasing function and $\gamma \leq 1$. Hence, for every $\alpha < 2$, there exists an $m_c$ such that $\frac{(1+\gamma)m_c/2 - 1}{\gamma m_c / 2 + 1} > \alpha$. Thus, even in the ideal situation for $\pi(j)$, if $m$ is larger than $m_c$ its fairness constraint for $\alpha < 2$ will be unsatisfiable.
    
    On the other hand, suppose that $\pi(j)$ gets assigned to $c_2$. Then from $j$'s perspective, the best possible situation is if it gets an assignment distance of $\frac{\gamma m}{2} - 1$, and $\pi(j)$ has assignment distance exactly $\frac{(1-\gamma)m}{2} + 1$. In this case, the ratio $\frac{\gamma m / 2 - 1}{(1-\gamma)m/2 + 1}$ is an increasing function of $m$, because $(1-\gamma)/2, \gamma/2 > 0$. Also:
    \begin{align}
        \lim_{m \to \infty}\frac{\gamma m / 2 - 1}{(1-\gamma)m/2 + 1} = \frac{\gamma}{1-\gamma} > 2 \notag
    \end{align}
    The last inequality is because $\frac{\gamma}{1-\gamma}$ is an increasing function and $\gamma >  2/3$. Hence, for every $\alpha < 2$, there exists an $m_d$ such that $\frac{\gamma m_d / 2 - 1}{(1-\gamma)m_d/2 + 1} > \alpha$. Thus, even in the ideal situation for $j$, if $m$ is larger than $m_d$, $j$'s fairness constraint for $\alpha < 2$ will be unsatisfiable.
\end{itemize}
The analysis is exhaustive, because the maximum distance between two points in the metric is $m$. Further, we see that if we set $m = 4\max\{m_a, m_b, m_c, m_d\}$, then in every possible scenario there will exist a point whose fairness constraint for $\alpha < 2$ will not be satisfiable.
\end{proof}

Moving on, we show that for $\alpha \geq 2$ there is always a feasible solution to both our problems, and hence we settle the important question of what is the smallest value of $\alpha$ for which \prob{-PP} and \prob{-AG} are well-defined.

\begin{lemma}\label{thm:a=2-aux}
Consider a set of points $\mathcal{C}$ in a metric space with distance function $d$, where $|\mathcal{C}| \geq 2$. Then there exists an efficient way of finding two distinct points $c_1, c_2 \in \mathcal{C}$ and an assignment $\phi: \mathcal{C} \mapsto \{c_1,c_2\}$, such that for every $j \in \mathcal{C}$ we have $\frac{d(c_1, c_2)}{2} \leq d(j,\phi(j)) \leq d(c_1, c_2)$.
\end{lemma}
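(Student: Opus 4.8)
The plan is to take $c_1$ and $c_2$ to be a \emph{diametral pair} of $\mathcal{C}$, i.e., two points achieving $D := \max_{j,j' \in \mathcal{C}} d(j,j')$. Since $|\mathcal{C}| \geq 2$ and distinct points have positive distance in a metric space, such a pair exists with $c_1 \neq c_2$ and $D > 0$; it is found in $O(|\mathcal{C}|^2)$ time by scanning all pairs. The assignment then routes each point to the center that is \emph{farther} from it: $\phi(j) := \argmax_{c \in \{c_1, c_2\}} d(j,c)$, with ties broken arbitrarily.

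For the upper bound, observe that by the choice of $c_1,c_2$ we have $d(j,c_1) \leq D$ and $d(j,c_2) \leq D$ for every $j \in \mathcal{C}$, since $D$ is the largest pairwise distance in the metric. Hence $d(j,\phi(j)) = \max\{d(j,c_1), d(j,c_2)\} \leq D = d(c_1,c_2)$, and this holds no matter how ties are resolved, so $\phi$ is a legitimate function.

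For the lower bound, the triangle inequality gives $d(j,c_1) + d(j,c_2) \geq d(c_1,c_2) = D$ for every $j$, so the larger of the two summands is at least $D/2$. Because $\phi$ sends $j$ to the farther center, $d(j,\phi(j)) = \max\{d(j,c_1), d(j,c_2)\} \geq D/2 = d(c_1,c_2)/2$. This argument also covers the centers themselves: e.g. $\phi(c_1) = c_2$ and $d(c_1,\phi(c_1)) = D \geq D/2$, and symmetrically for $c_2$.

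There is no substantial obstacle; the one conceptual point is recognizing that the assignment must route each point to the \emph{farther} of the two extreme centers — routing to the nearer one would break the lower bound (consider a point coinciding with, or very close to, $c_1$). The only degenerate scenario, a "metric space" in which all points coincide, is ruled out by $|\mathcal{C}| \geq 2$ together with the defining property $d(x,y)=0 \iff x=y$ of a metric, which guarantees $D>0$ and $c_1 \neq c_2$.
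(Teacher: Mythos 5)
Your proposal is correct and follows essentially the same route as the paper: pick a diametral pair as the two centers, assign each point to the \emph{farther} of the two, and derive the lower bound from the triangle inequality $d(j,c_1)+d(j,c_2)\geq d(c_1,c_2)$ and the upper bound from the maximality of the pair. The additional remarks on tie-breaking, the centers' own assignments, and non-degeneracy are fine but not needed beyond what the paper already argues.
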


\begin{proof}
At first, choose $c_1, c_2$ to be the two points of $\mathcal{C}$ that are the furthest apart, i.e. $(c_1, c_2) = \argmax_{x,y \in \mathcal{C}}d(x,y)$. Then, for every $j \in \mathcal{C}$ set $\phi(j) = \argmax_{c \in \{c_1,c_2\}}d(j,c)$. In other words, given the chosen centers, each point is assigned to the center that is furthest from it in the metric. Let also $\bar{\phi}(j)$ be the center to which $j$ is not assigned to. For any $j \in \mathcal{C}$, combining the triangle inequality and the fact that $d(j,\bar{\phi}(j)) \leq d(j,\phi(j))$, will give us: 
\begin{align}
    d(c_1,c_2) \leq d(j,\phi(j)) + d(j, \bar{\phi}(j)) \leq 2d(j,\phi(j)) \implies d(c_1,c_2)/2 \leq d(j, \phi(j)) \notag
\end{align}
Finally, by the way we chose $c_1$ and $c_2$ we also get $d(j,\phi(j)) \leq d(c_1, c_2)$.
\end{proof}

\begin{theorem}\label{thm:a=2}
For both \prob{-PP} and \prob{-AG}, every instance with $\alpha \geq 2$ always admits a feasible solution.
\end{theorem}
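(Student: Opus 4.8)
The plan is to reduce everything to Lemma~\ref{thm:a=2-aux}. First I would dispose of the degenerate case $|\mathcal{C}| \le k$: here we may simply take $S = \mathcal{C}$ and $\phi(j) = j$ for all $j$, so every assignment distance is $0$ and both \eqref{cons:PP} and \eqref{cons:AG} hold trivially (the right-hand sides are also $0$, or the constraint is vacuous when $\mathcal{S}_j = \emptyset$). So from now on assume $|\mathcal{C}| > k \ge 2$, and in particular $|\mathcal{C}| \ge 2$, which is exactly the hypothesis needed to invoke Lemma~\ref{thm:a=2-aux}.

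Next I would apply Lemma~\ref{thm:a=2-aux} to $\mathcal{C}$, obtaining two distinct points $c_1, c_2$ and an assignment $\phi \colon \mathcal{C} \mapsto \{c_1,c_2\}$ with
\[
    \frac{d(c_1,c_2)}{2} \le d(j,\phi(j)) \le d(c_1,c_2) \quad \text{for every } j \in \mathcal{C}.
\]
Set $S = \{c_1,c_2\}$; since $k \ge 2$ we have $|S| = 2 \le k$, so $(S,\phi)$ respects the cardinality constraint. It remains to verify the fairness constraint, and since \eqref{cons:PP} implies \eqref{cons:AG} (as noted in Section~\ref{sec:def}), it suffices to check \eqref{cons:PP}. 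Take any $j \in \mathcal{C}$ with $\mathcal{S}_j \neq \emptyset$ and any $j' \in \mathcal{S}_j$. Using the upper bound for $j$ and the lower bound for $j'$ from the displayed inequality, together with $\alpha \ge 2$, we get
\[
    d(j,\phi(j)) \le d(c_1,c_2) = 2 \cdot \frac{d(c_1,c_2)}{2} \le 2\, d(j',\phi(j')) \le \alpha\, d(j',\phi(j')),
\]
which is precisely the equivalent form of \eqref{cons:PP} stated in Section~\ref{sec:def}. Hence $(S,\phi)$ is feasible for \prob{-PP}, and therefore also for \prob{-AG}.

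There is no real obstacle here: the entire content lives in Lemma~\ref{thm:a=2-aux}, whose ``assign each point to its \emph{farthest} of the two extreme centers'' trick is engineered to produce a two-sided bound in which the ratio of the largest to the smallest assignment distance is at most $2$. The only points worth stating carefully are (i) the reduction to $|\mathcal{C}| \ge 2$ so that the lemma applies, and (ii) that it is enough to establish \eqref{cons:PP}, which then gives \eqref{cons:AG} for free.
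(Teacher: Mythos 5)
Your proof is correct and follows essentially the same route as the paper's: invoke Lemma~\ref{thm:a=2-aux} to get two centers with all assignment distances in $[d(c_1,c_2)/2,\, d(c_1,c_2)]$, verify constraint~(\ref{cons:PP}) directly from the two-sided bound with $\alpha \ge 2$, and note that (\ref{cons:PP}) implies (\ref{cons:AG}). Your separate handling of the $|\mathcal{C}| \le k$ case is harmless but not needed, since the lemma applies whenever $|\mathcal{C}| \ge 2$; the paper only sets aside the trivial $|\mathcal{C}| < 2$ case.
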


\begin{proof}
Suppose that as an instance to either problem we are given a set of points $\mathcal{C}$ together with their associated similarity sets $\mathcal{S}_j$, $k\geq 2$ and $\alpha \geq 2$. W.l.o.g. we can assume that $|\mathcal{C}| \geq 2$, because otherwise the statement of the Lemma is trivially true. Since $k \geq 2$, we can use Lemma \ref{thm:a=2-aux} and get a set of two centers $\{c_1, c_2\}$ and an assignment function $\phi: \mathcal{C} \mapsto \{c_1,c_2\}$, such that for all $j \in \mathcal{C}$ we have $d(c_1, c_2)/2 \leq d(j,\phi(j)) \leq d(c_1, c_2)$. In the case of constraint (\ref{cons:PP}), for every $j \in \mathcal{C}$ and any $j' \in \mathcal{S}_j$ we have $d(j,\phi(j)) \leq d(c_1, c_2) \leq 2d(j',\phi(j')) \leq \alpha d(j',\phi(j'))$. Furthermore, since any feasible solution for constraint (\ref{cons:PP}) is also a feasible solution for constraint (\ref{cons:AG}), the proof is concluded. 
\end{proof}

Another structural notion that interests us, is that of the Price of Fairness (PoF). For a given instance of either of our problems, PoF is the ratio of the value of the optimal solution to the problem, over the the optimal unfair value. The latter is defined as the optimal value of the given instance, when we drop the fairness constraint and simply solve $k$-center. As is the case in most fair clustering literature, we show that in general PoF can be arbitrarily large.

\begin{theorem}\label{thm:unbound-PoF}
There exist instances of \prob{-PP} and \prob{-AG} with unbounded PoF.
\end{theorem}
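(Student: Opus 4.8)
The plan is to exhibit, for every $M>0$, a single instance that serves simultaneously as an instance of both \prob{-PP} and \prob{-AG} (take all similarity sets to be singletons, so that constraints (\ref{cons:PP}) and (\ref{cons:AG}) coincide), whose optimal unfair $k$-center value is a fixed constant while its optimal fair value is $M$. The construction is a tiny gadget with $k=2$ and three points $a,b,z$, where $d(a,b)=1$ and $d(a,z)=d(b,z)=M$ (realizable as the shortest-path metric of the triangle with those edge weights), and similarity sets $\mathcal{S}_a=\{b\}$, $\mathcal{S}_b=\{a\}$, $\mathcal{S}_z=\emptyset$. First I would record two routine facts: this is a valid metric, and it satisfies Assumption \ref{icml-asm} with $\psi=2$, since the only nonempty similarity pairs are at distance $1$ and (as shown next) $R^{*}_{unf}=1$.

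Second, I would pin down $R^{*}_{unf}$. Since $z$ is at distance $M$ from both other points, any center set avoiding $z$ serves $z$ at cost $M$; hence the optimal unfair solution picks $z$ together with one of $a,b$, serving the remaining one of $\{a,b\}$ at distance exactly $1$, so $R^{*}_{unf}=1$.

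The heart of the argument is the lower bound $R^{*}\ge M$, proven by a short case analysis over the three possible center pairs $S\in\{\{a,b\},\{a,z\},\{b,z\}\}$ (centers are actual points and $|\mathcal{C}|=3$). The key observation is that the mutual constraints $d(a,\phi(a))\le\alpha\,d(b,\phi(b))$ and $d(b,\phi(b))\le\alpha\,d(a,\phi(a))$ force the two assignment distances to lie within a factor $\alpha$ of each other — and in particular, if one is $0$ then so is the other. For $S=\{a,z\}$ the available values of $d(a,\phi(a))$ are in $\{0,M\}$ and of $d(b,\phi(b))$ in $\{1,M\}$, so once $M>\alpha$ the only admissible pair is $(M,M)$ and the solution costs $M$; the case $S=\{b,z\}$ is symmetric. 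For $S=\{a,b\}$, the pair $\{a,b\}$ can be served at cost $0$ (with $\phi(a)=a,\phi(b)=b$, feasible since $0\le\alpha\cdot 0$), but then $z$ must be assigned to $a$ or $b$, costing $M$. Hence in every case the objective is at least $M$, so $R^{*}\ge M$ (indeed $R^{*}=M$, and a feasible solution exists for $\alpha\ge 2$ by Theorem \ref{thm:a=2}). Therefore $\mathrm{PoF}=R^{*}/R^{*}_{unf}\ge M$, which is unbounded as $M\to\infty$. I would close by noting that the gadget extends to any $k\ge 2$ by appending $k-2$ extra points that are pairwise at distance $M$ and at distance $M$ from $a,b,z$, with empty similarity sets: the unfair optimum still serves everyone at cost $1$ (a center at each extra point, at $z$, and at $a$), whereas any fair solution must either spend two centers to serve $\{a,b\}$ at cost $0$ — leaving one of the $k-1$ far points uncovered at cost $M$ — or, as above, serve $\{a,b\}$ itself at cost $M$.

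The step I expect to require the most care is the exhaustiveness of the case analysis: one must verify that no combination of center choice and assignment can satisfy the equity constraints while keeping the objective below roughly $M$, which is precisely why it is convenient to keep the gadget as small as possible (and why the mutual-constraint "both-or-neither equal to $0$" observation does the real work).
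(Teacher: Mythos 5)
Your proposal is correct and is essentially the paper's own construction: a mutually-similar pair that is forced away from the zero-cost self-assignment (the observation that one assignment distance being $0$ forces the other to be $0$), placed far from the rest of the instance so that any fair solution must pay the large distance; the paper uses two such pairs at mutual distance $D$ where you use one pair plus a lone point $z$ with empty similarity set, but the mechanism is identical. The only nit is that your case analysis enumerates the three two-element center sets and omits single-center solutions (which are allowed since $|S|\le k$), though these are dispatched immediately: a lone center at $z$ serves $a$ and $b$ at cost $M$, and a lone center at $a$ or $b$ is infeasible because the self-assigned center gets cost $0$ while its similar partner cannot.
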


\begin{proof}
Consider an instance with four points $j_1, j_2, j_3, j_4$. For the distances we have $d(j_1, j_2) = d(j_3, j_4) = R$ and $d(j_1, j_3) = d(j_1, j_4) = d(j_2, j_3) = d(j_2, j_4) = D$, where $R \ll D$. Note that this is a valid metric space, where $j_1, j_2$ form a clique that is very far away from the clique of $j_3, j_4$. In addition, we assume $k = 2$ and $\alpha = 2$. For the similarity sets we have $\mathcal{S}_{j_1} = \{j_2\}$, $\mathcal{S}_{j_2} = \{j_1\}$, $\mathcal{S}_{j_3} = \{j_4\}$, $\mathcal{S}_{j_4} = \{j_3\}$.

Observe that the value of the optimal unfair solution is clearly $R$. This is achievable by choosing $j_1, j_3$ as centers. Given this, we see that the instance also satisfies Assumption \ref{icml-asm} since $R \ll D$.

Moving forward, we are going to show that the optimal solution for the fair variants has value $D$ (note that the existence of such a solution is guaranteed by Theorem \ref{thm:a=2}). This implies that PoF is $\frac{D}{R}$, and since $R \ll D$ this ratio can be arbitrarily large. Furthermore, note that because all similarity sets have cardinality $1$, constraints (\ref{cons:PP}) and (\ref{cons:AG}) are equivalent and hence we can solely focus on proving the result for (\ref{cons:PP}).

At first, assume that the optimal fair solution uses only one center. Then, any assignment that uses only one center should necessarily yield a maximum assignment distance of $D$.

Let us now consider the case of the optimal fair solution using two centers. If both these centers are in the same clique, i.e., the centers are either $\{j_1, j_2\}$ or $\{j_3, j_4\}$, then trivially any assignment that uses those sets will lead to a maximum assignment distance of $D$. Therefore, we only need to see what happens when the optimal fair solution places one center in each clique, and without loss of generality let us assume that the chosen centers are $\{j_1, j_3\}$. Focus now on $j_1$. If the optimal solution assigns $j_1$ to itself, i.e., $\phi(j_1) = j_1$, then $d(j_1,\phi(j_1)) = 0$. The latter implies that the fairness constraint for $j_2$ cannot be satisfied. Thus, the optimal must set $\phi(j_1) = j_3$, hence leading to a maximum assignment distance of $D$.
\end{proof}

\section{Approximation algorithms for \normalfont{\prob{-PP}} \textbf{and} \normalfont{\prob{-AG}}}\label{sec:alg}

Suppose that we are given an instance of \prob{} with $\alpha, k \geq 2$, and we are either solving \prob{-PP} or \prob{-AG}. In addition, recall that $R_j = \max_{j' \in \mathcal{S}_j}d(j,j')$, $R_m = \max_{j \in \mathcal{C}}R_j$ and $R^*$ denotes the value of the optimal solution for the corresponding problem. 

In this section we demonstrate a procedure that works under an explicitly given value $R$, with $R \geq R_m$. This process will either return a feasible solution $(S_R, \phi_R)$ with $\max_{j \in \mathcal{C}}d(j,\phi_R(j)) \leq 5R$, or an infeasibility message. The latter message indicates with absolute certainty that $R < R^*$. 

The aforementioned procedure suffices to yield the result of Theorem \ref{thm:alg-res}. Because $R^*$ is always the distance between two points in $\mathcal{C}$, the total number of possible values for it is only polynomial, specifically at most ${|\mathcal{C}| \choose 2}$. Hence, we can run the procedure for all such distances that are at least $R_m$, and in the end keep $(S_R, \phi_R)$ for the minimum guess $R$ for which we did not receive an infeasibility message. If $R_m \leq R^*$, then our returned solution is guaranteed to have value at most $5R^*$, because $R^*$ is one of the target values we tested. On the other hand, when $R_m > R^*$, the iteration with $R_m$ as the guess cannot return an infeasibility message, and thus it will provide a solution of value at most $5R_m$. As a side note, we mention that we can speed up the runtime of this approach by using binary search over the guesses $R$, instead of a naive brute-force search.  

Therefore, apart from the input instance, assume that we are also given a target value $R$ with $R \geq R_m$. Our framework begins by choosing an initial set of centers $S$. The full details of this step are presented in Algorithm \ref{alg-1}. Besides choosing this set $S$, Algorithm \ref{alg-1} also creates a partition $P_1, P_2, \hdots P_T$ of $S$ for some $T \leq |\mathcal{C}|$, and returns sets $G_c \subseteq \mathcal{C}$ for every $c \in S$. 

Initially, all point of $\mathcal{C}$ are considered uncovered ($U = \mathcal{C}$). The algorithm works by trying to expand the current set of centers $P_t$ as much as possible, via finding a new center that is currently uncovered and is within distance $3R$ from some center already placed in $P_t$. If no such point exists, then we never deal with $P_t$ again, and we move on to create $P_{t+1}$ by choosing an arbitrary uncovered point as the first center for it. In additional, every time a center $c$ is chosen, it covers all uncovered points that are within distance $2R$ from it, and these points constitute the set $G_c$. This process is repeated until all points get covered, i.e., until the set $U$ becomes empty.

\begin{algorithm}[t]
$S \gets \emptyset$, $U \gets \mathcal{C}$, $P_0 \gets \emptyset$, $t \gets 0$\;
\While {$U \neq \emptyset$} {
$Q \gets \{c \in U ~|~ \exists c' \in P_t \text{ such that } d(c,c') \leq 3R\}$\;
\uIf {$Q \neq \emptyset$} {
Choose a point $c \in Q$\;
$P_t \gets P_t \cup \{c\}$, $S \gets S \cup \{c\}$, $G_c \gets \{j \in U ~|~ d(j,c) \leq 2R\}$, $U \gets U \setminus G_c$\;
}
\Else {
Choose an arbitrary $c \in U$\;
$t \gets t + 1$\;
$P_t \gets \{c\}$, $S \gets S \cup \{c\}$, $G_c \gets \{j \in U ~|~ d(j,c) \leq 2R\}$, $U \gets U \setminus G_c$\;
}
}
Return the set $S$, the partition $P_1, P_2, \hdots P_t$ of $S$, and the sets $G_c$ for every $c \in S$\;
\caption{Choosing an initial set of centers}\label{alg-1}
\end{algorithm}

For every $c \in S$, let $t(c)$ be the index of the partition set $c$ belongs to, i.e., $c \in P_{t(c)}$. We also define $S_I = \{c \in S:~ |P_{t(c)}| = 1\}$ and $S_{N} = S \setminus S_I$. We interpret the centers of $S_I$ as being \emph{isolated}, since for each $c \in S_I$ its corresponding partition set contains only $c$, i.e., $P_{t(c)} = \{c\}$. On the other hand, the centers of $S_N$ are \emph{non-isolated}, in the sense of having $|P_{t(c)}| > 1$ for each $c \in S_N$. In addition, for every point $j \in \mathcal{C}$, let $\rho(j)$ the center of $S$ that covered $j$, i.e., $j \in G_{\rho(j)}$. Note that $d(j,\rho(j)) \leq 2R$. Finally, let $\mathcal{C}_I = \{j \in \mathcal{C}: ~ \rho(j) \in S_I\}$ and $\mathcal{C}_N = \mathcal{C} \setminus \mathcal{C}_I$, where $\mathcal{C}_I$ are the points that got covered by isolated centers, and $\mathcal{C}_N$ the points that got covered by non-isolated centers.

\begin{observation}\label{obs-1}
For every distinct $c,c' \in S$ we have $d(c,c') > 2R$.
\end{observation}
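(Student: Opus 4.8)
The plan is to establish Observation~\ref{obs-1} directly from the structure of Algorithm~\ref{alg-1}, by examining when each center was added to $S$ and invoking the fact that the set $U$ of uncovered points only shrinks over time. Fix two distinct centers $c, c' \in S$, and without loss of generality suppose $c'$ was chosen after $c$ during the execution of the algorithm. At the iteration when $c'$ was selected (either in the \textbf{if} branch from $Q$, or in the \textbf{else} branch as an arbitrary uncovered point), $c'$ was an element of the current set $U$ of uncovered points.

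The key step is to observe that, since $c$ was chosen at an earlier iteration, the set $G_c = \{j \in U : d(j,c) \leq 2R\}$ (evaluated at the time $c$ was picked) was removed from $U$ at that point, and $U$ never regains points. Hence at every later iteration, and in particular when $c'$ was chosen, we have $c' \notin G_c$. But $c'$ was still in $U$ at the time $c$ was selected — indeed $c' \in \mathcal{C} = $ initial $U$, and $c'$ has not been covered yet since it is still uncovered when later chosen — so the only way $c'$ could fail to be in the $G_c$ computed at $c$'s iteration is if $d(c', c) > 2R$. This gives exactly $d(c,c') > 2R$, as desired.

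The one subtlety to be careful about is the claim that $c'$ is still uncovered (i.e., in $U$) at the moment $c$ is chosen; this is immediate because a point leaves $U$ only by entering some $G_{c''}$, at which point it is ``covered'' and never reconsidered, whereas $c'$ is by assumption selected as a center at a strictly later iteration, which requires $c' \in U$ at that later time, and $U$ is non-increasing. So there is really no hard obstacle here — the argument is a short bookkeeping argument about the monotonicity of $U$ and the covering radius $2R$ used in the definition of the sets $G_c$. The only thing to state cleanly is the case split on whether $c'$ joins the same partition set $P_t$ as an expansion step or starts a fresh $P_{t+1}$, but in both branches the relevant fact is identical: $c'$ was an uncovered point at its own selection time, hence was uncovered at $c$'s earlier selection time, hence was not within distance $2R$ of $c$.
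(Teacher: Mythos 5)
Your argument is correct and is exactly the reasoning the paper has in mind: the paper simply asserts that Observation~\ref{obs-1} ``follows trivially from the definition of Algorithm~\ref{alg-1},'' and your monotonicity-of-$U$ bookkeeping (a later-chosen center must still be uncovered, hence cannot lie within the $2R$ covering radius of any earlier center) is the intended justification. Nothing is missing.
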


\begin{observation}\label{obs-2}
For every $c \in S_N$, there exists a different $c' \in S_N$ such that $d(c,c') \leq 3R$.
\end{observation}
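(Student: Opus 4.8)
The plan is to unwind the execution of Algorithm~\ref{alg-1} and track, for each $c \in S_N$, the order in which the centers of its block $P_{t(c)}$ were inserted. Fix $c \in S_N$ and write $t = t(c)$, so that $|P_t| \geq 2$ by the definition of $S_N$. List the centers of $P_t$ in the order in which the algorithm added them: the first one, call it $c_1$, was chosen in the \textbf{else}-branch as an arbitrary uncovered point, while every subsequent center was chosen from the set $Q$ in the \textbf{if}-branch, and hence at the time of its insertion was within distance $3R$ of some center already present in $P_t$.

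I would then split into two cases. If $c \neq c_1$, then at the iteration where $c$ was added we had $c \in Q$, so there is some $c' \in P_t$ (the partial block at that moment) with $d(c,c') \leq 3R$; this $c'$ differs from $c$ since it was already in the block, and it lies in $P_t$, whose final size is at least $2$, so $c' \in S_N$. If instead $c = c_1$, then since $|P_t| \geq 2$ there is a second center $c_2$ added to $P_t$; at the iteration where $c_2$ was inserted the block $P_t$ was exactly $\{c_1\} = \{c\}$, so $c_2 \in Q$ forces $d(c_2, c) \leq 3R$. Again $c_2 \neq c$ and $c_2 \in P_t \subseteq S_N$, which settles this case as well.

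There is essentially no hard step here: the statement is a direct bookkeeping consequence of the insertion rule for $Q$ (a new center is within distance $3R$ of an existing member of its block) combined with the fact that every member of a block of size at least $2$ belongs to $S_N$ by definition. The only point requiring a bit of care is the treatment of $c_1$, the seed of the block, which is the one center not chosen via the $3R$-proximity rule; the case analysis above handles it by appealing instead to the second center of the block, whose insertion certifies proximity to $c_1$.
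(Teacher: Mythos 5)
Your proof is correct and matches the paper's intent: the paper simply asserts that Observation~\ref{obs-2} ``follows trivially from the definition of Algorithm~\ref{alg-1},'' and your argument is exactly the bookkeeping that justifies this, including the correct handling of the block's seed center via the second inserted center. Nothing to add.
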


\begin{observation}\label{obs-3}
The sets $G_c$ for all $c \in S$, induce a partition of $\mathcal{C}$.
\end{observation}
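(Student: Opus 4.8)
The plan is to read the claim off directly from the bookkeeping in Algorithm~\ref{alg-1}. The crucial point is the interplay between the moment a set $G_c$ is defined and the moment its elements leave $U$: whenever a center $c$ is selected, $G_c$ is set to $\{j \in U : d(j,c) \le 2R\}$ using the \emph{current} $U$, and then $U$ is immediately updated to $U \setminus G_c$. Since $U$ only ever shrinks (no line of the algorithm adds points back to $U$), any point $j$ that enters some $G_c$ is thereafter absent from $U$, hence it cannot be included in any later set $G_{c'}$. Also, each selected center is genuinely new — in the \textbf{if} branch $c \in Q \subseteq U$ and in the \textbf{else} branch $c$ is drawn from $U$, while previously chosen centers have already been removed from $U$ as part of their own $G_{c'}$ — so the sets $\{G_c\}_{c \in S}$ are indexed without repetition. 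This establishes pairwise disjointness.

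For the covering part, I would first argue that the while loop terminates with $U = \emptyset$. In every iteration a center $c$ is selected from $U$ (as just noted), and $d(c,c) = 0 \le 2R$, so $c \in G_c$ and thus $|G_c| \ge 1$. Consequently $|U|$ strictly decreases at each iteration; since $\mathcal{C}$ is finite, after at most $|\mathcal{C}|$ iterations the loop condition fails, i.e., $U = \emptyset$. At that moment every point of $\mathcal{C}$ has been removed from $U$, and by the disjointness argument each such removal placed it in exactly one $G_c$. Hence $\bigcup_{c \in S} G_c = \mathcal{C}$, and together with disjointness (and nonemptiness, which comes for free from $c \in G_c$) this is precisely the statement that the sets $G_c$ partition $\mathcal{C}$.

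I do not anticipate a genuine obstacle here; this is essentially an invariant-maintenance argument. The only thing to be careful about is the ordering of the two updates in the body of each branch (define $G_c$ from the current $U$, \emph{then} subtract), and the observation that $S$ is grown without duplicates so that indexing the family by $c \in S$ is unambiguous.
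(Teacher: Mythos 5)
Your proof is correct and is exactly the argument the paper has in mind: the paper simply declares Observations \ref{obs-1}--\ref{obs-3} to "follow trivially from the definition of Algorithm \ref{alg-1}," and your write-up fills in precisely that bookkeeping (each $G_c$ is carved out of the current $U$ and then removed, each chosen center lies in $U$ and covers itself since $d(c,c)=0\leq 2R$, so the loop terminates with $U=\emptyset$ and the sets are pairwise disjoint with union $\mathcal{C}$). No gaps; this is the same approach, just spelled out.
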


The three previous observations follow trivially from the definition of Algorithm \ref{alg-1}. However, Observation \ref{obs-2} is of particular importance, since it will allow us to carefully control the assignment distances of points later on, in a way that would satisfy the underlying fairness constraints. 

\begin{lemma}\label{choose-cntr}
For any $c \in S_I$, we have $d(j,j') > R$ for all $j \in G_c$ and all $j' \in \mathcal{C} \setminus G_c$. 
\end{lemma}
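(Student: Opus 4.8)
The plan is to argue directly from the structure of Algorithm~\ref{alg-1}, using the fact that an isolated center's partition set never gets expanded. Let $c \in S_I$, so $P_{t(c)} = \{c\}$; let $j \in G_c$ and $j' \in \mathcal{C} \setminus G_c$, and let $\rho(j') = c'$ be the center that covered $j'$, so $c' \neq c$ and $d(j',c') \leq 2R$. I want to lower-bound $d(j,j')$. Since $j \in G_c$ we have $d(j,c) \leq 2R$; if $d(j,j') \leq R$ then by the triangle inequality $d(j',c) \leq d(j',j) + d(j,c) \leq 3R$, so $j'$ is within distance $3R$ of the (sole) center $c$ of $P_{t(c)}$.

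The key step is then a timing argument about when $j'$ got covered relative to when $c$ got placed. At the moment $c$ was added to $S$, it covered exactly the uncovered points within distance $2R$ of it. Since $j' \notin G_c$, either $j'$ was already covered before $c$ was placed, or $j'$ was still uncovered at that moment but had $d(j',c) > 2R$. In the latter subcase, combined with $d(j',c)\le 3R$ from the paragraph above, $j'$ would have been a valid candidate to be placed into $P_{t(c)}$ in a subsequent iteration of the \textbf{while} loop (it is uncovered and within $3R$ of $c \in P_{t(c)}$), contradicting the fact that $P_{t(c)}$ stayed a singleton — the loop would have set $Q \neq \emptyset$ and expanded $P_{t(c)}$ rather than moving on. So it must be that $j'$ was covered strictly before $c$ was placed, i.e., $t(c') < t(c)$ or $c'$ was placed earlier within the same stage; but $P_{t(c)} = \{c\}$ means no center was placed into stage $t(c)$ before $c$, so $c'$ belongs to an earlier stage and was placed before $c$. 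Now run the same reasoning from $c$'s side at the time $c$ was chosen as the \emph{first} (hence arbitrary uncovered) center of a new stage: at that point $j$ was uncovered (it ends up in $G_c$), and $d(j,c) \le 2R$. If additionally $d(j,c') \le 3R$ — which follows from $d(j,j')\le R$, $d(j',c')\le 2R$ via the triangle inequality, giving $d(j,c') \le 3R$ — then at the iteration right after $c'$ was placed, $j$ was an uncovered point within $3R$ of $c' \in P_{t(c')}$, so $Q$ was nonempty at that stage and the algorithm would have kept expanding $P_{t(c')}$ (eventually covering $j$ from some center of $P_{t(c')}$, or at least not starting a fresh singleton stage with $c$ before $j$ was covered). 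This contradicts $j \in G_c$ with $c$ the first element of its own stage $P_{t(c)}$. Hence $d(j,j') > R$.

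I would streamline the above into a single clean contradiction: assume $d(j,j') \le R$; derive $d(j,c) \le 2R$, $d(j',c')\le 2R$, $d(j',c)\le 3R$, and $d(j,c')\le 3R$; then observe that since $c \in S_I$ is the unique center of its stage, it was chosen in the \textbf{Else} branch as an arbitrary uncovered point, meaning at that moment $Q = \emptyset$, i.e., no uncovered point was within $3R$ of any center of $P_{t(c)-1}$ — but $c'$ (wherever it sits) and $j$ lead to a violation of the loop's logic because $j$ was uncovered at that time and lies within $3R$ of $c'$, and $c'$ was already placed. The main obstacle I anticipate is bookkeeping the exact relative order of events (which of $c, c'$ was placed first, and whether $j'$ was covered before or after $c$), since the two potential contradictions are symmetric but triggered at different iterations; I would resolve it by a clean case split on $\mathrm{time}(c) \lessgtr \mathrm{time}(c')$ and showing each case contradicts $P_{t(c)}$ being a singleton.
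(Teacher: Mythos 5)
Your proof is correct and follows essentially the same route as the paper's: assume $d(j,j')\le R$, derive $d(j',c)\le 3R$ and $d(j,c')\le 3R$ by the triangle inequality, and split on whether $c$ or $c'=\rho(j')$ entered $S$ first, in each case contradicting the $Q=\emptyset$ condition that ends the relevant stage. One small caution: in your final ``streamlined'' sketch the contradiction must be located at the termination of stage $t(c')$ (where $c'$ actually lives), not at the $Q$-check against $P_{t(c)-1}$ when $c$ is chosen, since $c'$ need not belong to that immediately preceding stage --- your longer case analysis already handles this correctly.
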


\begin{proof}
Focus on such a $c \in S_I$, and for the sake of contradiction assume that there exists a $j \in G_c$ and a $j' \in \mathcal{C} \setminus G_c$ for which $d(j,j') \leq R$. Let $c' \neq c$ the center of $S$ with $c'= \rho(j')$. 

At first, suppose that during the execution of Algorithm \ref{alg-1} $c$ entered $S$ before $c'$. Having $|P_{t(c)}| = 1$ means that when $P_{t(c)} =\{c\}$, the algorithm tried to find a point in $U$ within distance $3R$ from $c$ but failed. However, at that time $j'$ was still in $U$, because $j' \in G_{c'}$ and $c'$ entered $S$ after $c$. In addition $d(j',c) \leq d(j,j') + d(j,c) \leq 3R$, and thus we reached a contradiction.

Now assume that $c'$ entered $S$ before $c$. This implies that $t(c') < t(c)$, because $|P_{t(c)}| = 1$. When the algorithm stopped expanding $P_{t(c')}$, there was not any point of $U$ within distance $3R$ from a center of $P_{t(c')}$. However, at that moment $j$ was still in $U$, because $j \in G_c$ and $t(c') < t(c)$. In addition $d(j,c') \leq d(j,j') + d(j',c') \leq 3R$, and so we once again reach a contradiction.
\end{proof}

By using Lemma \ref{choose-cntr} and the fact that $R \geq R_m$, we immediately get the following.
\begin{corollary}\label{balls-cont}
For every $c \in S_I$, we have $\mathcal{S}_j \subseteq G_c \subseteq \mathcal{C}_I$ for all $j \in G_c$.
\end{corollary}

\begin{corollary}\label{balls-cont-2}
For every $j \in \mathcal{C}_N$, we have $\mathcal{S}_j \cap \mathcal{C}_I = \emptyset$ and hence $\mathcal{S}_j \subseteq \mathcal{C}_N$.
\end{corollary}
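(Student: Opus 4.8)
\textbf{Proof proposal for Corollary~\ref{balls-cont-2}.}
The plan is to leverage the partition structure established in Corollary~\ref{balls-cont} and argue by contradiction. Suppose there exists a point $j \in \mathcal{C}_N$ together with some $j' \in \mathcal{S}_j$ such that $j' \in \mathcal{C}_I$. Since $j' \in \mathcal{C}_I$, by definition $\rho(j') \in S_I$, and thus $j' \in G_c$ for the isolated center $c = \rho(j')$. The key observation is that $j'$ now sits inside the group of an isolated center, and Corollary~\ref{balls-cont} tells us exactly what that group looks like: it is ``closed'' under proximity at scale $R$.

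The main step is to invoke Lemma~\ref{choose-cntr} with this isolated center $c$. Since $R \geq R_m \geq R_{j'} = \max_{j'' \in \mathcal{S}_{j'}} d(j', j'')$, every point similar to $j'$ lies within distance $R$ of $j'$; but Lemma~\ref{choose-cntr} says that everything within distance $R$ of a point of $G_c$ must itself lie in $G_c$. The subtlety here is that the relation ``$j' \in \mathcal{S}_j$'' is not assumed symmetric, so I cannot directly conclude $j \in \mathcal{S}_{j'}$ and hence $d(j,j') \leq R$. Instead I should argue the other direction: $R_j = \max_{j'' \in \mathcal{S}_j} d(j, j'') \leq R_m \leq R$, and $j' \in \mathcal{S}_j$, so $d(j, j') \leq R_j \leq R$. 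Now apply Lemma~\ref{choose-cntr} in contrapositive form: if $j \notin G_c$, then for the point $j' \in G_c$ we would need $d(j, j') > R$, contradicting $d(j,j') \leq R$. Hence $j \in G_c$, which forces $j \in \mathcal{C}_I$ (since $G_c \subseteq \mathcal{C}_I$ by Corollary~\ref{balls-cont}), contradicting $j \in \mathcal{C}_N$.

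This establishes $\mathcal{S}_j \cap \mathcal{C}_I = \emptyset$ for every $j \in \mathcal{C}_N$, and since $\mathcal{S}_j \subseteq \mathcal{C} = \mathcal{C}_I \cup \mathcal{C}_N$ is a disjoint union, we immediately get $\mathcal{S}_j \subseteq \mathcal{C}_N$, as claimed. The only real obstacle is the asymmetry of the similarity relation — one must be careful to bound $d(j,j')$ using $R_j$ (from $j$'s side, where $j' \in \mathcal{S}_j$ is given) rather than attempting to use $R_{j'}$, and then feed that distance bound into Lemma~\ref{choose-cntr} applied to the $G_c$ containing $j'$. Everything else is a routine unpacking of the definitions of $\mathcal{C}_I$, $\mathcal{C}_N$, and $\rho(\cdot)$.
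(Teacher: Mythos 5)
Your proposal is correct and follows exactly the route the paper intends: since $j' \in \mathcal{S}_j$ forces $d(j,j') \leq R_j \leq R_m \leq R$, Lemma~\ref{choose-cntr} applied to the isolated group $G_{\rho(j')}$ would be violated unless $j$ also lies in that group, contradicting $j \in \mathcal{C}_N$. Your care with the asymmetry of the similarity relation (bounding $d(j,j')$ via $R_j$ rather than $R_{j'}$) is a valid and welcome precaution, but the substance matches the paper's one-line derivation from Lemma~\ref{choose-cntr} and $R \geq R_m$.
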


In words, Corollary \ref{balls-cont} says that the similarity set of a point $j \in \mathcal{C}_I$ is completely contained in $G_{\rho(j)}$, where of course $\rho(j) \in S_I$ and $G_{\rho(j)} \subseteq \mathcal{C}_I$. Similarly, Corollary \ref{balls-cont-2} says that the similarity set of a point $j \in \mathcal{C}_N$ is completely contained in $\mathcal{C}_N$. 

After computing the set of centers $S$, our approach proceeds by constructing the appropriate assignment function. This will occur in two steps. The first step takes care of the points in $\mathcal{C}_I$, by choosing a new set of centers $S'_I \subseteq \mathcal{C}_I$, and by constructing an assignment $\phi_I: \mathcal{C}_I \mapsto S'_I$. The second step handles the points of $\mathcal{C}_N$ via a mapping $\phi_N: \mathcal{C}_N \mapsto S_N$. This is well-defined, since $\mathcal{C}_I \cap \mathcal{C}_N = \emptyset$. Note now that due to Corollary \ref{balls-cont}, the fairness constraint of a point $j \in \mathcal{C}_I$ is only affected by $\phi_I$, since $\mathcal{S}_j \subseteq G_{\rho(j)} \subseteq \mathcal{C}_I$ and $\mathcal{C}_I \cap \mathcal{C}_N = \emptyset$. Similarly, due to Corollary \ref{balls-cont-2}, the fairness constraint of a $j \in \mathcal{C}_N$ is only affected by $\phi_N$, since $\mathcal{S}_j \subseteq \mathcal{C}_N$ and $\mathcal{C}_I \cap \mathcal{C}_N = \emptyset$. Therefore, we can study the satisfaction of fairness constraints separately on $\mathcal{C}_I$ for $\phi_I$, and on $\mathcal{C}_N$ for $\phi_N$. 

Algorithm~\ref{alg-2} demonstrates the details of the first assignment step. The algorithm operates by trying to ``guess'' if the optimal solution uses exactly one center inside each $G_c$ for $c \in S_I$. If it does, so will our algorithm. If not, then our approach will open exactly two centers, and will subsequently construct an assignment that will satisfy the appropriate fairness constraint.

\begin{algorithm}[t]
$S'_I \gets \emptyset$\;
\For {every $c \in S_I$} {

Check if there exists any $j \in G_c$, such that assigning all points of $G_c$ to $j$ would result in the appropriate fairness constraint being satisfied for each $j' \in G_c$. Note that checking this feasibility condition is well-defined, since Corollary \ref{balls-cont} gives $\mathcal{S}_{j'} \subseteq G_c$ for all $j' \in G_c$. If such a $j$ exists, set $S'_I \gets S'_I \cup \{j\}$ and $\phi_I(j') \gets j$ for all $j' \in G_c$\;

If you could not find such a $j$, use the algorithm of Lemma \ref{thm:a=2-aux} on the points of $G_c$. This will return two points $c_1, c_2 \in G_c$ and an assignment $\phi: G_c \mapsto \{c_1, c_2\}$. Then set $S'_I \gets S'_I \cup \{c_1, c_2\}$ and $\phi_I(j') \gets \phi(j')$ for all $j' \in G_c$\;
}
Return $S'_I$ and $\phi_I$\;
\caption{Assignment for the points of $\mathcal{C}_I$}\label{alg-2}
\end{algorithm}

\begin{lemma}\label{iso-feas}
After the execution of Algorithm \ref{alg-2}, for every $j \in \mathcal{C}_I$ we have that the constructed assignment $\phi_I$ will \textbf{1)} satisfy $j$'s fairness constraint, and \textbf{2)} guarantee $d(j,\phi_I(j)) \leq 4R$.
\end{lemma}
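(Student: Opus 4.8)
The plan is to argue separately over each group $G_c$, $c \in S_I$. By Observation \ref{obs-3} the sets $\{G_c : c \in S_I\}$ partition $\mathcal{C}_I$, and by Corollary \ref{balls-cont} every $j' \in G_c$ has $\mathcal{S}_{j'} \subseteq G_c$, so the fairness constraint of each point of $G_c$ depends only on how $\phi_I$ acts on $G_c$. Hence it suffices to fix one $c \in S_I$ and verify both claims for the points of $G_c$, distinguishing the two branches of Algorithm \ref{alg-2}.

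First I would handle the branch where Algorithm \ref{alg-2} finds some $j \in G_c$ such that assigning all of $G_c$ to $j$ satisfies the fairness constraint of every $j' \in G_c$. Then claim (1) holds by the very criterion used to select $j$. For claim (2), both $j'$ and $j$ lie within distance $2R$ of $c$ by the definition of $G_c$ in Algorithm \ref{alg-1}, so $d(j',\phi_I(j')) = d(j',j) \le d(j',c) + d(c,j) \le 4R$.

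Next I would handle the branch where no such $j$ exists, so Algorithm \ref{alg-2} invokes Lemma \ref{thm:a=2-aux} on $G_c$; this is legitimate because $|G_c| \ge 2$ in this branch (if $|G_c| = 1$, assigning its unique point to itself makes the fairness constraint vacuous or trivially satisfied since $\alpha \ge 1$, so the first branch would have succeeded). Lemma \ref{thm:a=2-aux} returns $c_1, c_2 \in G_c$ and $\phi$ with $d(c_1,c_2)/2 \le d(j',\phi(j')) \le d(c_1,c_2)$ for all $j' \in G_c$. Claim (2) follows from $d(j',\phi_I(j')) \le d(c_1,c_2) \le d(c_1,c) + d(c,c_2) \le 4R$. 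For claim (1), fix $j' \in G_c$ with $\mathcal{S}_{j'} \ne \emptyset$; each $j'' \in \mathcal{S}_{j'}$ lies in $G_c$, so $d(j'',\phi_I(j'')) \ge d(c_1,c_2)/2$, while $d(j',\phi_I(j')) \le d(c_1,c_2)$. For \prob{-PP} this gives $d(j',\phi_I(j')) \le d(c_1,c_2) = 2 \cdot \tfrac{d(c_1,c_2)}{2} \le 2\, d(j'',\phi_I(j'')) \le \alpha\, d(j'',\phi_I(j''))$ for every $j'' \in \mathcal{S}_{j'}$; for \prob{-AG}, bounding each term of the average the same way yields $d(j',\phi_I(j')) \le d(c_1,c_2) \le 2 \cdot \tfrac{\sum_{j'' \in \mathcal{S}_{j'}} d(j'',\phi_I(j''))}{|\mathcal{S}_{j'}|} \le \alpha \cdot \tfrac{\sum_{j'' \in \mathcal{S}_{j'}} d(j'',\phi_I(j''))}{|\mathcal{S}_{j'}|}$, using $\alpha \ge 2$ in both cases.

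I do not anticipate a real obstacle here; the two points needing care are (i) making explicit that both branches can be analyzed entirely locally within $G_c$, which is exactly what Corollary \ref{balls-cont} provides, and (ii) confirming the precondition $|G_c| \ge 2$ of Lemma \ref{thm:a=2-aux} whenever the second branch is reached.
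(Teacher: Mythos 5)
Your proof is correct and follows essentially the same route as the paper's: split by the group $G_c$ using Corollary \ref{balls-cont} to localize the fairness constraints, handle the two branches of Algorithm \ref{alg-2} separately, and get the $4R$ bound by the triangle inequality through $c$. You merely spell out two details the paper leaves implicit --- the explicit verification that the guarantee $d(c_1,c_2)/2 \le d(\cdot,\phi(\cdot)) \le d(c_1,c_2)$ from Lemma \ref{thm:a=2-aux} yields both fairness constraints when $\alpha \ge 2$, and the check that $|G_c| \ge 2$ whenever the second branch is reached --- both of which are correct.
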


\begin{proof}
At first, due to Observation \ref{obs-3}, Algorithm \ref{alg-2} sets the value $\phi_I(j)$ for each $j \in \mathcal{C}_I$ exactly once. In addition, we know that for every $j \in \mathcal{C}_I$, all points of $\mathcal{S}_j$ will have their assignment set in the same iteration of Algorithm \ref{alg-2}, since $\rho(j) \in S_I$ and by Corollary \ref{balls-cont} we have $\mathcal{S}_j \subseteq G_{\rho(j)}$.

For a point $j \in \mathcal{C}_I$, when $\rho(j)$ is considered by Algorithm \ref{alg-2} there are two possible scenarios. In the first we have $|S'_I \cap G_{\rho(j)}| = 1$. If that happens, all points of $G_{\rho(j)}$ are assigned to the only point of $S'_I \cap G_{\rho(j)}$, and because of the first check of the algorithm we are also sure that the fairness constraint of all of them is satisfied. Otherwise, we have $|S'_I \cap G_{\rho(j)}| = 2$, as a result of running the algorithm of Lemma \ref{thm:a=2-aux} on $G_{\rho(j)}$. By using the assignment guarantees of that algorithm, it is easy to see that the fairness constraints for all $j' \in G_{\rho(j)}$ will again be satisfied. Hence, in both cases the corresponding fairness constraint is satisfied for $j$.

Finally, $d(j,\phi_I(j)) \leq d(j,\rho(j)) + d(\phi_I(j), \rho(j)) \leq 4R$, since $\phi_I(j) \in G_{\rho(j)}$ in each case.
\end{proof}

\begin{lemma}\label{iso-k}
If $R \geq R^*$, then after the execution of Algorithm \ref{alg-2} we will have $|S'_I| + |S_N| \leq k$.
\end{lemma}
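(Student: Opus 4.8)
The plan is to fix an optimal solution $(S^*,\phi^*)$ for the problem at hand, so that $|S^*| \le k$, and to exhibit an injection from the set of centers our algorithm opens into $S^*$. The centers opened by Algorithm~\ref{alg-2} are exactly those of $S'_I$, and they all lie in $\mathcal{C}_I$ (since $S'_I \subseteq \bigcup_{c \in S_I} G_c = \mathcal{C}_I$), while the second assignment step reuses the centers of $S_N \subseteq \mathcal{C}_N$. As $\mathcal{C}_I \cap \mathcal{C}_N = \emptyset$, it suffices to prove the two separate bounds $|S'_I| \le |S^* \cap \mathcal{C}_I|$ and $|S_N| \le |S^* \cap \mathcal{C}_N|$; adding them and using that $\{G_c\}_{c\in S}$ partitions $\mathcal{C}$ (Observation~\ref{obs-3}) gives $|S'_I| + |S_N| \le |S^*| \le k$.

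For the non-isolated part, I would use a packing argument. By Observation~\ref{obs-1} the closed balls $\bar B(c,R) := \{x \in \mathcal{C} : d(x,c) \le R\}$ for $c \in S_N$ are pairwise disjoint. Since $R \ge R^*$, each $c$ satisfies $d(c,\phi^*(c)) \le R^* \le R$, so $\phi^*(c) \in \bar B(c,R) \cap S^*$; thus every such ball contains at least one optimal center, and by disjointness these optimal centers are distinct across $c \in S_N$. It remains to check $\bar B(c,R) \subseteq \mathcal{C}_N$ for every $c \in S_N$: if some $j \in \bar B(c,R)$ belonged to $\mathcal{C}_I$, then $\rho(j) \in S_I$, $j \in G_{\rho(j)}$ and $c \notin G_{\rho(j)}$, so Lemma~\ref{choose-cntr} would force $d(j,c) > R$, contradicting $j \in \bar B(c,R)$. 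Hence the chosen optimal centers all lie in $S^* \cap \mathcal{C}_N$, giving $|S_N| \le |S^* \cap \mathcal{C}_N|$.

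For the isolated part, fix $c \in S_I$ and first observe that $\phi^*(G_c) \subseteq G_c$: any $j \in G_c$ has $d(j,\phi^*(j)) \le R^* \le R$, whereas by Lemma~\ref{choose-cntr} every point outside $G_c$ is at distance $> R$ from $j$. In particular $S^* \cap G_c \neq \emptyset$, which already charges the single opened center to an optimal one when $|S'_I \cap G_c| = 1$. When Algorithm~\ref{alg-2} opens two centers inside $G_c$, its single-center feasibility check must have failed, and I claim this forces $|S^* \cap G_c| \ge 2$. Indeed, if $S^* \cap G_c = \{c^*\}$, then by the containment above $\phi^*(j) = c^*$ for all $j \in G_c$; since $\mathcal{S}_j \subseteq G_c$ for every $j \in G_c$ (Corollary~\ref{balls-cont}), each such fairness constraint depends only on $\phi^*$ restricted to $G_c$, and feasibility of $\phi^*$ then says that assigning all of $G_c$ to $c^*$ satisfies every constraint in $G_c$ — i.e. the single-center check would have succeeded with $j = c^*$, a contradiction. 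Summing $|S'_I \cap G_c| \le |S^* \cap G_c|$ over $c \in S_I$ and using that $\{G_c\}_{c \in S_I}$ partitions $\mathcal{C}_I$ yields $|S'_I| \le |S^* \cap \mathcal{C}_I|$, and combining with the previous paragraph finishes the proof.

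I expect the main obstacle to be the two-center case of the isolated part, namely converting the failure of the single-center feasibility check into the combinatorial lower bound $|S^* \cap G_c| \ge 2$. This is precisely where Corollary~\ref{balls-cont} (the similarity sets of points in $G_c$ stay inside $G_c$) and the hypothesis $R \ge R^*$ (which keeps $\phi^*$ internally assigned on $G_c$) must be used together; the rest is a Khuller-style disjoint-balls packing, with the only extra ingredient on the $\mathcal{C}_N$ side being the observation that radius-$R$ balls around non-isolated centers cannot leak into $\mathcal{C}_I$.
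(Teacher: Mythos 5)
Your proposal is correct and follows essentially the same route as the paper's proof: both charge $S_N$ to distinct optimal centers outside $\mathcal{C}_I$ via Observation~\ref{obs-1} and Lemma~\ref{choose-cntr}, and both bound $|S'_I\cap G_c|\le|S^*\cap G_c|$ by showing that $\phi^*$ keeps each $G_c$ internally assigned, so a unique optimal center in $G_c$ would make the single-center check succeed. The ball-packing phrasing and the contrapositive presentation of the two-center case are only cosmetic differences.
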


\begin{proof}
Let $S^*$ be the optimal set of centers, and $\phi^*$ the corresponding optimal assignment. The following two statements rely on the fact that $R\geq R^*$. First, by Observation \ref{obs-1} note that for two distinct points $c,c' \in S_N$ we must have $\phi^*(c) \neq \phi^*(c')$. Second, due to Lemma \ref{choose-cntr} we also have $\phi^*(c) \notin \mathcal{C}_I$ for every $c \in S_N$. The two previous statements imply $|S_N| \leq |S^* \setminus \mathcal{C}_I|$.

Now focus on $S^* \cap \mathcal{C}_I$, and see that $|S^* \cap \mathcal{C}_I| = \sum_{c \in S_I}|S^* \cap G_c|$ due to Observation \ref{obs-3} and the definition of $\mathcal{C}_I$. Further, due to Lemma \ref{choose-cntr} and the fact that $R \geq R^*$, we have that $|S^* \cap G_c| \geq 1$ for every $c \in S_I$. If $|S^* \cap G_c| = 1$, then Lemma \ref{choose-cntr} implies that the optimal solution assigns all points of $G_c$ to the unique point of $S^* \cap G_c$. This assignment is obviously feasible, and thus the first part of Algorithm~\ref{alg-2} can identify it and give $|S'_I \cap G_c| = 1$. Otherwise, if $|S^* \cap G_c| \geq 2$, then Algorithm \ref{alg-2} ensures that $|S'_I \cap G_c| \leq 2$. Therefore, we get 
\begin{align}
    |S'_I| = \sum_{c \in S_I}|S'_I \cap G_c| \leq \sum_{c \in S_I}|S^* \cap G_c| = |S^* \cap \mathcal{C}_I| \notag
\end{align}

Putting everything together yields 
\begin{align}
    &|S'_I| + |S_N| \leq |S^* \cap \mathcal{C}_I| + |S^* \setminus \mathcal{C}_I| = |S^*| \leq k \qedhere \notag
\end{align}
\end{proof}

Using the contrapositive of Lemma \ref{iso-k}, we see that if $|S'_I| + |S_N| > k$ then $R < R^*$, and hence we can safely return as our answer an infeasibility message.

Before we proceed to the second step of our assignment process, we need some extra notation. For each $c \in S_N$ define $H^1_c = \{j \in \mathcal{C}_N ~|~ d(j,c) \leq R\}$ and $H^2_c = G_c \setminus \big{(}\bigcup_{c' \in S_N}H^1_{c'}\big{)}$. Combining Observation \ref{obs-1}, Observation \ref{obs-3} and the way we constructed the sets $H^1_c, H^2_c$, it is easy to see that for each $j \in \mathcal{C}_N$ exactly one of the following two cases will hold.
\begin{itemize}
    \item The point $j$ belongs to \emph{exactly one} $H^1_c$ for some $c \in S_N$. In addition, $j$ clearly does not belong to any set $H^2_{c'}$ for $c' \in S_N$. In this case, we call $j$ a \emph{type-1} point, and we set $\pi(j) = c$.
    \item The point $j$ belongs to $H^2_{\rho(j)}$. In addition, $j$ does not belong to any $H^1_c$ for $c \in S_N$, and also it does not belong to any $H^2_c$ with $c \neq \rho(j)$. Here we call $j$ a \emph{type-2} point, and set $\pi(j) = \rho(j)$.
\end{itemize}
Further, let $\mathcal{C}^1_N = \{j \in \mathcal{C}_N ~|~ j \text{ is a type-1 point}\}$ and $\mathcal{C}^2_N = \{j \in \mathcal{C}_N ~|~ j \text{ is a type-2 point}\}$. Therefore, $\mathcal{C}^1_N \cap \mathcal{C}^2_N = \emptyset$ and $\mathcal{C}^1_N \cup \mathcal{C}^2_N = \mathcal{C}_N$. Finally, the definition of a type-2 point implies:
\begin{observation}\label{obs-4}
For all $j \in \mathcal{C}^2_N$, we have $d(j,\pi(j)) \leq 2R$ and $d(j,c) > R$ for all $c \in S_N$.
\end{observation}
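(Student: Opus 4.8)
The plan is to prove Observation~\ref{obs-4} purely by unwinding the definitions of a type-2 point, of the sets $H^1_c$ and $H^2_c$, and of the map $\pi$; both assertions fall out immediately, so there is essentially no argument beyond bookkeeping.

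First I would handle the bound $d(j,\pi(j)) \leq 2R$. By definition every type-2 point $j$ satisfies $\pi(j) = \rho(j)$, where $\rho(j)$ is the center of $S$ that covered $j$ during Algorithm~\ref{alg-1}. Since in Algorithm~\ref{alg-1} a center $c$ only ever gathers into $G_c$ points of $U$ that lie within distance $2R$ of $c$, the membership $j \in G_{\rho(j)}$ forces $d(j,\rho(j)) \leq 2R$, which is exactly $d(j,\pi(j)) \leq 2R$. (This is the same remark already recorded right after the statement of Algorithm~\ref{alg-1}.)

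Next I would handle the bound $d(j,c) > R$ for every $c \in S_N$. A type-2 point $j$ lies in $H^2_{\rho(j)} = G_{\rho(j)} \setminus \bigcup_{c' \in S_N} H^1_{c'}$, hence in particular $j \notin H^1_c$ for every $c \in S_N$. By definition $H^1_c = \{x \in \mathcal{C}_N ~|~ d(x,c) \leq R\}$, and $j \in \mathcal{C}_N$ since type-2 points form a subset of $\mathcal{C}_N$; therefore the only way $j$ can fail to belong to $H^1_c$ is $d(j,c) > R$. As $c \in S_N$ was arbitrary, this establishes the second claim.

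I do not expect any genuine obstacle here: the observation is a packaging step that collects the two facts about type-2 points that will be needed when constructing $\phi_N$ on $\mathcal{C}_N$ in the second assignment phase. The only details to keep straight are that, for type-2 points, $\pi(j)$ is \emph{defined} to be $\rho(j)$ rather than some freshly chosen center, and that being excluded from every $H^1_{c'}$ with $c' \in S_N$ is built directly into the definition of $H^2$.
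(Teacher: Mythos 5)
Your proof is correct and matches the paper's intent: the paper states Observation~\ref{obs-4} as an immediate consequence of the definition of a type-2 point (and of the fact that $d(j,\rho(j)) \leq 2R$ for covered points), which is exactly the definition-unwinding you carry out. No issues.
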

The distinction between type-1 and type-2 points is necessary for satisfying the fairness constraints. Notice that by construction of $S_N$ type-1 points are more ``privilleged'', since they have an available center within distance at most $R$ from them. On the other hand, type-2 points do not have such an advantage. Therefore, the assignment process should be aware of this discrepancy, so it can favor type-2 points in a controlled way that will satisfy everyone's fairness constraint. 

Algorithm \ref{alg-3} demonstrates the full details of constructing the assignment $\phi_N : \mathcal{C}_N \mapsto S_N$. The high-level intuition behind it follows. At first, we try to provide each point $j$ with an assignment distance in the range $[R,5R]$, something that is possible due to Observation \ref{obs-2}. However, since $\alpha$ might be less than $5$, we are very careful in how we handle the assignment of similar points. The latter is achieved by considering type-1 and type-2 points independently, in a manner that is aware of where the potential similar points of each type may be.

\begin{algorithm}[t]

\For {every $j \in \mathcal{C}_N$} {

\If {$j \in \mathcal{C}^1_N$} {
$\phi_N(j) \gets \argmin_{c \in (S_N \setminus \{\pi(j)\})}d(j,c)$ \tcp*{Case (A)}
}
\If {$j \in \mathcal{C}^2_N$} {    
\uIf {$\exists c \in (S_N \setminus \{\pi(j)\}): d(j,c) \leq 2R$}{
$\phi_N(j) \gets \argmax_{c' \in S_N: d(j,c') \leq 2R}d(j,c')$ \tcp*{Case (B)}
}
\Else{
$\phi_N(j) \gets \argmin_{c' \in (S_N \setminus \{\pi(j)\})}d(j,c')$ \tcp*{Case (C)}
}
}
}
Return the assignment $\phi_N: \mathcal{C}_N \mapsto S_N$\;
\caption{Assignment for the points of $\mathcal{C}_N$}\label{alg-3}
\end{algorithm}

\begin{lemma}\label{alg-lem-1}
For any point $j \in \mathcal{C}_N$ we have:
\begin{itemize}
    \item $d(j,\pi(j)) \leq R < d(j,\phi_N(j)) \leq 4R$, if $j$ gets assigned to $\phi_N(j)$ according to \textbf{Case (A)}.
    \item $R < d(j,\pi(j)) \leq d(j,\phi_N(j)) \leq 2R$, if $j$ gets assigned to $\phi_N(j)$ according to \textbf{Case (B)}.
    \item $d(j,\pi(j)) \leq 2R < d(j,\phi_N(j)) \leq 5R$, if $j$ gets assigned to $\phi_N(j)$ according to \textbf{Case (C)}.
\end{itemize}
\end{lemma}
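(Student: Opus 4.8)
The plan is to treat the three cases of Algorithm \ref{alg-3} separately. In each one, the bound on $d(j,\pi(j))$ will come directly from the type-1/type-2 classification (and Observation \ref{obs-4} in the type-2 cases), the \emph{lower} bound on $d(j,\phi_N(j))$ will come from the selection rule of that case, and the \emph{upper} bound on $d(j,\phi_N(j))$ will come from exhibiting one concrete eligible center via Observation \ref{obs-2} together with the triangle inequality. Throughout, note that for every $j \in \mathcal{C}_N$ the center $\pi(j)$ lies in $S_N$, so Observation \ref{obs-2} applies to it and in particular guarantees $S_N \setminus \{\pi(j)\} \neq \emptyset$, which is what makes the $\argmin$'s in Cases (A) and (C) well-defined.

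For \textbf{Case (A)}, since $j \in \mathcal{C}^1_N$ it lies in exactly one set $H^1_c$, namely $H^1_{\pi(j)}$, so $d(j,\pi(j)) \leq R$ by definition of $H^1_{\pi(j)}$, and $d(j,c') > R$ for every other $c' \in S_N$ since $j$ belongs to no other $H^1_{c'}$; as $\phi_N(j)$ is chosen from $S_N \setminus \{\pi(j)\}$, this gives $d(j,\phi_N(j)) > R$. For the $\leq 4R$ bound I would apply Observation \ref{obs-2} to $\pi(j)$ to obtain some $c' \in S_N \setminus \{\pi(j)\}$ with $d(\pi(j),c') \leq 3R$, so $d(j,c') \leq d(j,\pi(j)) + d(\pi(j),c') \leq 4R$; since $c'$ is eligible in the $\argmin$ defining $\phi_N(j)$, we get $d(j,\phi_N(j)) \leq 4R$.

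For \textbf{Cases (B) and (C)} we have $j \in \mathcal{C}^2_N$, so $\pi(j) = \rho(j)$ and Observation \ref{obs-4} directly yields $R < d(j,\pi(j)) \leq 2R$ (the strict lower bound because $\pi(j) \in S_N$). In Case (B) the rule picks the farthest center within distance $2R$; since $\pi(j)$ is itself within $2R$ of $j$, it is a candidate in that $\argmax$, so $d(j,\phi_N(j)) \geq d(j,\pi(j))$, and by construction $d(j,\phi_N(j)) \leq 2R$, yielding the full chain $R < d(j,\pi(j)) \leq d(j,\phi_N(j)) \leq 2R$. In Case (C) the case hypothesis is exactly that $d(j,c) > 2R$ for every $c \in S_N \setminus \{\pi(j)\}$, which is the lower bound $d(j,\phi_N(j)) > 2R$; and applying Observation \ref{obs-2} to $\pi(j) = \rho(j)$ produces an eligible $c'$ with $d(j,c') \leq d(j,\pi(j)) + d(\pi(j),c') \leq 2R + 3R = 5R$, giving $d(j,\phi_N(j)) \leq 5R$.

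No step here is genuinely hard; the only thing requiring care is bookkeeping \emph{where the "second nearby center" comes from} in each upper bound --- Observation \ref{obs-2} in Cases (A) and (C), and the defining condition of the case itself in Case (B) --- together with remembering that Observation \ref{obs-4} is precisely what makes the lower bounds strict for type-2 points, and that $j$ lying in a \emph{unique} $H^1_c$ is what forces $d(j,\phi_N(j)) > R$ in Case (A). A minor point worth double-checking explicitly is that $\pi(j)$ is a legitimate candidate in the $\argmax$ of Case (B), which is immediate from $d(j,\pi(j)) \leq 2R$.
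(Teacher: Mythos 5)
Your proposal is correct and follows essentially the same route as the paper's proof: the lower bounds come from the type-1/type-2 classification and Observation \ref{obs-4}, and each upper bound comes from exhibiting an eligible center via Observation \ref{obs-2} applied to $\pi(j)$ plus the triangle inequality. Your extra remarks (that $S_N \setminus \{\pi(j)\} \neq \emptyset$ makes the $\argmin$'s well-defined, and that $\pi(j)$ is a legitimate candidate in the Case (B) $\argmax$) are correct details the paper leaves implicit.
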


\begin{proof}
In \textbf{Case (A)} $d(j, \pi(j)) \leq R$ since $j \in H^1_{\pi(j)}$. Also, from the definition of type-1 points, there does not exist any center in $S_N \setminus \{\pi(j)\}$ that is within distance at most $R$ from $j$, and hence $d(j,\phi_N(j)) > R \geq d(j, \pi(j))$. In addition, Observation \ref{obs-2} ensures that there exists a $c \in S_N \setminus \{\pi(j)\}$ such that $d(\pi(j),c) \leq 3R$. Therefore, $d(j,\phi_N(j)) \leq d(j,c) \leq d(j,\pi(j)) + d(\pi(j),c) \leq 4R$. 

The assignment guarantee for \textbf{Case (B)} follows trivially from Observation \ref{obs-4}, and the way the algorithm operates in that situation.

In \textbf{Case (C)} we have $d(j,c) > 2R \geq d(j,\pi(j))$ for all $c \in S_N \setminus \{\pi(j)\}$. In addition, Observation \ref{obs-2} ensures that there exists a $c' \in S_N \setminus \{\pi(j)\}$ such that $d(\pi(j),c') \leq 3R$. Hence, $d(j,\phi_N(j)) \leq d(j,c') \leq d(j,\pi(j)) + d(\pi(j),c') \leq 5R$, where $d(j,\pi(j)) \leq 2R$ follows from Observation \ref{obs-4}.
\end{proof}

Lemma \ref{alg-lem-1} immediately gives an upper bound of $5R$ for the maximum assignment distance. However, it is the rest of the inequalities shown there that allow us to prove satisfaction of the fairness constraints by $\phi_N$. This is achieved in the following Lemma.

\begin{lemma}\label{alg-feas-aux}
For all $j \in \mathcal{C}_N$, we have $d(j,\phi_N(j)) \leq \alpha \cdot d(j',\phi_N(j'))$ for all $j' \in \mathcal{S}_j$.
\end{lemma}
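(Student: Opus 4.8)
The plan is to fix a point $j \in \mathcal{C}_N$ and an arbitrary $j' \in \mathcal{S}_j$, and verify the inequality $d(j,\phi_N(j)) \leq \alpha \cdot d(j',\phi_N(j'))$ by a case analysis on the type of $j$ (type-1 or type-2), and within each, on which of Cases (A), (B), (C) of Algorithm \ref{alg-3} governs $j$ and $j'$. The workhorse will be Lemma \ref{alg-lem-1}, which bounds $d(j,\phi_N(j))$ from above and $d(j',\phi_N(j'))$ from below in each case, combined with the fact that $\alpha \geq 2$. First I would record the crude universal bound: by Lemma \ref{alg-lem-1}, $d(j',\phi_N(j')) > R$ whenever $j'$ is type-1 (Case A) or in Case (B), so in those situations $d(j,\phi_N(j)) \leq 5R < 2R \cdot \alpha^{-1}\cdot\ldots$ — wait, more carefully: $d(j,\phi_N(j)) \leq 5R$ always, and if $d(j',\phi_N(j')) > R$ we only get a factor of $5$, which is not enough when $\alpha < 5$. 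So the crude bound alone fails, and the point of the lemma is precisely that similar points land in \emph{correlated} cases.

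The key structural input I would invoke is \emph{where the similar points of $j$ can be}, which is dictated by $R \geq R_m$: since $d(j,j') \leq R_m \leq R$, the points $j$ and $j'$ are within distance $R$ of each other. I would then argue that this forces $\pi(j)$ and $\pi(j')$ to be related — in fact if $j$ is type-1 with $d(j,\pi(j)) \leq R$, then $d(j',\pi(j)) \leq d(j',j) + d(j,\pi(j)) \leq 2R$, and conversely. The crucial claim to establish is that $j$ and $j'$ cannot be "far apart in privilege": roughly, \textbf{if $j$ is type-2 (so its best center is $> R$ away) then $j'$ must also be far from every center except possibly $\pi(j)=\rho(j)$}, because any center within distance $R$ of $j'$ would be within distance $2R$ of $j$ — and I would need to check whether that contradicts $j$ being type-2 (Observation \ref{obs-4} says $d(j,c) > R$ for all $c \in S_N$, but $2R$-closeness is allowed, so this needs the finer Case (B) vs (C) split). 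The heart of the argument is matching: when $j$ is type-2 and falls in the "bad" Case (C) (assignment distance up to $5R$), I must show $j'$ gets assignment distance $> 2R$, so that $\frac{5R}{2R} = 2.5 \leq \alpha$ — and for this I'd use that $j'$ being close ($\leq R$) to some center $c \neq \pi(j)$ would make $d(j,c) \leq 2R$, putting $j$ in Case (B) not (C), a contradiction; hence $j'$ is itself type-2 and in Case (C) or is type-1 with $\pi(j') = \pi(j)$, and in all sub-cases $d(j',\phi_N(j')) > 2R$.

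The main obstacle I anticipate is the bookkeeping in the type-1 sub-cases where $\alpha$ is close to $2$: there, $d(j,\phi_N(j))$ can be as large as $4R$ (Case A) while $d(j',\phi_N(j'))$ could a priori be as small as just above $R$, giving a ratio approaching $4$, which violates the bound for $\alpha \in [2,4)$. To rule this out I would need to show that when $j$ is type-1 and gets a large assignment distance (meaning the second-closest center in $S_N$ is genuinely far, near $4R$), then $j'$ — being within $R$ of $j$ — is in an essentially symmetric situation, i.e., also type-1 with the same $\pi(j') = \pi(j)$, and its assignment in Case (A) is to the argmin over $S_N \setminus \{\pi(j)\}$, which by the triangle inequality satisfies $d(j',\phi_N(j')) \geq d(j,\phi_N(j)) - d(j,j') \geq d(j,\phi_N(j)) - R$; since $d(j,\phi_N(j)) > R$ this alone is too weak, so I expect to instead show $d(j',\phi_N(j')) \geq d(j,\phi_N(j))/2$ directly, perhaps by arguing that both $j$ and $j'$ route through the \emph{same} nearby center $c$ guaranteed by Observation \ref{obs-2} with $d(\pi(j),c) \leq 3R$, and that $d(j',c)$ and $d(j,c)$ differ by at most $R \leq d(j,\pi(j)) + \ldots$. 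Nailing down that the ratio never exceeds $2$ — rather than some larger constant — in these tight type-1/type-1 and type-2/type-2 configurations is where the real care is needed; I'd organize it as: (1) same-type same-$\pi$ pairs give ratio $\leq 2$ by a triangle-inequality argument using the common witness center, and (2) cross-type or different-$\pi$ pairs are shown to be impossible under $d(j,j') \leq R$ via Observations \ref{obs-1}, \ref{obs-4}.
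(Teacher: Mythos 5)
Your proposal correctly identifies the crux --- the crude bounds from Lemma \ref{alg-lem-1} give only a factor of $5$, so one must exploit the correlation between the situations of $j$ and $j'\in\mathcal{S}_j$ forced by $d(j,j')\le R$ --- but the concrete strategy has two genuine flaws. First, for the hardest configuration ($j$ type-2 assigned via Case (C)) you propose to show $d(j',\phi_N(j'))>2R$ and conclude via $\frac{5R}{2R}=2.5\le\alpha$; this is simply false for $\alpha\in[2,2.5)$, which is exactly the regime the lemma must cover (the paper's guarantee is for all $\alpha\ge 2$, and the experiments use $\alpha=2$). Second, your closing claim that ``cross-type or different-$\pi$ pairs are shown to be impossible under $d(j,j')\le R$'' is wrong: a type-1 point can have a type-2 similar neighbor (take $j$ within $R$ of a center and $j'$ within $R$ of $j$ but farther than $R$ from every center), and two type-1 points within distance $R$ can have distinct $\pi$'s since Observation \ref{obs-1} only forces centers to be more than $2R$ apart while $d(\pi(j),\pi(j'))$ could be as large as $3R$. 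These cases cannot be dismissed; the paper handles each of them explicitly.

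The missing idea is what might be called the \emph{candidate-center} (or ``potential assignment'') argument, which replaces all worst-case ratio comparisons. In Cases (A) and (C), $\phi_N(j)$ is an $\argmin$ over $S_N\setminus\{\pi(j)\}$, so for \emph{any} center $c_0$ in that set one has $d(j,\phi_N(j))\le d(j,c_0)$. The proof then chooses $c_0$ to be a center whose distance to $j'$ is already controlled by $d(j',\phi_N(j'))$ --- namely $\pi(j')$, or $\phi_N(j')$ itself, or the Case-(B) witness $c$ with $d(j',c)\le 2R$ --- after first verifying $c_0\ne\pi(j)$ (and deriving a contradiction with $j$ being in Case (C) when this fails). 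The chain $d(j,\phi_N(j))\le d(j,c_0)\le d(j,j')+d(j',c_0)\le 2\,d(j',\phi_N(j'))$ then closes every subcase with the exact constant $2$, using the two universal inequalities $d(j,j')\le R< d(j',\phi_N(j'))$ and $d(j',\pi(j'))\le d(j',\phi_N(j'))$ from Lemma \ref{alg-lem-1}. Only the pairing ``$j$ type-1, $j'$ type-2 via Case (C)'' is settled by a direct numeric comparison, and there it is $4R$ versus $2R$ (ratio exactly $2$), not $5R$ versus $2R$. Without this device your type-1/type-1 worry (ratio approaching $4$) and your type-2/Case-(C) step both remain unresolved, so the proposal as written does not yield the lemma.
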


\begin{proof}
Suppose we have some $j \in \mathcal{C}_N$ and some $j' \in \mathcal{S}_j$. The proof of the statement will be based on an exhaustive case analysis. Before we proceed, we mention two inequalities that we will repeatedly use. At first, $d(j,j') \leq d(j', \phi_N(j'))$, because $d(j,j') \leq R_m \leq R$ and by Lemma \ref{alg-lem-1} we have $d(j',\phi_N(j')) > R$. Moreover, $d(j', \pi(j')) \leq d(j', \phi_N(j'))$, again by using Lemma \ref{alg-lem-1}.

\begin{itemize}
    \item Suppose that $j$ is a type-1 point and $j'$ is also a type-1 point.
    
    At first let $\pi(j) \neq \pi(j')$. Then $j$ can potentially be assigned to $\pi(j')$, and therefore we have $d(j,\phi_N(j)) \leq d(j,\pi(j')) \leq d(j,j') + d(j',\pi(j')) \leq 2d(j',\phi_N(j')) \leq \alpha \cdot d(j',\phi_N(j'))$.
    
    Now let $\pi(j) = \pi(j')$. Because $j'$ is a type-1 point and gets assigned according to \textbf{Case (A)}, we know that $\phi_N(j') \neq \pi(j)$, Hence $j$ can potentially be assigned to $\phi_N(j')$. Therefore, $d(j,\phi_N(j)) \leq d(j,\phi_N(j')) \leq d(j,j') + d(j',\phi_N(j')) \leq 2d(j',\phi_N(j')) \leq \alpha \cdot d(j',\phi_N(j'))$. 
    
    \item Suppose that $j$ is a type-1 point and $j'$ is a type-2 point.
    
    At first assume $j'$ received its assignment via \textbf{Case (C)}. Then, by Lemma \ref{alg-lem-1} we know that $d(j', \phi_N(j')) > 2R$. In addition, again by Lemma \ref{alg-lem-1}, we have $d(j, \phi_N(j)) \leq 4R$. Thus, $d(j, \phi_N(j)) \leq 2d(j', \phi_N(j')) \leq \alpha \cdot d(j', \phi_N(j'))$. 
    
    Now assume that $j'$ received its assignment through \textbf{Case (B)}. Therefore, there exists $c \in S \setminus \{\pi(j')\}$ with $d(j', c) \leq 2R$. By the way \textbf{Case (B)} works and Observation \ref{obs-4}, we also have $d(j',\phi_N(j')) \geq \max(d(j',\pi(j')), d(j',c))$. Let us now see what happens when $\pi(j')=\pi(j)$. Then $c \neq \pi(j)$, and thus $j$ can potentially be assigned to $c$. Therefore, $d(j,\phi_N(j)) \leq d(j,c) \leq d(j,j') + d(j',c) \leq d(j,j') + d(j', \phi_N(j')) \leq 2 d(j', \phi_N(j')) \leq \alpha \cdot d(j', \phi_N(j'))$. On the other hand, if $\pi(j') \neq \pi(j)$, then $j$ can potentially get assigned to $\pi(j')$, and thus have $d(j,\phi_N(j)) \leq d(j,\pi(j')) \leq d(j,j') + d(j',\pi(j')) \leq \alpha \cdot d(j', \phi_N(j'))$.
    
    \item Suppose that $j$ is a type-2 point, and also gets its assignment via \textbf{Case (B)}. By Lemma \ref{alg-lem-1} we have $d(j,\phi_N(j)) \leq 2R$ and $d(j',\phi_N(j')) > R$. For $\alpha \geq 2$ the statement trivially follows.
    
    \item Suppose that $j$ is a type-2 point, $j'$ is a type-1 point, and $j$ gets its assignment via \textbf{Case (C)}.
    
    At first, assume that $\phi_N(j') \neq \pi(j)$. In this case $j$ can potentially get assigned to $\phi_N(j')$, and  $d(j,\phi_N(j)) \leq d(j,\phi_N(j')) \leq d(j,j') + d(j',\phi_N(j')) \leq \alpha \cdot d(j',\phi_N(j'))$. 
    
    Now assume that $\phi_N(j') = \pi(j)$. Because $j'$ is a type-1 points and so $\phi_N(j') \neq \pi(j')$, we can infer that $\pi(j) \neq \pi(j')$. Also, $d(j,\pi(j')) \leq d(j,j') + d(j',\pi(j')) \leq 2R$. However, the latter contradicts the assumption that $j$ got its assignment according to \textbf{Case (C)}. Therefore, we know that $\phi_N(j') \neq \pi(j)$ necessarily.
    
    \item Suppose that both $j, j'$ are type-2 points, and $j$ gets its assignment via \textbf{Case (C)}.
    
    At first, assume $\pi(j') \neq \pi(j)$. Then $j$ can potentially get assigned to $\pi(j')$, and therefore $d(j,\phi_N(j)) \leq d(j,\pi(j')) \leq d(j,j') + d(j',\pi(j')) \leq 2d(j', \phi_N(j')) \leq \alpha \cdot d(j', \phi_N(j'))$.
   
    Now let $\pi(j') = \pi(j)$. To begin with, assume that there exists a $c \in S \setminus \{\pi(j)\}$ such that $d(j',c) \leq 2R$. Moreover, because $c \neq \pi(j)$, $j$ can potentially get assigned to $c$, and thus $d(j,\phi_N(j)) \leq d(j,c) \leq d(j,j') + d(j',c) \leq d(j,j') + d(j',\phi_N(j')) \leq \alpha \cdot d(j',\phi_N(j'))$. To get $d(j',c) \leq d(j',\phi_N(j'))$ we simply used the way \textbf{Case (B)} works. Finally, suppose that $\forall c \in S \setminus \{\pi(j)\}$ we have $d(j',c) > 2R$. Then $\phi_N(j') \neq \pi(j)$ and thus $j$ can potentially get assigned to $\phi_N(j')$. Therefore, $d(j,\phi_N(j)) \leq d(j,\phi_N(j')) \leq d(j,j') + d(j', \phi_N(j')) \leq \alpha \cdot d(j,\phi_N(j'))$. \qedhere
\end{itemize}
\end{proof}

Combining Lemmas \ref{alg-lem-1} and \ref{alg-feas-aux} we immediately get the following.
\begin{lemma}\label{non-iso-feas}
After the execution of Algorithm \ref{alg-3}, for every $j \in \mathcal{C}_N$ we have that the constructed assignment $\phi_N$ will \textbf{1)} satisfy $j$'s fairness constraint, and \textbf{2)} guarantee $d(j,\phi_N(j)) \leq 5R$.
\end{lemma}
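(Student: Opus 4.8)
The plan is to derive the statement directly as a corollary of Lemma~\ref{alg-lem-1} and Lemma~\ref{alg-feas-aux}, which already contain all the technical content; the work here is purely bookkeeping. For part~\textbf{2)}, I would inspect the three cases of Lemma~\ref{alg-lem-1}: Case~(A) gives $d(j,\phi_N(j)) \leq 4R$, Case~(B) gives $d(j,\phi_N(j)) \leq 2R$, and Case~(C) gives $d(j,\phi_N(j)) \leq 5R$. Since every $j \in \mathcal{C}_N$ receives its assignment through exactly one of these cases, the uniform bound $d(j,\phi_N(j)) \leq 5R$ follows.

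For part~\textbf{1)}, I would first recall that by Corollary~\ref{balls-cont-2} we have $\mathcal{S}_j \subseteq \mathcal{C}_N$ for every $j \in \mathcal{C}_N$, so $d(j,\phi_N(j))$ and all the quantities $d(j',\phi_N(j'))$ for $j' \in \mathcal{S}_j$ are well-defined, and the fairness constraint of such a $j$ genuinely depends only on $\phi_N$ (the $\mathcal{C}_I$-assignment $\phi_I$ cannot interfere). If $\mathcal{S}_j = \emptyset$ the constraint is vacuous, so assume $\mathcal{S}_j \neq \emptyset$. For \prob{-PP}, Lemma~\ref{alg-feas-aux} states exactly that $d(j,\phi_N(j)) \leq \alpha\, d(j',\phi_N(j'))$ for all $j' \in \mathcal{S}_j$, which is constraint~(\ref{cons:PP}). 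For \prob{-AG}, I would average these per-point inequalities: summing $d(j,\phi_N(j)) \leq \alpha\, d(j',\phi_N(j'))$ over all $j' \in \mathcal{S}_j$ and dividing by $|\mathcal{S}_j|$ yields $d(j,\phi_N(j)) \leq \alpha \cdot \frac{\sum_{j' \in \mathcal{S}_j} d(j',\phi_N(j'))}{|\mathcal{S}_j|}$, i.e.\ constraint~(\ref{cons:AG}). Hence $\phi_N$ satisfies $j$'s fairness constraint under either variant.

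There is essentially no remaining obstacle: the exhaustive case analysis on the positions of the similar points and on the interplay between type-1 and type-2 points was already resolved inside the proof of Lemma~\ref{alg-feas-aux}, and the distance bounds inside Lemma~\ref{alg-lem-1}. The only subtlety worth flagging is the consistency claim used implicitly above --- that a point of $\mathcal{C}_N$ never has a similar point in $\mathcal{C}_I$, so that $\phi_I$ cannot affect its constraint --- but this is precisely Corollary~\ref{balls-cont-2}, which itself follows from Corollary~\ref{balls-cont} together with $R \geq R_m$. Thus the lemma is a clean corollary obtained by concatenating the two preceding lemmas.
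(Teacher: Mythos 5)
Your proposal is correct and follows essentially the same route as the paper, which simply states that the lemma follows immediately by combining Lemma~\ref{alg-lem-1} (for the $5R$ bound) and Lemma~\ref{alg-feas-aux} (for the fairness constraints); your additional bookkeeping --- invoking Corollary~\ref{balls-cont-2} to decouple $\phi_N$ from $\phi_I$ and averaging the per-point inequalities to obtain constraint~(\ref{cons:AG}) --- is exactly the implicit reasoning the paper leaves to the reader.
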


Finally, by combining Lemmas \ref{iso-feas}, \ref{non-iso-feas} and \ref{iso-k} with the fact that the number of centers we use is $|S'_I|+|S_N|$, we see that we provide a procedure that for a guess $R \geq R_m$ works as follows. It either returns a feasible solution with maximum assignment distance $5R$, or returns an infeasibility message that indicates $R < R^*$. As mentioned earlier, this concludes the proof of Theorem \ref{thm:alg-res}.

\subsection{Cases with bounded PoF}\label{sec:PoF}

As we have already shown in Theorem \ref{thm:unbound-PoF}, the Price of Fairness for both variants of \prob{} can in general be unbounded. However, we are going to demonstrate that in certain scenarios we can provably achieve solutions with bounded PoF. This means that the objective function value of the solution will be comparable to the optimal unfair value, up to some constant factor.

For the given instance of \prob{}, let $R^*_{unf}$ be the value of the optimal $k$-center solution, when we drop the fairness constraints from the problem's requirements.

The first scenario we study is a small modification to our main algorithm, which consists of only changing Algorithm \ref{alg-2}, and thus the construction of $S'_I$ and $\phi_I$. Specifically, if for some $c \in S_I$ we have $|G_c| = 1$, then we use $c$ as a center and set $\phi_I(c) = c$. If for some $c \in S_I$ we have $|G_c| \geq 2$, then we immediately use the procedure of Lemma \ref{thm:a=2-aux}, without checking if only one point of $G_c$ can yield a feasible solution. This modification yields the results of Theorem \ref{thm:pof-1}. 

\begin{proof}[Proof of Theorem \ref{thm:pof-1}]
At first, note that due to Assumption \ref{icml-asm} we have $R_m \leq \psi R^*_{unf}$, and hence the guess $\psi R^*_{unf}$ will be among the ones we test; recall that we test guesses $R \in [R_m, \max_{j,j'}d(j,j')]$. Assume for now that $\psi \geq 1$. For the iteration where the guess is $\psi R^*_{unf}$, Lemmas \ref{iso-feas} and \ref{non-iso-feas} will clearly hold, thus ensuring that the returned solution has value $5\psi R^*_{unf}$, and the constructed assignment satisfies all fairness constraints. The only thing left to analyze is the number of centers we end up using when the guess is $\psi R^*_{unf}$. Combining Observation \ref{obs-1}, the fact that $\psi \geq 1$ and the fact that the optimal unfair solution uses at most $k$ centers, we immediately get $|S_I| + |S_N| \leq k$. On the other hand, observe that the number of centers our modified algorithm uses is in the worst case is $2|S_I| + |S_N|$, and therefore at most $2k$. 

When $\psi < 1$, then we know for sure that $R^{*}_{unf}$ will be among the tested guesses. In that case, the previous analysis follows through, with the only difference being that now the maximum radius of our returned solution would be $5 R^*_{unf}$.

Finally, to conclude the proof, we just need to make sure that for a radius guess that resulted in $|S_I| + |S_N| > k$, we return an infeasibility message.
\end{proof}

Although the result of Theorem \ref{thm:pof-1} is interesting in the sense of showing a scenario with bounded PoF, it is not a true approximation algorithm, because we end up violating the number of chosen centers by a multiplicative factor of $2$. We are now going to demonstrate another case, where we achieve a true feasible solution to \prob{-AG}, that additionally enjoys a bounded PoF.

In this scenario, the radius $R_j$ is the same for all points, i.e., for all $j \in \mathcal{C}$ we have $R_j = R_d$ for some $R_d$. Our algorithm here is actually identical to the one presented in the previous subsection, and the difficulty in proving Theorem \ref{thm:pof-2} for it lies only on the analysis.

\begin{proof}[Proof of Theorem \ref{thm:pof-2}]
At first, note that due to Assumption \ref{icml-asm} we have $R_d \leq \psi R^*_{unf}$, and hence the guess $\psi R^*_{unf}$ will be among the ones we test. As in the proof of Theorem \ref{thm:pof-1} we can solely focus on the $\psi \geq 1$ case. For the iteration of $\psi R^*_{unf}$, Lemma \ref{non-iso-feas} clearly holds. We will show that Lemma \ref{iso-feas} will hold as well, and furthermore that Algorithm \ref{alg-2} will always pick \emph{just one} center in each $G_c$ for $c \in S_I$. This will immediately imply that the returned solution has value at most $5 \psi R^*_{unf}$, all constraints (\ref{cons:AG}) are satisfied, and the centers we end up using are exactly $|S_I| + |S_N|$. Finally, note that by Observation \ref{obs-1}, the fact that $\psi \geq 1$ and the fact that the optimal unfair solution uses at most $k$ centers, we will also have $|S_I| + |S_N| \leq k$.

Therefore, all we need to show is that for every $c \in S_I$, Algorithm \ref{alg-2} is able to find exactly one center that satisfies constraint (\ref{cons:AG}) for all $j \in G_c$ (recall that $\mathcal{S}_j \subseteq G_c$). To do that, we prove that there exists an $x \in G_c$, such that that for all $j \in G_c$ we have $\sum_{j' \in \mathcal{S}_{j}}d(j,j') \leq \sum_{j' \in \mathcal{S}_{j}}d(j',x)$. This suffices to prove the desired statement. To see why, assume that we make $x$ the chosen center of $G_c$, and assign all points of $G_c$ to it. Then for any point $j \in G_c$ and any $j' \in \mathcal{S}_{j}$ we have $d(j,x) \leq d(j,j') + d(j',x)$ by the triangle inequality. Summing over all $j' \in \mathcal{S}_{j}$ and using the property of $x$ gives:
\begin{align}
    d(j,x) &\leq \frac{1}{|\mathcal{S}_{j}|}\sum_{j' \in \mathcal{S}_{j}}d(j,j') + \frac{1}{|\mathcal{S}_{j}|}\sum_{j' \in \mathcal{S}_{j}}d(j',x) \leq \frac{2}{|\mathcal{S}_{j}|}\sum_{j' \in \mathcal{S}_{j}}d(j',x) \leq \frac{\alpha}{|\mathcal{S}_{j}|}\sum_{j' \in \mathcal{S}_{j}}d(j',x) \notag 
\end{align}

For the sake of contradiction, assume now that for all $x \in G_c$ there exists a point $j \in G_c$ such that $\sum_{j' \in \mathcal{S}_{j}}d(j,j') > \sum_{j' \in \mathcal{S}_{j}}d(j',x)$. Based on this, we can create a dependency graph, where every point of $G_c$ is a vertex, and there is a directed edge from $x$ to $j$ if $\sum_{j' \in \mathcal{S}_{j}}d(j,j') > \sum_{j' \in \mathcal{S}_{j}}d(j',x)$. The assumption for the contradiction implies that this dependency graph will contain a directed cycle $x_1, x_2, \hdots, x_r$, for which we have $\sum_{j' \in \mathcal{S}_{x_t}}d(x_t,j') > \sum_{j' \in \mathcal{S}_{x_t}}d(j',x_{t-1})$ for all $t \in [2,r+1]$, assuming that $x_{r+1}= x_1$. If we add all the above inequalities we get
\begin{align}
    \sum^{r+1}_{t=2} \sum_{j' \in \mathcal{S}_{x_t}}d(j',x_t) > \sum^{r+1}_{t=2} \sum_{j' \in \mathcal{S}_{x_t}}d(j',x_{t-1}) \notag
\end{align}
Now focus on any $j'$, and see that its contribution in the LHS of the above inequality is $A = \sum_{t: j' \in \mathcal{S}_{x_t}}d(j',x_t)$, and in the RHS is $B = \sum_{t: j' \in \mathcal{S}_{x_t}}d(j',x_{t-1})$. We argue that $A > B$ is impossible, and thus reach a contradiction. If $A > B$, we can first subtract from both $A$ and $B$ the common terms appearing in the sums. Then, in what is left of $A$ we will only have terms $d(j',x_t)$ being added, for $j' \in \mathcal{S}_{x_t}$. In what is left of $B$ we will only have terms $d(j',x_{t-1})$ being added, but for which $j' \notin \mathcal{S}_{x_{t-1}}$. Note also that the number of leftover terms is the same in both $A$ and $B$. Moreover, since the similarity radius is the same for all points, for any two points $z,y \in G_c$ with $j' \in \mathcal{S}_z$ and $j' \notin \mathcal{S}_y$, we have $d(j',z) < d(j',y)$. Hence we reached the desired contradiction.
\end{proof}
\section{Solving the assignment problem}\label{sec:assgn}

In this section we address the assignment problem for \prob{}. Specifically, for an instance with $\alpha, k \geq 2$, if we are given the set of centers $S^*$ used in the optimal solution, can we efficiently find the optimal assignment $\phi^*: \mathcal{C} \mapsto S^*$? In other words, if $R^*$ is the value of the optimal solution, we want to compute $\phi^*$ such that \textbf{1)} $\phi^*$ satisfies the appropriate fairness constraint for all points, and \textbf{2)} for every $j \in \mathcal{C}$ we have $d(j,\phi^*(j)) \leq R^*$. In what follows, we demonstrate in full detail a procedure that achieves this for \prob{-PP}. A similar process can handle \prob{-AG}, but for the sake of not repeating the same arguments, we are only going to sketch this.

Before we proceed with our assignment algorithm for \prob{-PP}, note that w.l.o.g. we can always assume that the optimal value $R^*$ is known. This is because there are only polynomially many options for it, and thus we can efficiently guess the optimal one. Our process is presented in Algorithm \ref{alg-4}, and it works iteratively. The high-level idea is that it always maintains an assignment of value at most $R^*$, and in each iteration it corrects one violated fairness constraint. As we show later, a polynomial number of iterations suffices in order to reach a feasible assignment.

\begin{algorithm}[t]
For every $j \in \mathcal{C}$ set $\phi(j) \gets \argmax_{i \in S^*: d(i,j) \leq R^*}d(i,j)$\;
\While{there exists a $j \in \mathcal{C}$ with a $j' \in \mathcal{S}_j$ such that $d(j, \phi(j)) > \alpha d(j', \phi(j')$} {
Find such a pair $j \in \mathcal{C}$ and $j' \in \mathcal{S}_j$\;
Let $\Delta_{j,j'} = \{i \in S^*: d(i,j) < d(j,\phi(j)) \text{ and } d(i,j) \leq \alpha d(j', \phi(j'))\}$\;
Set $\phi(j) \gets \argmax_{i \in \Delta_{j,j'}}d(i,j)$\;
}
Return $\phi$\;
\caption{Solving the assignment problem for \prob{-PP}}\label{alg-4}
\end{algorithm}

\begin{lemma}\label{s4-lem-1}
Every time the condition of the while loop in Algorithm \ref{alg-4} is checked, we have $d(\phi(j),j) \geq d(\phi^*(j),j)$ for every $j \in \mathcal{C}$.
\end{lemma}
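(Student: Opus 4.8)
The plan is to prove this by induction on the number of iterations of the while loop. The invariant I want to maintain is $d(\phi(j),j) \geq d(\phi^*(j),j)$ for all $j \in \mathcal{C}$, checked each time control reaches the loop condition. The base case is the initialization line: $\phi(j)$ is set to $\argmax_{i \in S^*: d(i,j) \leq R^*} d(i,j)$, i.e. the \emph{farthest} center within radius $R^*$ of $j$. Since $\phi^*(j) \in S^*$ and $d(\phi^*(j),j) \leq R^*$ (because $\phi^*$ is the optimal assignment of value $R^*$), the center $\phi^*(j)$ is one of the candidates in the $\argmax$, so $d(\phi(j),j) \geq d(\phi^*(j),j)$. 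Note this also silently uses that the $\argmax$ set is nonempty for every $j$, which holds for the same reason; this is worth stating explicitly since Algorithm \ref{alg-4} would be ill-defined otherwise.

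For the inductive step, suppose the invariant holds at the start of some iteration, and the loop body executes: it picks a violating pair $j \in \mathcal{C}$, $j' \in \mathcal{S}_j$ with $d(j,\phi(j)) > \alpha\, d(j',\phi(j'))$, and reassigns only $\phi(j)$ to $\argmax_{i \in \Delta_{j,j'}} d(i,j)$, where $\Delta_{j,j'} = \{i \in S^*: d(i,j) < d(j,\phi(j)) \text{ and } d(i,j) \leq \alpha\, d(j',\phi(j'))\}$. Since no other point's assignment changes, the invariant is preserved for all points other than $j$. For $j$ itself, I need two things: first, that $\Delta_{j,j'} \neq \emptyset$ so the new assignment is well-defined; second, that $\phi^*(j) \in \Delta_{j,j'}$, which immediately gives $d(\phi(j),j) = \max_{i \in \Delta_{j,j'}} d(i,j) \geq d(\phi^*(j),j)$ and closes the induction (the first point follows from the second). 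To show $\phi^*(j) \in \Delta_{j,j'}$: we have $d(\phi^*(j),j) \leq d(j,\phi(j))$ is not quite what we need — we need the \emph{strict} inequality $d(\phi^*(j),j) < d(j,\phi(j))$ and also $d(\phi^*(j),j) \leq \alpha\, d(j',\phi(j'))$. For the second, use that $\phi^*$ is feasible for \prob{-PP}, so $d(\phi^*(j),j) \leq \alpha\, d(\phi^*(j'),j')$, combined with the inductive hypothesis applied to $j'$, namely $d(\phi^*(j'),j') \leq d(j',\phi(j'))$; chaining gives $d(\phi^*(j),j) \leq \alpha\, d(j',\phi(j'))$. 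For the strict inequality, combine this with the violation condition: $d(\phi^*(j),j) \leq \alpha\, d(j',\phi(j')) < d(j,\phi(j))$.

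The main obstacle — and really the only subtle point — is making sure the strict inequality in the definition of $\Delta_{j,j'}$ is actually satisfied by $\phi^*(j)$; this is where the violation hypothesis $d(j,\phi(j)) > \alpha\, d(j',\phi(j'))$ gets used, and it is crucial that the loop only fires on genuinely violating pairs. Everything else is bookkeeping: the invariant is vacuously/trivially preserved for untouched points, and the feasibility of $\phi^*$ plus the inductive hypothesis on $j'$ do all the work for the touched point. One should also remark that the invariant is exactly what is needed to later argue the loop terminates in polynomially many steps (each reassignment strictly decreases $d(j,\phi(j))$ while staying above $d(\phi^*(j),j)$, and assignment distances take finitely many values), though that is the content of a subsequent lemma, not this one.
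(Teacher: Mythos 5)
Your proof is correct and follows essentially the same route as the paper's: induction on loop iterations, with the key step being that $\phi^*(j) \in \Delta_{j,j'}$, which you establish from the feasibility of $\phi^*$, the inductive hypothesis applied to $j'$, and the violation condition (the latter yielding the needed strict inequality). The only differences are cosmetic — you make explicit a couple of points the paper leaves implicit, namely the nonemptiness of the initial $\argmax$ set and the trivial preservation of the invariant for untouched points.
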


\begin{proof}
We are going to prove this via induction. For the first time we check the condition, the statement is obviously true by the way we initialized the mapping $\phi$ before the start of the loop, and the fact that $d(j,\phi^*(j)) \leq R^*$ for all $j \in \mathcal{C}$.

Consider now the $t^{\textbf{th}}$ time we check the condition, for which by the inductive hypothesis the statement of the lemma holds. If at that time no violated fairness constraint is found, then we are done. Hence, we need to focus on the case where the main body of the while loop is executed, and show that after the changes that occur in $\phi$, the statement will still be satisfied for the $(t+1)^{\textbf{th}}$ time we will check the condition.

Let $j_t$ be the point chosen at that iteration, with $j'_t \in \mathcal{S}_{j_t}$ the point with $d(j_t, \phi(j_t)) > \alpha d(j'_t, \phi(j'_t))$. By the inductive hypothesis we have $d(j'_t, \phi(j'_t)) \geq d(j'_t, \phi^*(j'_t))$. Combining the two previous inequalities gives $d(j_t,\phi(j_t)) > \alpha d(j'_t, \phi^*(j'_t))$. Now because the optimal assignment satisfies $d(j_t, \phi^*(j_t)) \leq \alpha d(j'_t, \phi^*(j'_t))$, we finally get $d(j_t, \phi(j_t)) > d(j_t, \phi^*(j_t))$. In addition, we have $d(j_t, \phi^*(j_t)) \leq \alpha d(j'_t, \phi^*(j'_t)) \leq \alpha d(j'_t, \phi(j'_t))$. Therefore, we see that $\phi^*(j_t) \in \Delta_{j_t,j'_t}$. Let now $\phi'(j_t)$ be the updated assignment for $j_t$ after the end of the iteration. From the way we update the assignment for $j_t$ and the fact that $\phi^*(j_t) \in \Delta_{j_t,j'_t}$, we infer that $d(\phi'(j_t),j_t) \geq d(\phi^*(j_t),j_t)$.
\end{proof}

\begin{theorem}
Algorithm \ref{alg-4} terminates within $|\mathcal{C}| |S^*|$ iterations, and the final assignment $\phi$ satisfies: \textbf{1)} $d(j,\phi(j)) \leq R^*$ for all $j \in \mathcal{C}$, and \textbf{2)} $d(j,\phi(j)) \leq \alpha d(j',\phi(j')$ for all $j \in \mathcal{C}$ and $j' \in \mathcal{S}_j$.
\end{theorem}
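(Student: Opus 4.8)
The plan is to prove, in this order: that the algorithm is well-defined; that the invariant $d(j,\phi(j)) \le R^*$ is maintained every time the while-condition is checked (which immediately yields property~1); that the loop runs for at most $|\mathcal{C}||S^*|$ iterations; and finally that property~2 holds on exit. Well-definedness has two parts. The initial assignment is sound because $\phi^*(j)$ witnesses that $\{i \in S^* : d(i,j) \le R^*\} \ne \emptyset$ for every $j$, so the opening $\argmax$ is over a nonempty set. Inside the loop, when a violating pair $j_t, j'_t$ is found, the set $\Delta_{j_t,j'_t}$ is nonempty: reusing the computation carried out in the proof of Lemma~\ref{s4-lem-1}, the invariant of that lemma gives $d(j_t,\phi^*(j_t)) \le \alpha\, d(j'_t,\phi^*(j'_t)) \le \alpha\, d(j'_t,\phi(j'_t)) < d(j_t,\phi(j_t))$, hence $\phi^*(j_t) \in \Delta_{j_t,j'_t}$, so the update $\phi(j_t) \gets \argmax_{i \in \Delta_{j_t,j'_t}} d(i,j_t)$ is well-defined.

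For property~1 I would induct on the number of times the while-condition is evaluated. Initially $d(j,\phi(j)) \le R^*$ by construction. If the body executes, only $\phi(j_t)$ changes, and every candidate $i \in \Delta_{j_t,j'_t}$ satisfies $d(i,j_t) < d(j_t,\phi(j_t)) \le R^*$; thus after the update $d(j_t,\phi(j_t))$ is still at most $R^*$, and in fact strictly smaller than before, while all other assignment distances are untouched. So the invariant persists, and in particular it holds for the final $\phi$.

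For termination, the crucial observation is that for each fixed point $j$ the value $d(j,\phi(j))$ always lies in the set $\{d(i,j) : i \in S^*\}$, which has size at most $|S^*|$; moreover, across the whole run this value never increases, since in any iteration only $\phi(j_t)$ changes and that change strictly decreases $d(j_t,\phi(j_t))$ by the previous paragraph. Hence each point $j$ can be selected as the violating point $j_t$ at most $|S^*|$ times, so the total number of iterations is at most $|\mathcal{C}||S^*|$. When the loop exits, the negation of its condition says exactly that $d(j,\phi(j)) \le \alpha\, d(j',\phi(j'))$ for all $j \in \mathcal{C}$ and all $j' \in \mathcal{S}_j$, which is property~2.

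The only delicate point is the progress step: we must be sure that every time a violated fairness constraint is detected there is a strictly closer admissible center to which $j_t$ can be reassigned, and this rests entirely on Lemma~\ref{s4-lem-1} — i.e., on the fact that the (unknown) optimal assignment $\phi^*$ remains a legitimate fallback for $j_t$ at every step, keeping $\Delta_{j_t,j'_t}$ nonempty. Everything after that is bookkeeping: the monotonicity of each $d(j,\phi(j))$ along the run both preserves the $R^*$ bound and furnishes the iteration count.
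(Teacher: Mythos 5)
Your proposal is correct and follows essentially the same route as the paper: both use Lemma~\ref{s4-lem-1} to guarantee that $\phi^*(j_t)\in\Delta_{j_t,j'_t}$ (so the update is well-defined and strictly decreases $d(j_t,\phi(j_t))$), preserve the $R^*$ bound by the definition of $\Delta$, and bound the iteration count by the fact that each point's assignment distance takes at most $|S^*|$ values and strictly decreases whenever that point is selected. Your version merely makes the well-definedness and the finiteness of the value set more explicit than the paper does.
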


\begin{proof}
From the condition of the while loop we know that when the algorithm terminates, the fairness constraints will be satisfied by the mapping $\phi$. Also, because we never assign a point to a center that is further than $R^*$ from it, we know that $\phi$ achieves the optimal value.

Now we are going to count the total possible number of iterations. We do that by considering how many times we changed the assignment of every single point $j$, i.e., how many times an iteration tried to fix one of $j$'s violated constraints. By Lemma \ref{s4-lem-1}, we see that for any $j$ the minimum possible assignment distance we can provide to it is $d(j,\phi^*(j))$. Observe that if at any moment $d(j,\phi(j)) = d(j,\phi^*(j))$, then Lemma \ref{s4-lem-1} guarantees that $j$'s assignment will never change again. This is because for every $j' \in \mathcal{S}_j$ we always have $d(j',\phi(j')) \geq d(j',\phi^*(j'))$, and thus using the properties of the optimal assignment we get $d(j,\phi(j)) = d(j,\phi^*(j)) \leq \alpha d(j',\phi^*(j')) \leq \alpha d(j',\phi(j'))$. 

On the other hand, if at some point $d(j,\phi(j)) > d(j,\phi^*(j))$, then one of $j$'s fairness constraints might be violated, and hence we might end up using an iteration to fix it. In this case, let $j' \in \mathcal{S}_j$ the point causing the problematic situation. In addition, note that Lemma \ref{s4-lem-1} and the properties of the optimal solution ensure that $d(j_t, \phi^*(j_t)) \leq \alpha d(j'_t, \phi^*(j'_t)) \leq \alpha d(j'_t, \phi(j'_t))$. Thus, for this iteration $\phi^*(j) \in \Delta_{j,j'}$, and the new assignment distance of $j$ will be strictly smaller than the one it had at the beginning of the iteration. Thus, $j$ can be chosen in at most $|S^*|$ iterations.
\end{proof}

The assignment procedure for \prob{-AG} is almost identical to Algorithm \ref{alg-4}, with the only difference being that we should instead be looking for violated constraints (\ref{cons:AG}). In addition, the analysis of that algorithm remains identical to that of Algorithm \ref{alg-4}.
\section{Experimental evaluation}\label{sec:exps}

We implemented all algorithms in Python 3.8 and ran our experiments on Intel Xeon (Ivy Bridge) E3-12 @ 2.4 GHz with 20 cores and 96 GB 1200 MHz DDR4 memory. Our code can be found \href{https://github.com/chakrabarti/equitable_clustering}{here}.

\textbf{Datasets:} We used 5 datasets from the UCI Machine Learning Repository \cite{Dua:2019}, namely: \textbf{(1)} Bank-4,521 points \cite{moro}, \textbf{(2)} Adult-32,561 points \cite{kohavi}, \textbf{(3)} Creditcard-30,000 points \cite{yeh}, \textbf{(4)} Census1990-2,458,285 points \cite{meek} and \textbf{(5)} Diabetes-101,766 points \cite{strack}. From Adult, Creditcard, Census and Diabetes we uniformly subsampled $25,000$ points, and performed our experiments with respect to those sampled sets. In order to construct the distances between points, we removed non-numeric features, standardized each of the remaining features, took the Euclidean distances between these modified points, and then normalized the distances to be in $[0,1]$ for each dataset (by dividing the distances for a given dataset by the maximum distance between any two points). 

\textbf{Algorithms:} We first implemented the two versions of the algorithm of Theorem \ref{thm:alg-res}, one solving \prob{-AG} and the other \prob{-PP}. We call Alg-AG the variant solving \prob{-AG}, and Alg-PP the variant solving \prob{-PP}. Furthermore, we implemented the algorithm of Theorem \ref{thm:pof-1} and we refer to this as Pseudo-PoF-Alg. Finally, as baselines we used our own implementations of two ``unfair'' $k$-center algorithms, specifically the $2$-approximation of \cite{Hochbaum1985} and the $2$-approximation of \cite{Gonzalez1985}.

\textbf{Range of $k$ and value of fairness parameter $\alpha$:} We ran all of our experiments for every value of $k$ in $\{2,4,8,16,32,64,128\}$, and in all our simulations we set $\alpha = 2$ for constraints (\ref{cons:PP}) and (\ref{cons:AG}). We did not test any other value for $\alpha$, since in practice $\alpha > 2$ is unsuitable if reasonably strong fairness considerations are at play. 

\textbf{Constructing the similarity sets: } For each combination of dataset and value of $k$ that we are interested in, we need to construct the similarity sets $\mathcal{S}_j$, such that they satisfy Assumption \ref{icml-asm}. Our first step in doing so, was utilizing the filtering procedure from \cite{Hochbaum1985}, which for a given instance (combination of a dataset and a value $k$) returns a value $R_f$. If $R^*_{unf}$ is the value of the optimal ``unfair'' $k$-center solution for the instance, the aforementioned filtering guarantees that $R_f \leq R^*_{unf}$. Then, for each point $j$ we drew $R_j$ uniformly at random from $[0,2R_f]$, and then set $\mathcal{S}_j =  \{j'~|~ d(j,j') \leq R_j\}$. There were two reasons for constructing the sets $\mathcal{S}_j$ in this way. At first, this approach agrees with the canonical case for $\psi$. As described in Section \ref{sec:asm}, $\psi = 2$ is the most well-justified instantiation of Assumption \ref{icml-asm}. Second, this approach forces non-uniformity in the values of $R_j$, and thus we are able to test our algorithms in the most general setting (for instance the uniform setting described in Theorem \ref{thm:pof-2} is more restricted and less realistic). 

\textbf{Evaluated Metrics:} Let $S$ be the set of chosen centers and $\phi: \mathcal{C} \mapsto S$ the corresponding assignment function, that constituted the solution we got when we ran some particular algorithm on some problem instance. The quantities we evaluate are:
\begin{itemize}
    \item \textbf{Maximum assignment distance} $(\max_{j \in \mathcal{C}}d(j,\phi(j))$: This is the actual objective function value of the returned solution.
    \item \textbf{Satisfaction of constraint (\ref{cons:PP})}:
    Here for each $j$ we define $f^{\text{PP}}_j = \max_{j' \in \mathcal{S}_j}\frac{d(j,\phi(j))}{d(j',\phi(j'))}$.
    \item \textbf{Satisfaction of constraint (\ref{cons:AG})}: Here for each $j$ we define $f^{\text{AG}}_j = \frac{|\mathcal{S}_j|d(j,\phi(j))}{\sum_{j' \in \mathcal{S}_j}d(j',\phi(j'))}$.
\end{itemize}

We now present our results that involve running all 5 mentioned algorithms on the Adult dataset. The corresponding plots for the other four datasets can be found in Appendix \ref{sec:appendix}, and they exhibit the exact behavior as the ones displayed here. In addition, the maximum runtime encountered in all our simulations was approximately 30 minutes (running Alg-PP on Census1990), and the bottleneck in all executions was computing the pairwise distances and not running the algorithms.

\begin{figure}[t]
\centering
\includegraphics[scale=0.2]{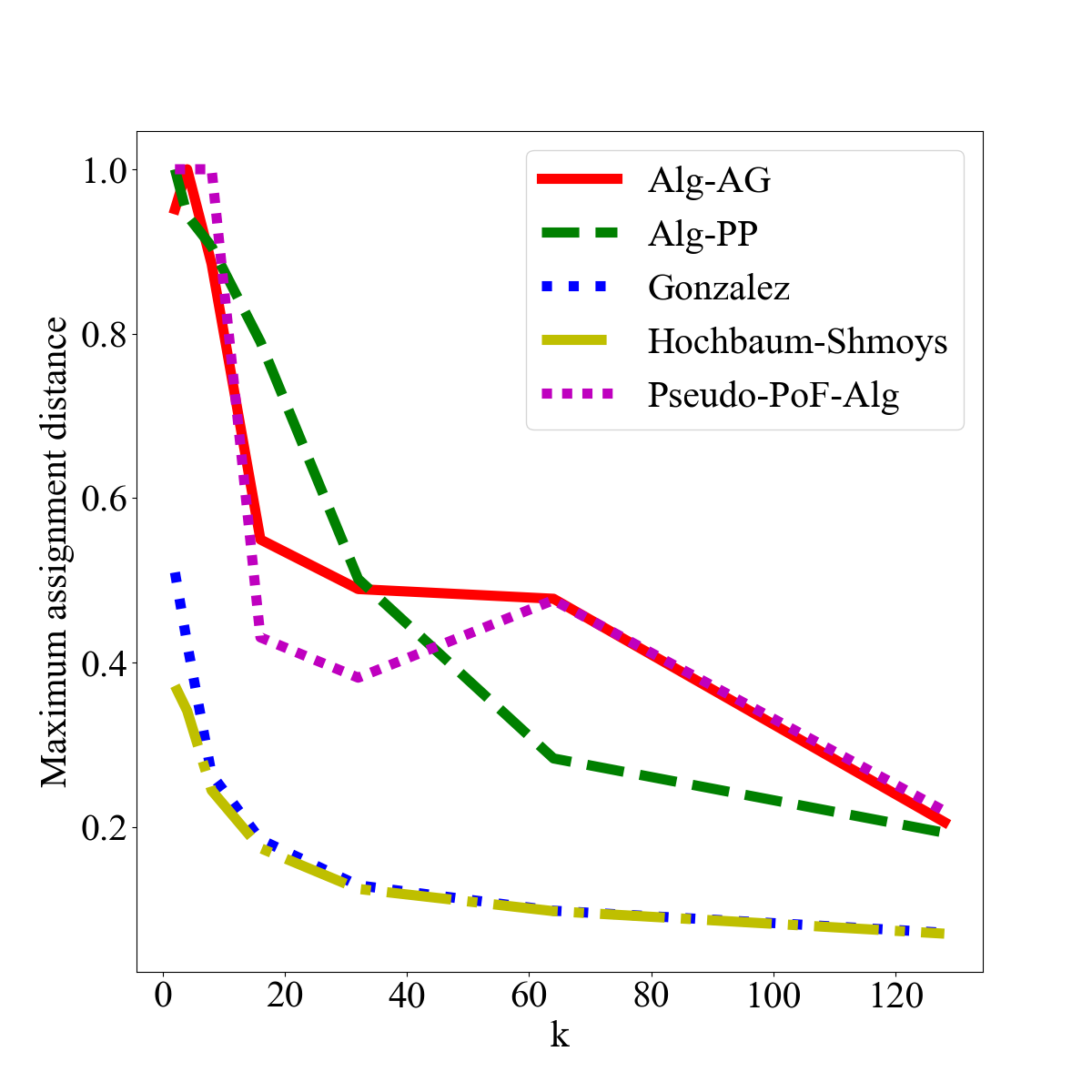}
\caption{Maximum assignment distance for all algorithms}
\label{adult-1}
\end{figure}

In Figure \ref{adult-1} we present the maximum assignment distance as a function of $k$ for all algorithms. At first, we observe that even our algorithms with no PoF guarantees, i.e., Alg-PP and Alg-AG, perform very well in terms of an empirical PoF with respect to the baseline solutions. In addition, we want to compare the objective values of Alg-PP and Alg-AG. Recall that since a solution to \prob{-PP} also constitutes a solution to \prob{-AG}, we are theoretically expecting Alg-AG to perform better. However, we see that in practice there is no clear-cut winner, and hence the use of Alg-PP is highly recommended, since the notion of fairness guaranteed by that algorithm is much stronger. 

\begin{figure}[h]
\begin{subfigure}{.33\textwidth}
  \centering
  \includegraphics[width=.99\linewidth]{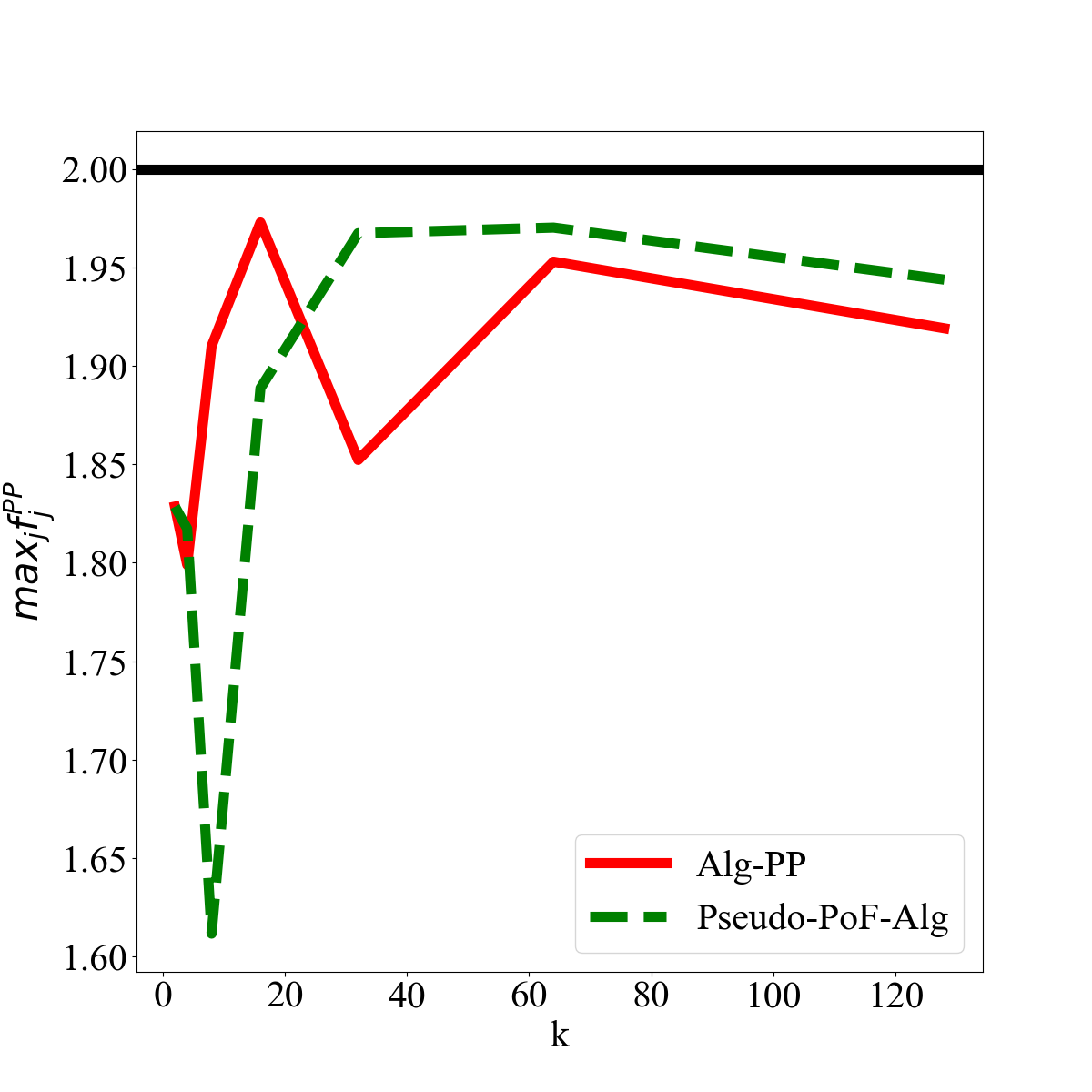}
  \caption{}
  \label{adult-2a}
\end{subfigure}%
\begin{subfigure}{.33\textwidth}
  \centering
  \includegraphics[width=.99\linewidth]{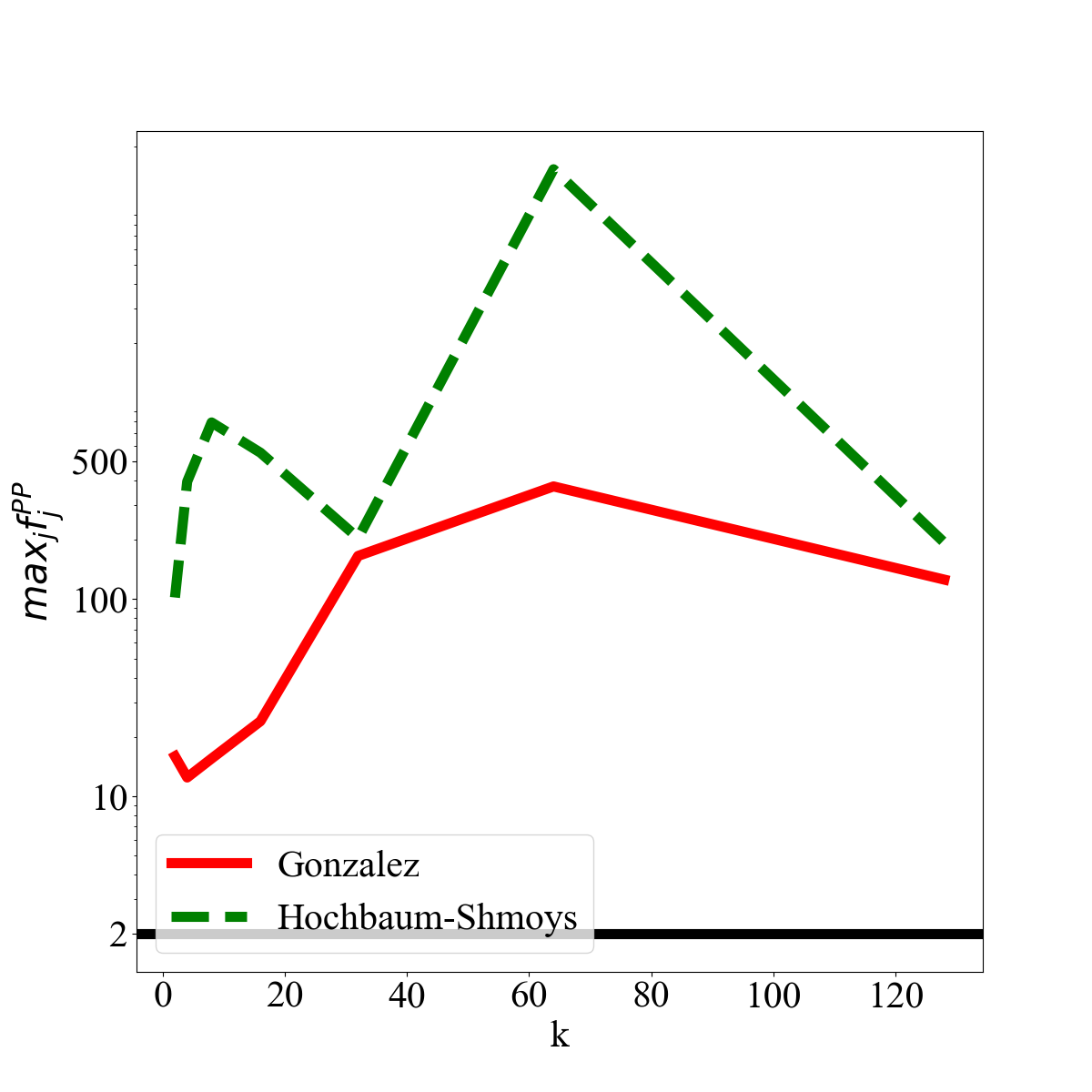}
  \caption{}
  \label{adult-2b}
\end{subfigure}
\begin{subfigure}{.33\textwidth}
  \centering
  \includegraphics[width=.99\linewidth]{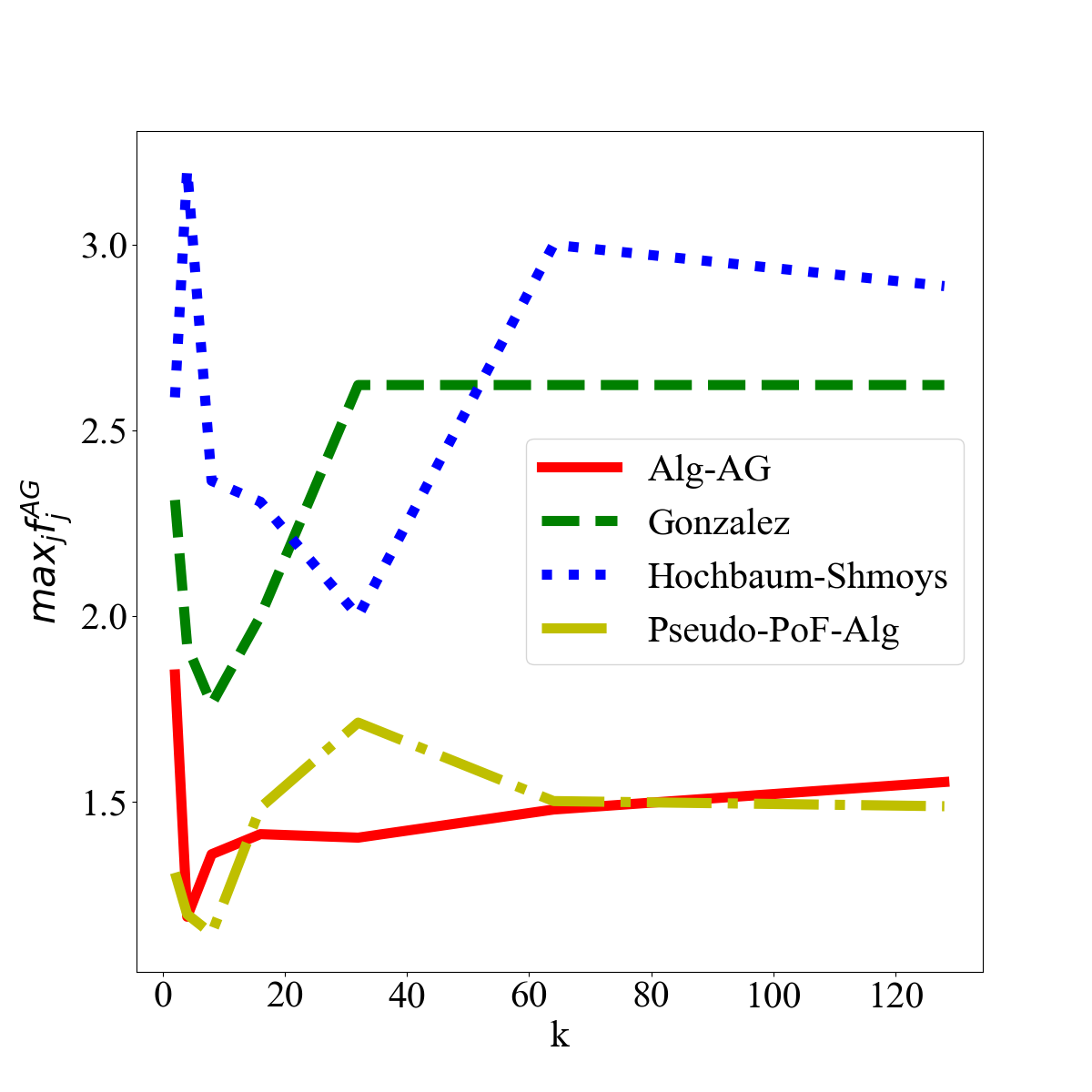}
  \caption{}
  \label{adult-2c}
\end{subfigure}
\caption{Satisfaction of fairness constraints}
\label{adult-2}
\end{figure}

In Figure \ref{adult-2} we demonstrate how all algorithms perform in terms of the fairness constraints.\footnote{In these plots, for the two baseline algorithms we excluded points with $f^{PP}_j = +\infty$ or $f^{AG}_j = +\infty$ in the computation of $\max_{j} f^{PP}_j$ and $\max_{j} f^{AG}_j$. In other words, we were very lenient with the two baselines.} Figure \ref{adult-2a} shows $\max_{j} f^{PP}_j$ as a function of $k$ for our two algorithms for \prob{-PP}, i.e., Alg-PP and Pseudo-PoF-Alg. Here we see that as the theory suggests, our algorithms always satisfy constraint (\ref{cons:PP}) and have $\max_j f^{PP}_j \leq 2$. On the other hand, Figure \ref{adult-2b} shows $\max_{j} f^{PP}_j$ as a function of $k$ for the baselines. Here we see that the baselines are far from satisfying constraint (\ref{cons:PP}), and specifically that there exist points that are treated very unfairly. Finally, Figure \ref{adult-2c} shows $\max_{j} f^{AG}_j$ as a function of $k$ for all algorithms that can be potentially used for \prob{-AG}. Here we see that our algorithms again satisfy the corresponding constraint (\ref{cons:AG}), and furthermore have a better $\max_{j} f^{AG}_j$ value compared to the baselines. Finally, in the AG case the baselines seem to perform much better compared to the PP case, and this is reasonable because the notion of fairness described by (\ref{cons:AG}) is much weaker. Nonetheless, in most cases the baselines are not able to satisfy (\ref{cons:AG}). 

\begin{figure}[h]
\begin{subfigure}{.33\textwidth}
  \centering
  \includegraphics[width=.99\linewidth]{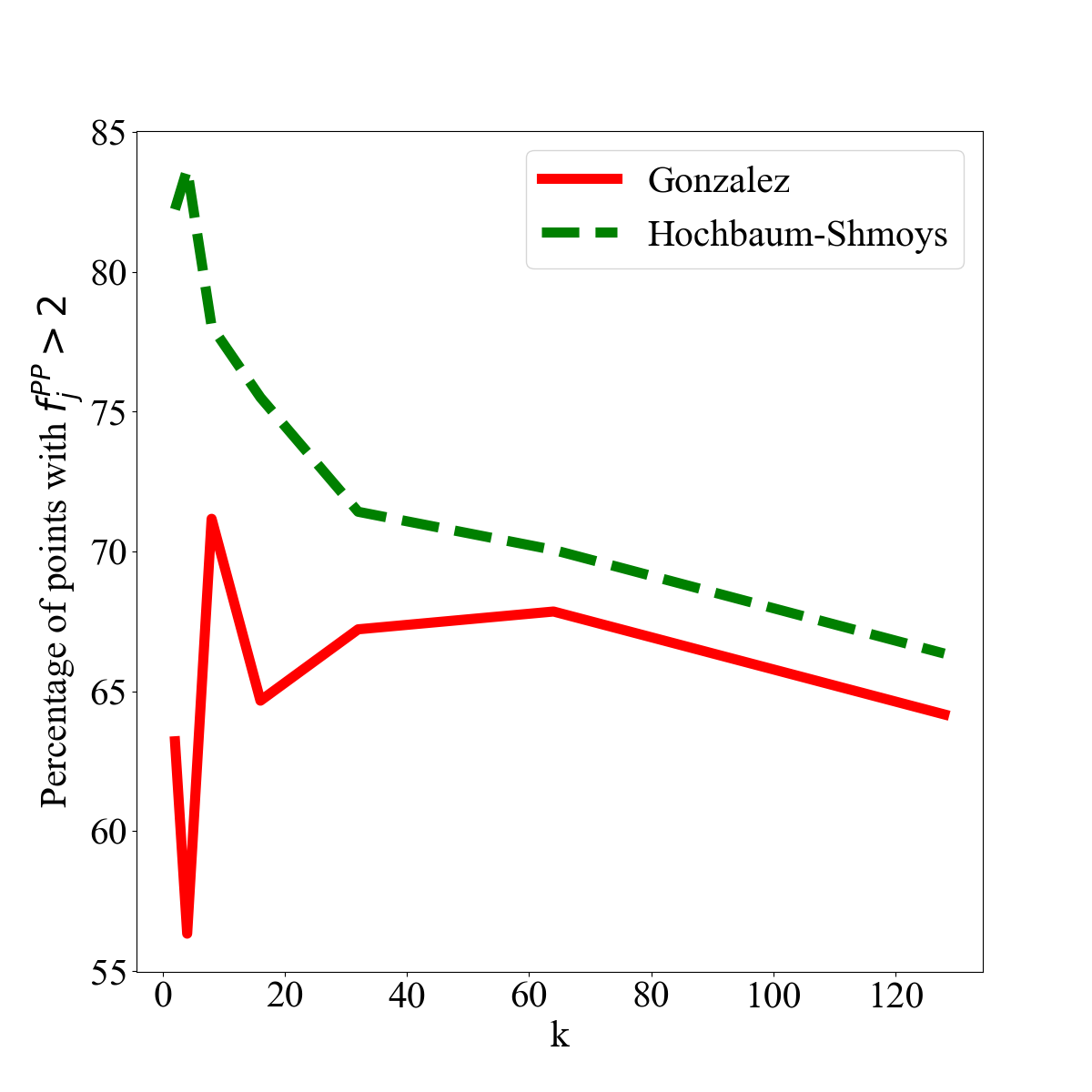}
  \caption{}
  \label{adult-3a}
\end{subfigure}%
\begin{subfigure}{.33\textwidth}
  \centering
  \includegraphics[width=.99\linewidth]{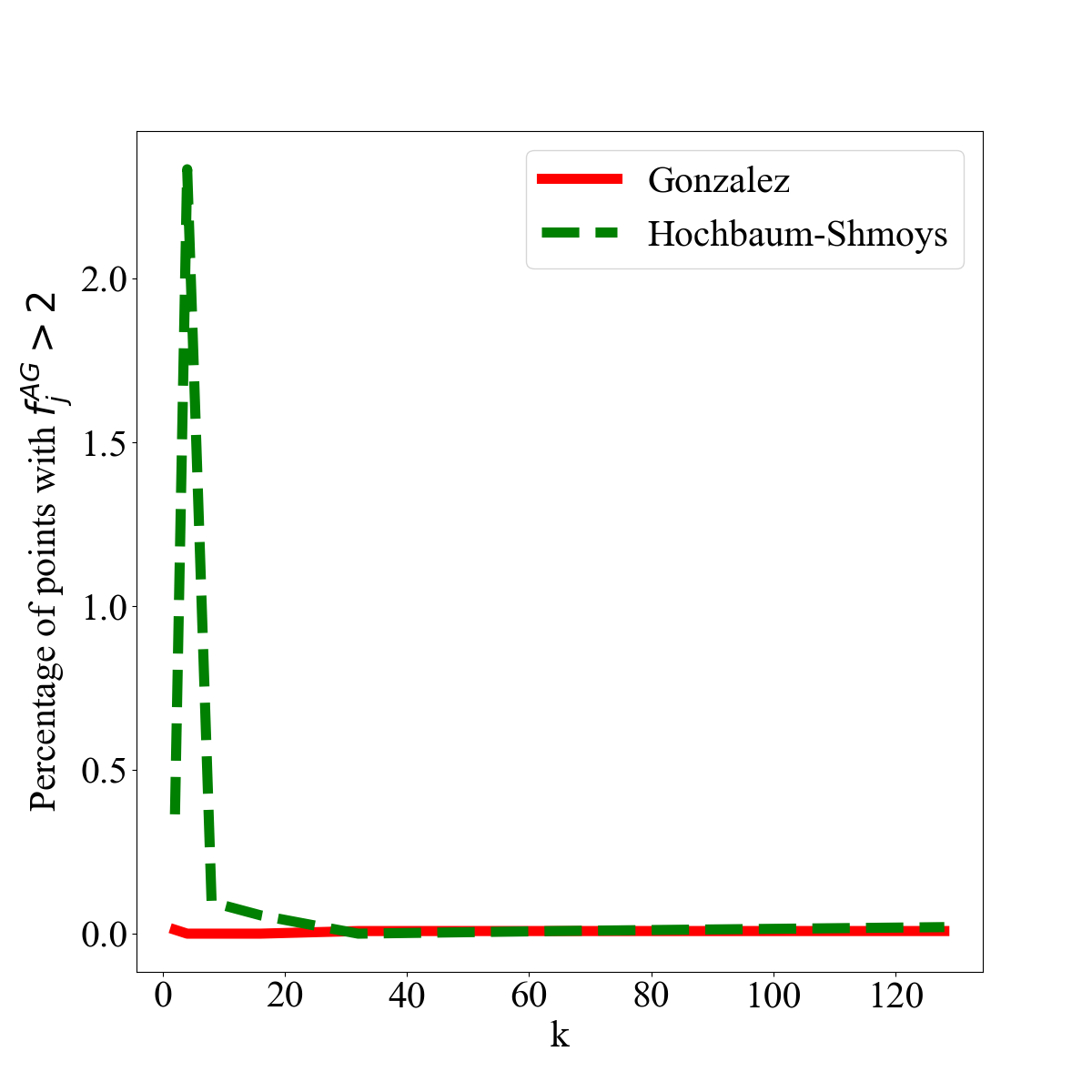}
  \caption{}
  \label{adult-3b}
\end{subfigure}
\begin{subfigure}{.33\textwidth}
  \centering
  \includegraphics[width=.99\linewidth]{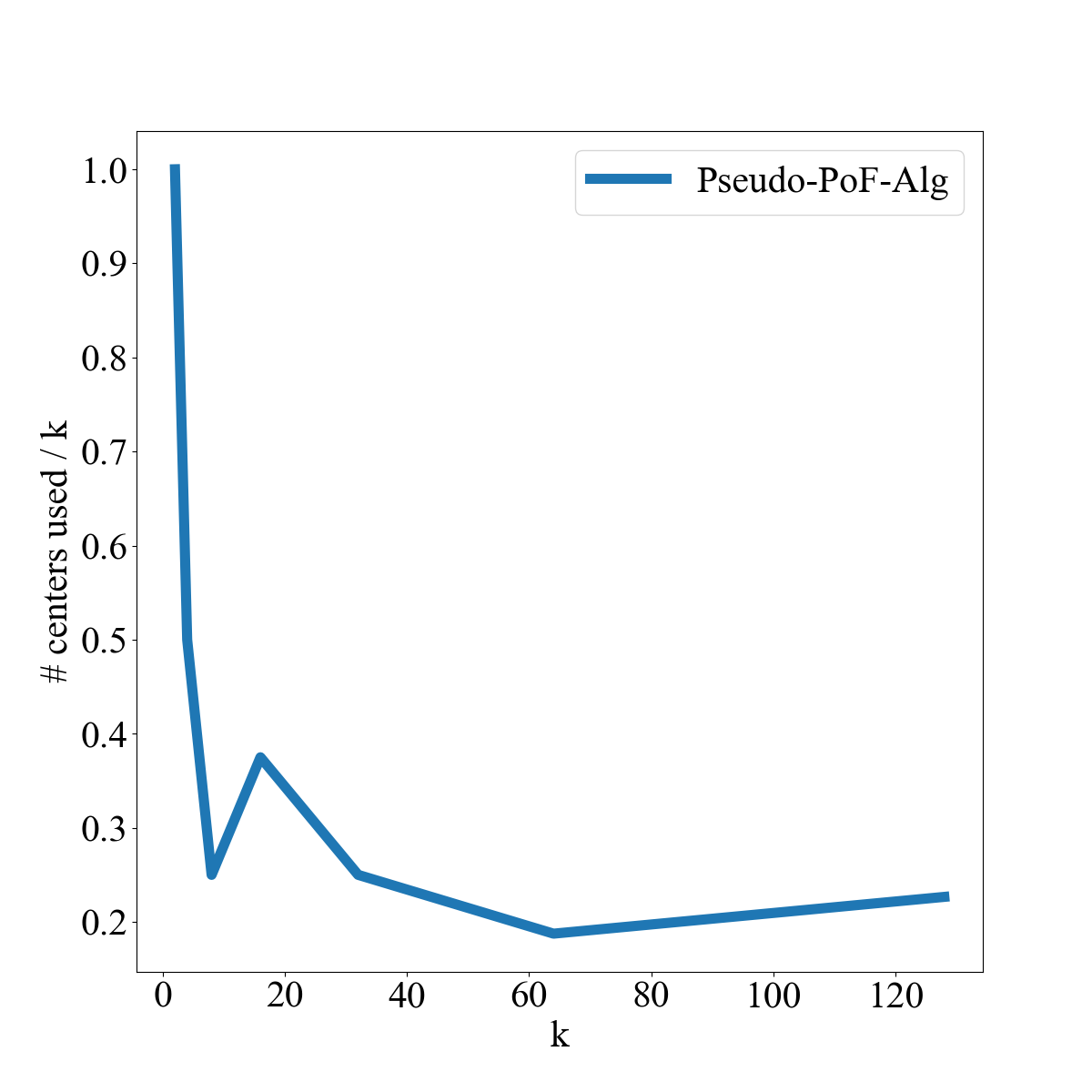}
  \caption{}
  \label{adult-3c}
\end{subfigure}
\caption{Amount of constraint violation}
\label{adult-3}
\end{figure}

In Figures \ref{adult-3a} and \ref{adult-3b} we are interested in the percentage of points for which baselines do not satisfy the appropriate fairness constraint. Specifically, Figure \ref{adult-3a} demonstrates that for the stronger notion of PP-fairness, a substantial percentage of points gets unfair treatment ($f^{PP}_j > 2$). On the other hand, for the weaker notion of fairness captured by (\ref{cons:AG}), the two baselines do much better. Nonetheless, even if one is interested only in the weaker AG concept of fairness, they should not use the baselines. \emph{Even one unfairly treated point goes against the very nature of individual fairness}.

Finally, in Figure \ref{adult-3c} we see by how much Pseudo-PoF-Alg violates the constraint $|S| \leq k$ on the set of chosen centers (recall that in theory Pseudo-PoF-Alg yields $|S| \leq 2k$). Here we plot the ratio of the number of centers used by the algorithm over the given value $k$, and see that in practice Pseudo-PoF-Alg does not actually incur any violation. 
\\ \\ 
\noindent\textbf{Acknowledgments.}

We would like to thank the anonymous AISTATS reviewers for their constructive feedback. Darshan Chakrabarti, John Dickerson, and Seyed Esmaeili were supported in part by NSF CAREER Award IIS-1846237, NSF D-ISN Award \#2039862, NSF Award CCF-1852352, NIH R01 Award NLM-013039-01, NIST MSE Award \#20126334, DARPA GARD \#HR00112020007, DoD WHS Award \#HQ003420F0035, and a Google Faculty Research award.
Aravind Srinivasan was supported in part by NSF awards CCF-1422569, CCF-1749864, and CCF-1918749, as well as research awards from Adobe, Amazon, and Google. Leonidas Tsepenekas was  supported in part by NSF awards CCF-1749864 and CCF-1918749, and by research awards from Amazon and Google. 
\printbibliography
\appendix
\section{Explicitly enforcing Assumption \ref{icml-asm}}\label{sec:asm-enf}

It is reasonable to assume that there will be situations in which a central planner is not certain that Assumption \ref{icml-asm} holds. Furthermore, there may also be cases where the sets $\mathcal{S}_j$ are not explicitly provided, e.g., because individuals have a fuzzy understanding of similarity and cannot accurately determine their most comparable points. Nonetheless, even under such conditions, the central planner can help the points construct the sets $\mathcal{S}_j$, in way that is explainable and will also satisfy the necessary assumption. This is clearly described in what follows.

The planner can first compute a nearly-tight lower bound $R_f$ for $R^{*}_{unf}$ (note that computing $R^{*}_{unf}$ exactly is NP-hard). This can be done efficiently in multiple ways, for example by using the thresholding technique of \cite{Hochbaum1985}. Afterwards, the planner publishes $R_f$ and informs the agents that even under optimal conditions, the points that are considered similar to each of them are only within distance $\psi R_f$, for some small constant $\psi$. Then, the points are asked to independently construct their similarity sets, such that $\mathcal{S}_j \subseteq \{j' \in \mathcal{C} : d(j,j') \leq \psi R^{*}_{unf}\}$. 

This strategy certainly enjoys explainability merits. Besides having the planner compute, publish and clarify the meaning of $\psi R_f$ to the points, it also gives the planner a valid justification to turn down requests for $\mathcal{S}_j$ that do not satisfy Assumption \ref{icml-asm}, by clearly explaining to such an agent $j$ why this choice is unreasonable.

\pagebreak
\section{Additional experimental results}\label{sec:appendix}

\textbf{Experimental results for Bank:}

\begin{figure}[h]
\centering
\includegraphics[scale=0.14]{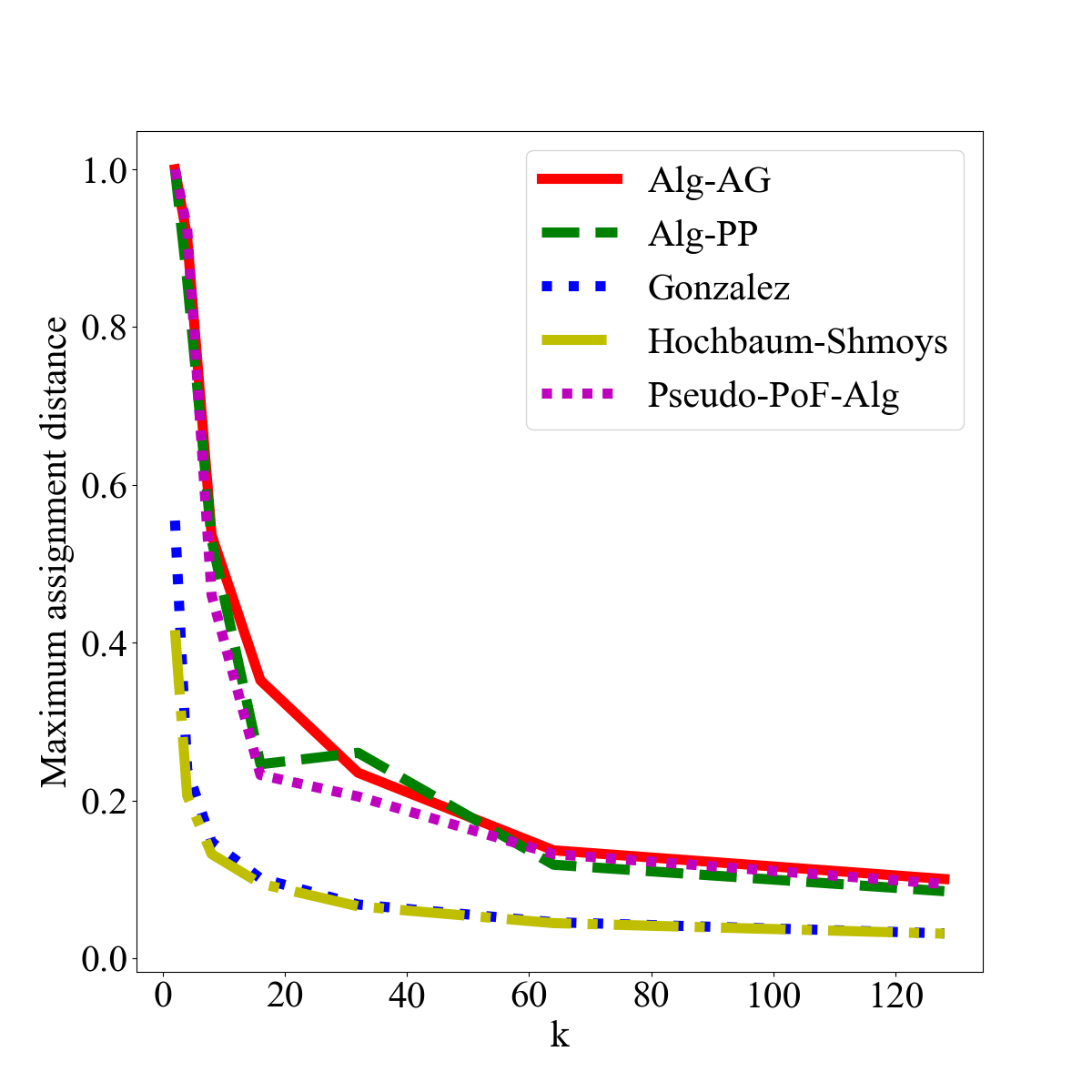}
\caption{Maximum assignment distance for all algorithms}
\label{bank-1}
\end{figure}

\begin{figure}[h]
\begin{subfigure}{.33\textwidth}
  \centering
  \includegraphics[width=.74\linewidth]{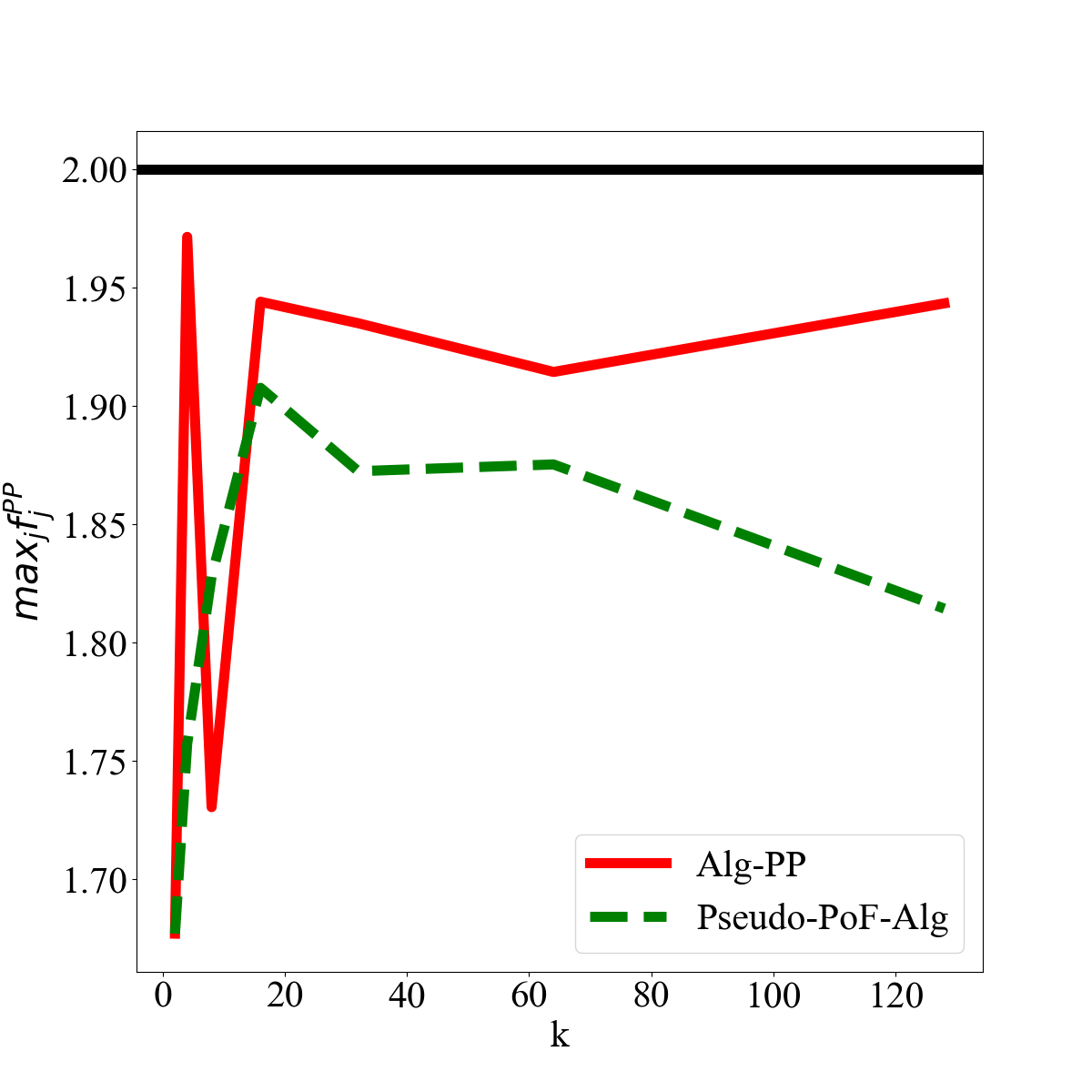}
  \caption{}
  \label{bank-2a}
\end{subfigure}%
\begin{subfigure}{.33\textwidth}
  \centering
  \includegraphics[width=.74\linewidth]{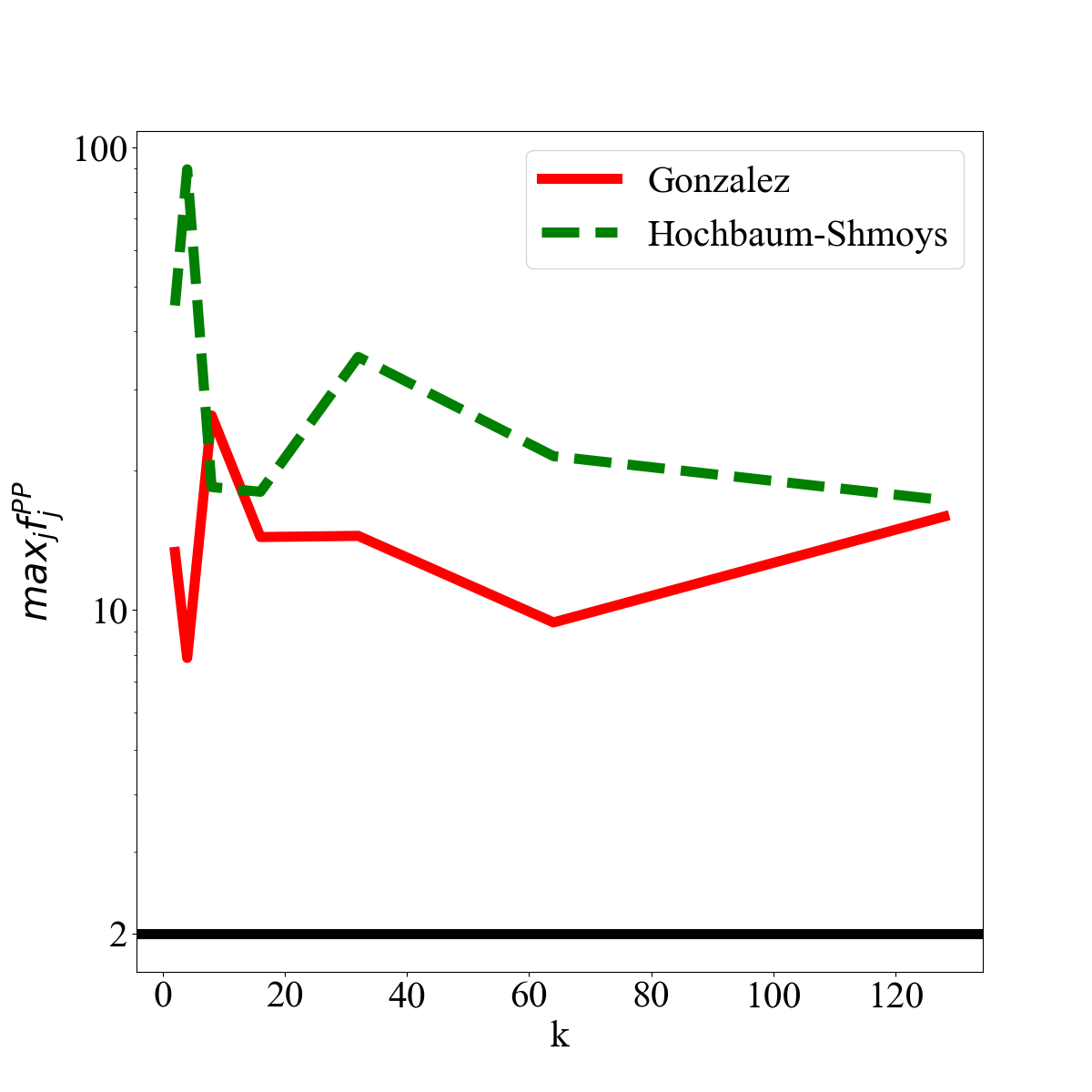}
  \caption{}
  \label{bank-2b}
\end{subfigure}
\begin{subfigure}{.33\textwidth}
  \centering
  \includegraphics[width=.74\linewidth]{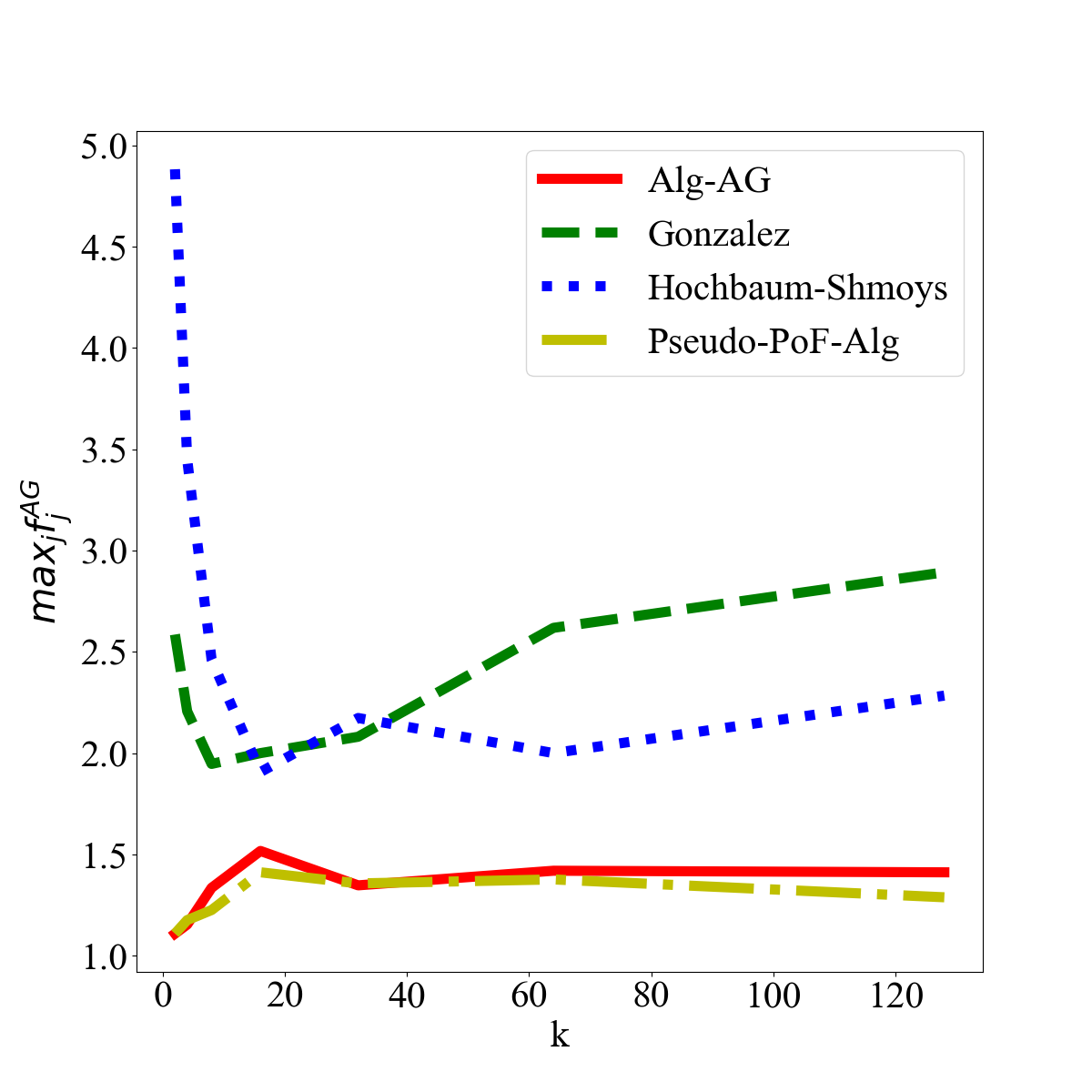}
  \caption{}
  \label{bank-2c}
\end{subfigure}
\caption{Satisfaction of fairness constraints}
\label{bank-2}
\end{figure}

\begin{figure}[h]
\begin{subfigure}{.33\textwidth}
  \centering
  \includegraphics[width=.74\linewidth]{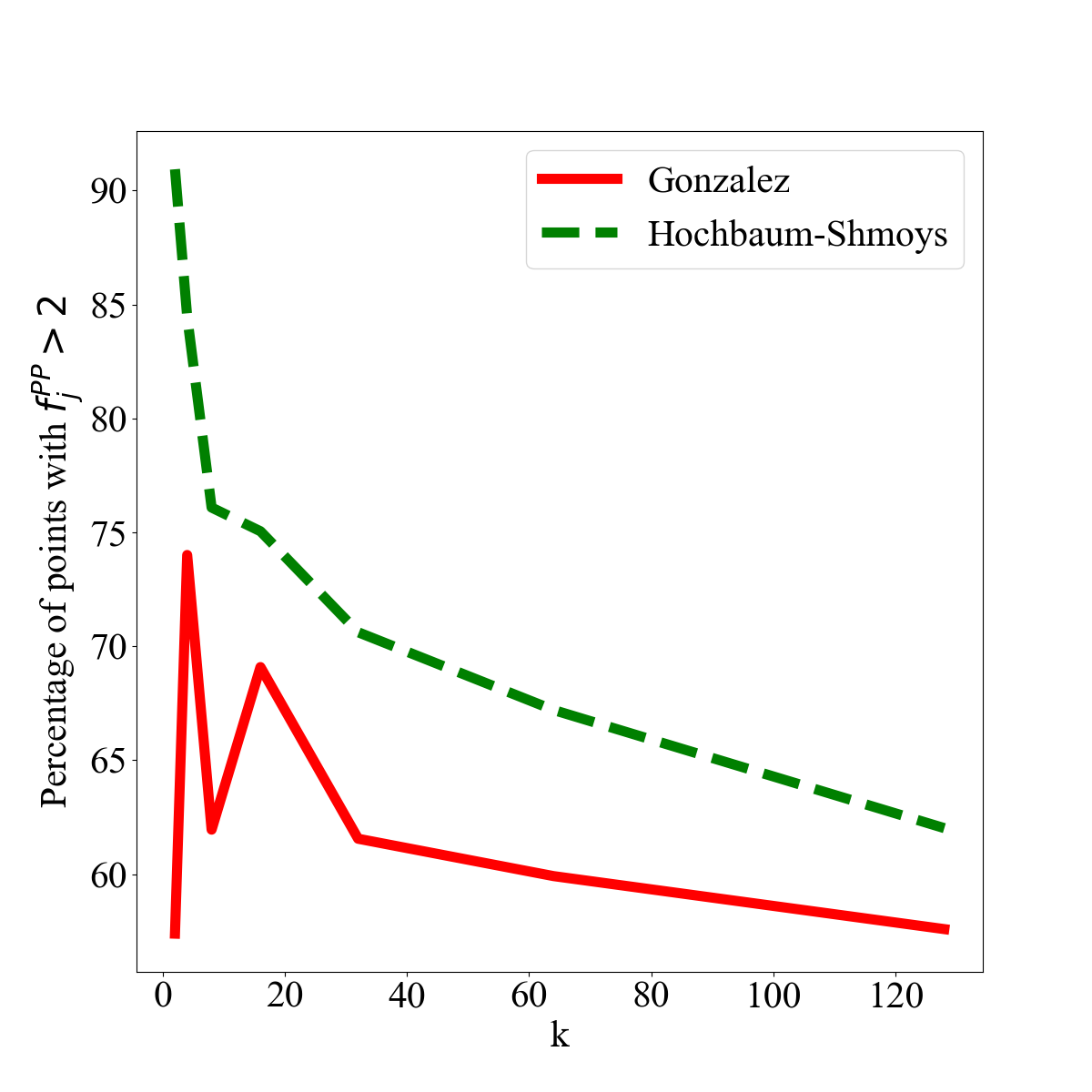}
  \caption{}
  \label{bank-3a}
\end{subfigure}%
\begin{subfigure}{.33\textwidth}
  \centering
  \includegraphics[width=.74\linewidth]{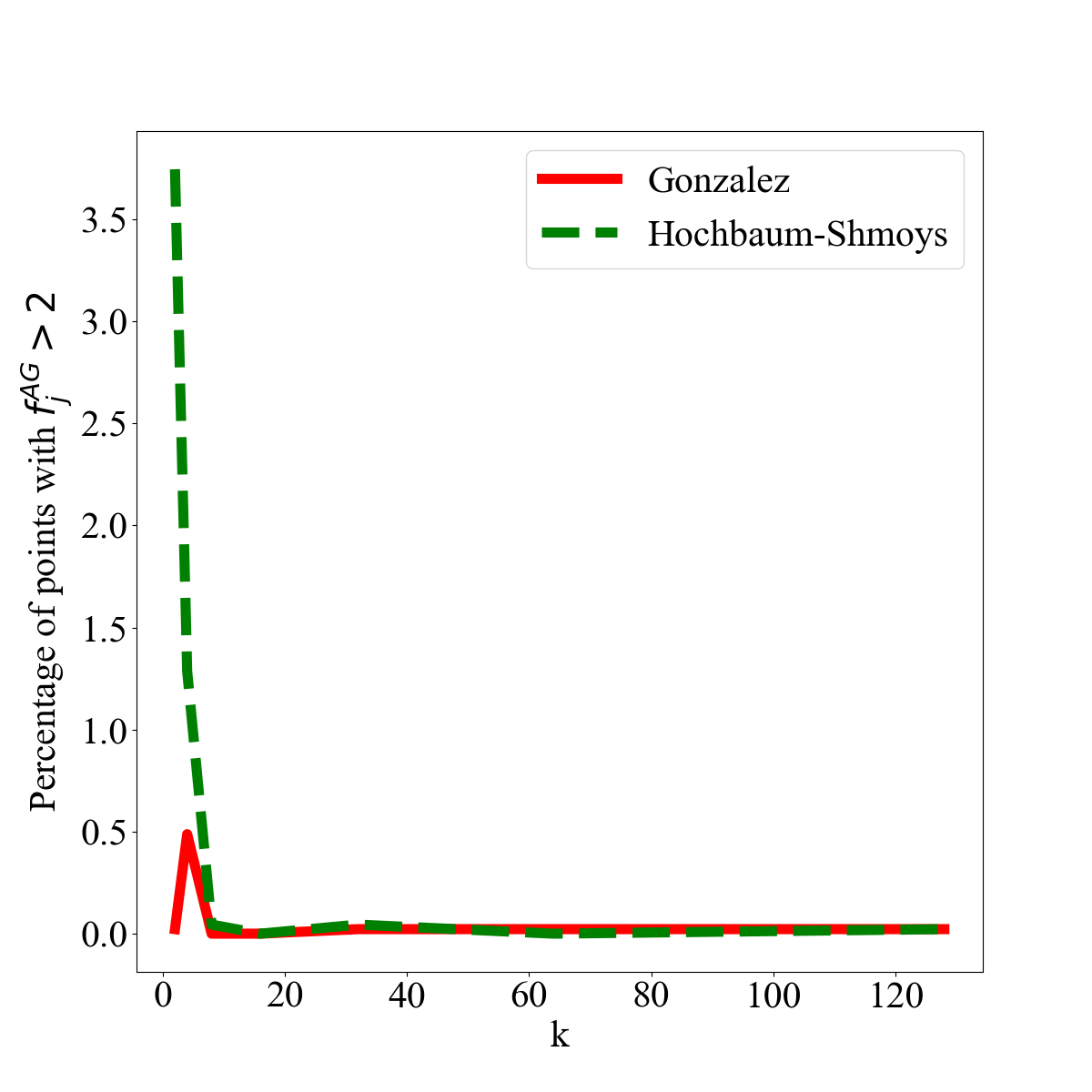}
  \caption{}
  \label{bank-3b}
\end{subfigure}
\begin{subfigure}{.33\textwidth}
  \centering
  \includegraphics[width=.74\linewidth]{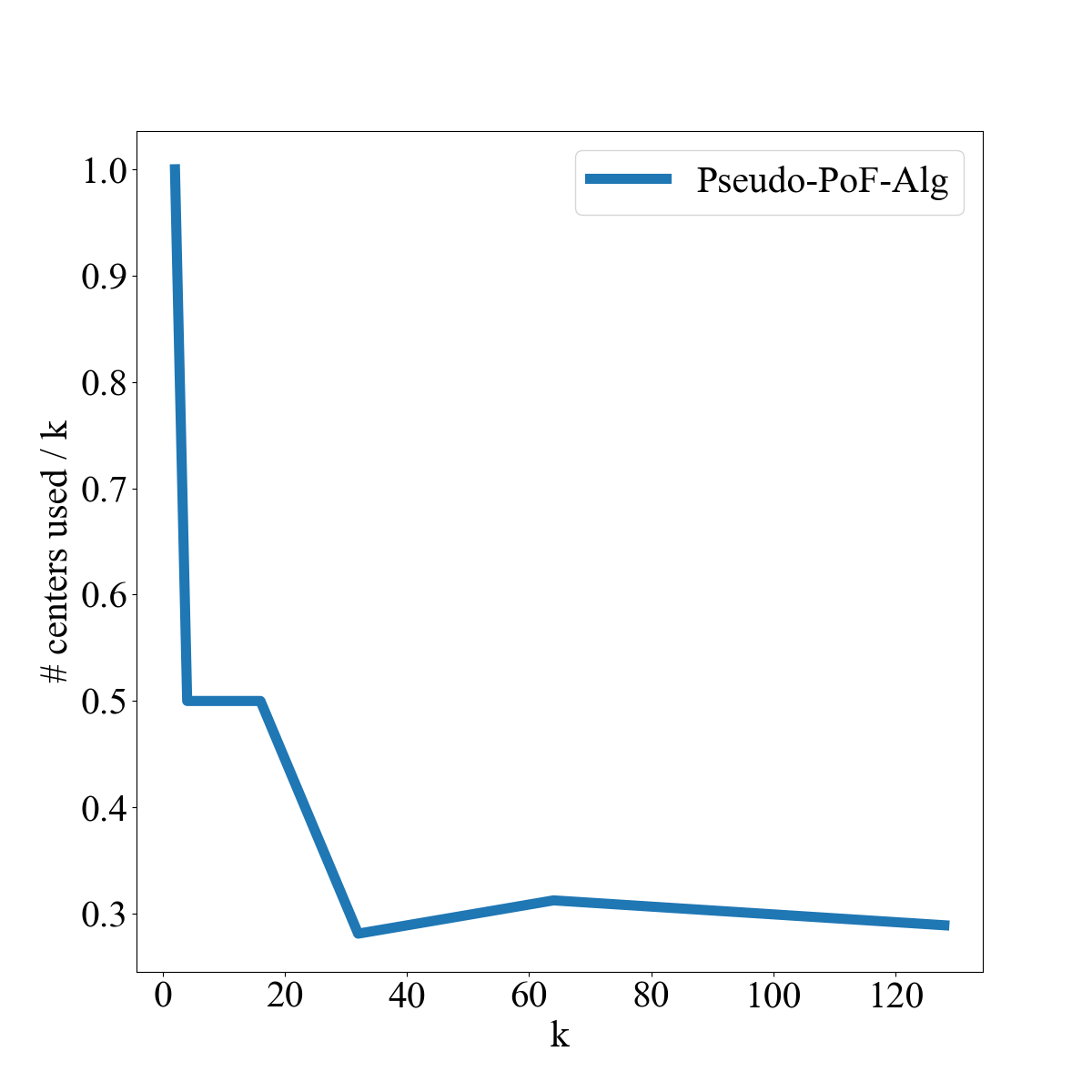}
  \caption{}
  \label{bank-3c}
\end{subfigure}
\caption{Amount of constraint violation}
\label{bank-3}
\end{figure}
\pagebreak

\textbf{Experimental results for Creditcard:}

\begin{figure}[h!]
\centering
\includegraphics[scale=0.18]{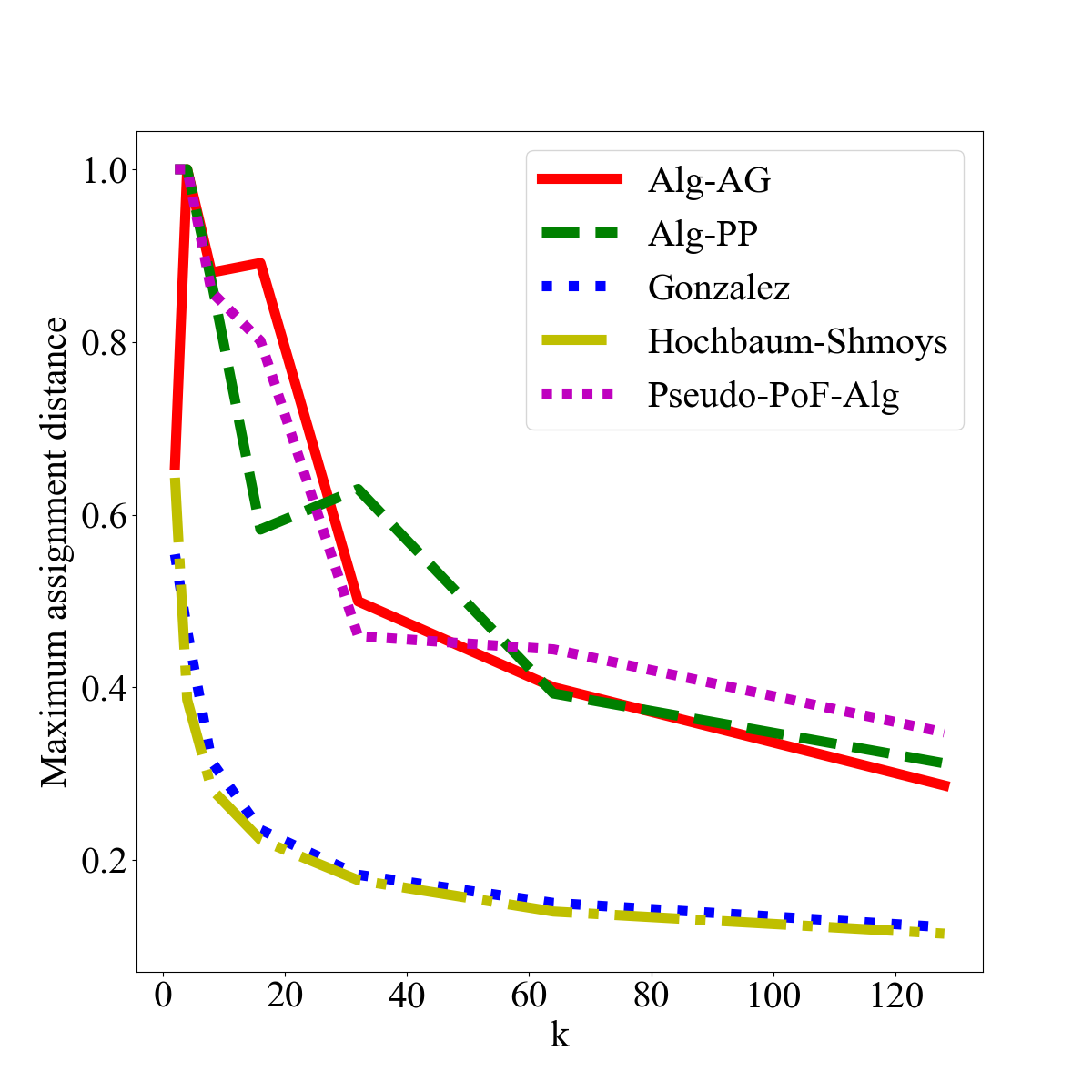}
\caption{Maximum assignment distance for all algorithms}
\label{creditcard-1}
\end{figure}

\begin{figure}[h!]
\begin{subfigure}{.33\textwidth}
  \centering
  \includegraphics[width=.99\linewidth]{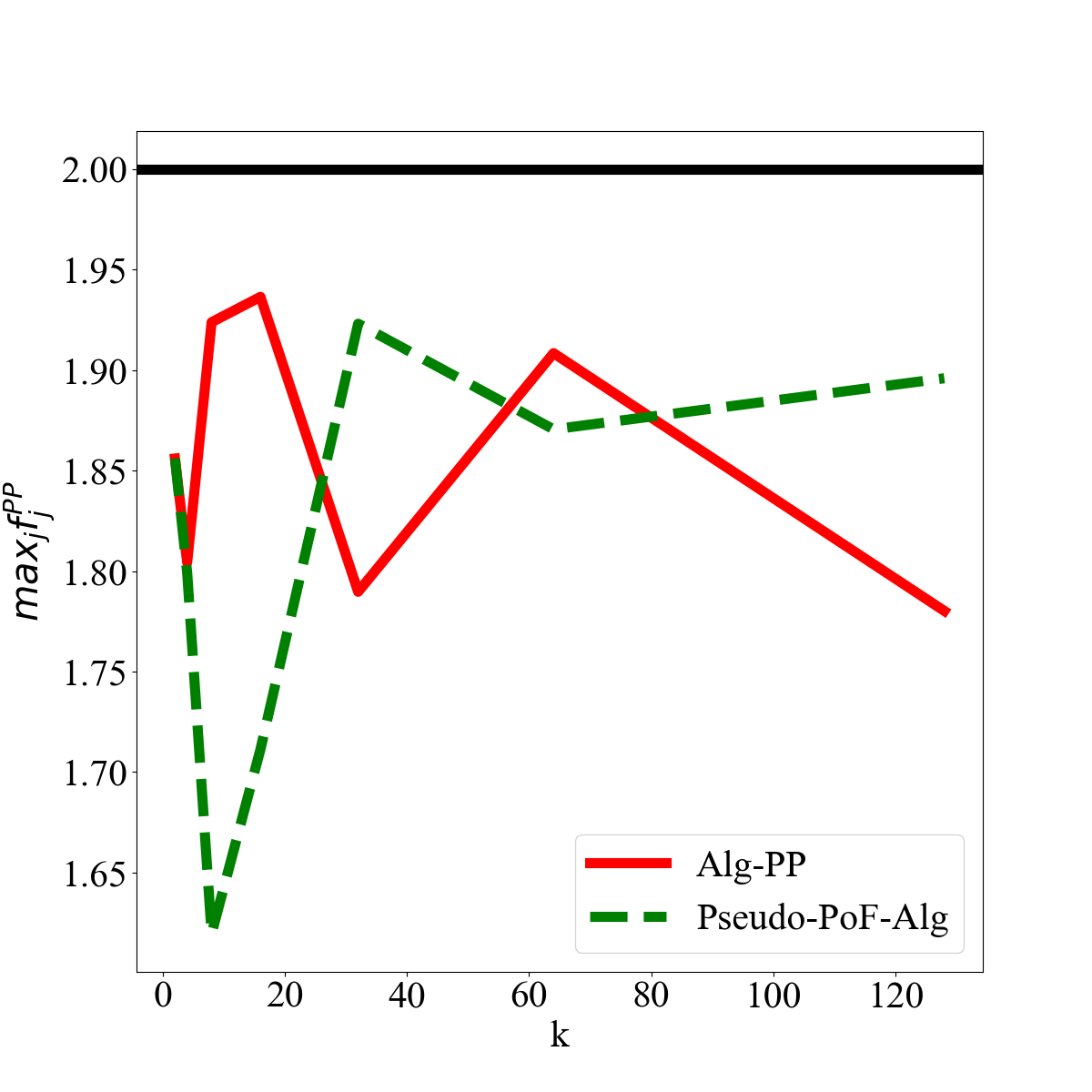}
  \caption{}
  \label{creditcard-2a}
\end{subfigure}%
\begin{subfigure}{.33\textwidth}
  \centering
  \includegraphics[width=.99\linewidth]{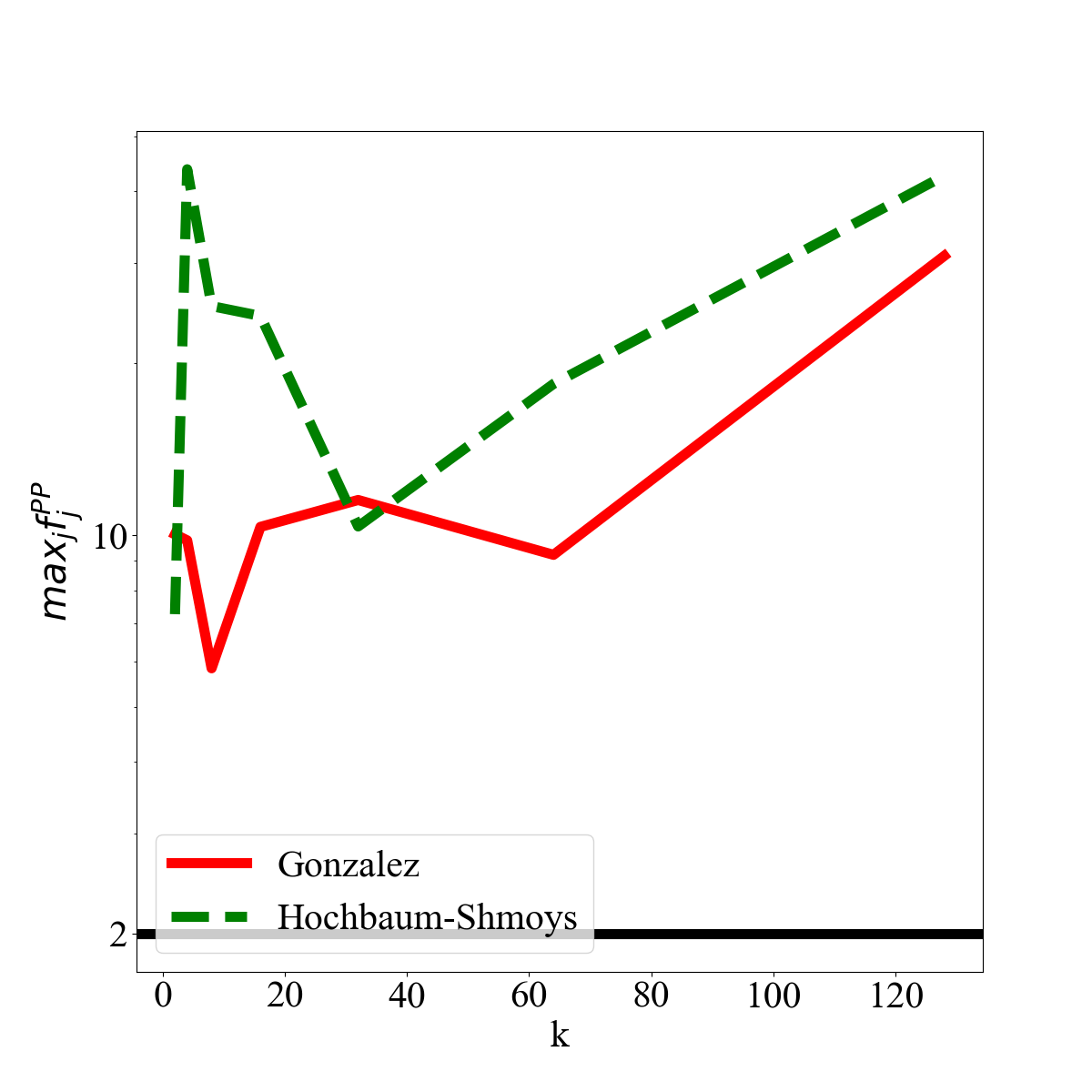}
  \caption{}
  \label{creditcard-2b}
\end{subfigure}
\begin{subfigure}{.33\textwidth}
  \centering
  \includegraphics[width=.99\linewidth]{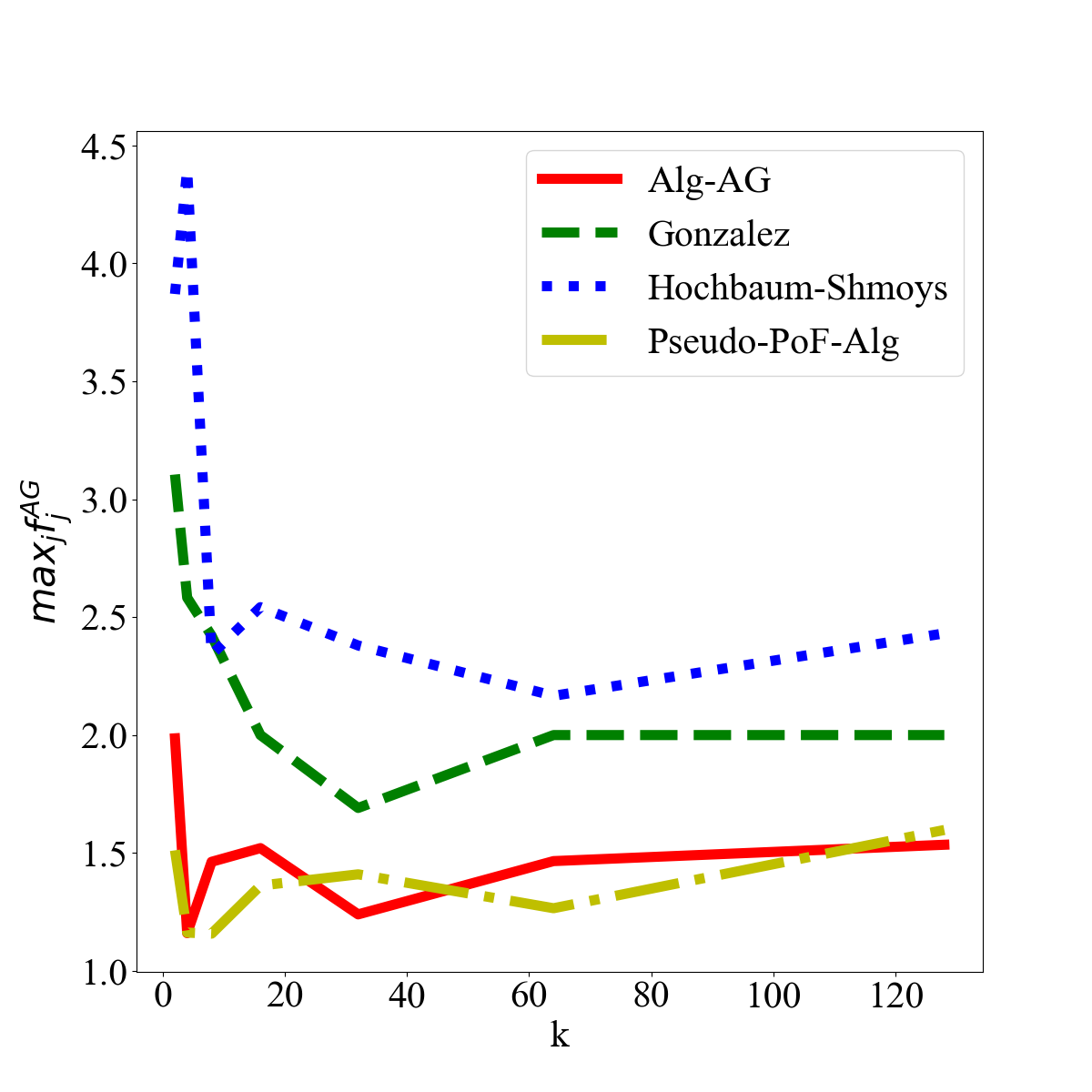}
  \caption{}
  \label{creditcard-2c}
\end{subfigure}
\caption{Satisfaction of fairness constraints}
\label{creditcard-2}
\end{figure}

\begin{figure}[h!]
\begin{subfigure}{.33\textwidth}
  \centering
  \includegraphics[width=.99\linewidth]{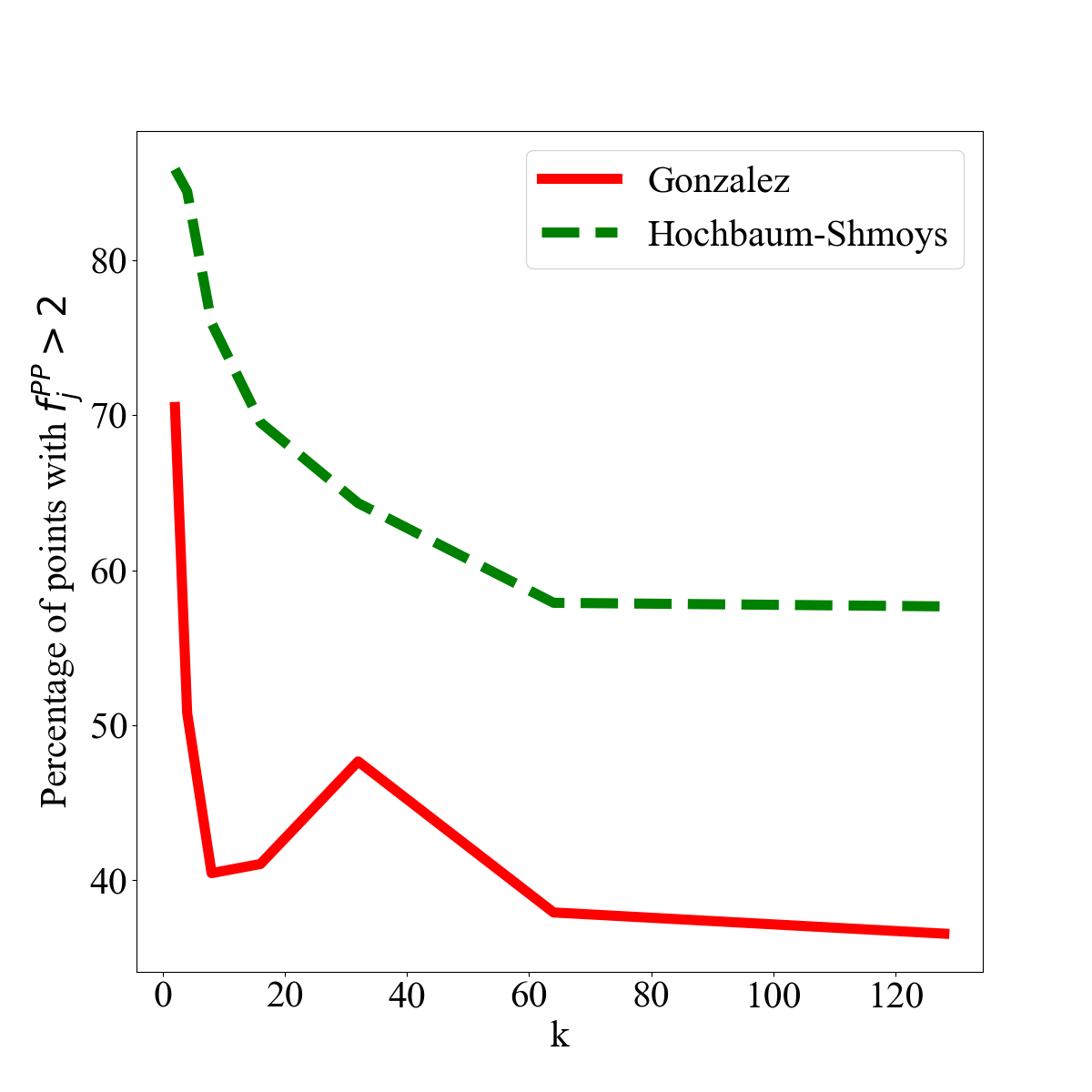}
  \caption{}
  \label{creditcard-3a}
\end{subfigure}%
\begin{subfigure}{.33\textwidth}
  \centering
  \includegraphics[width=.99\linewidth]{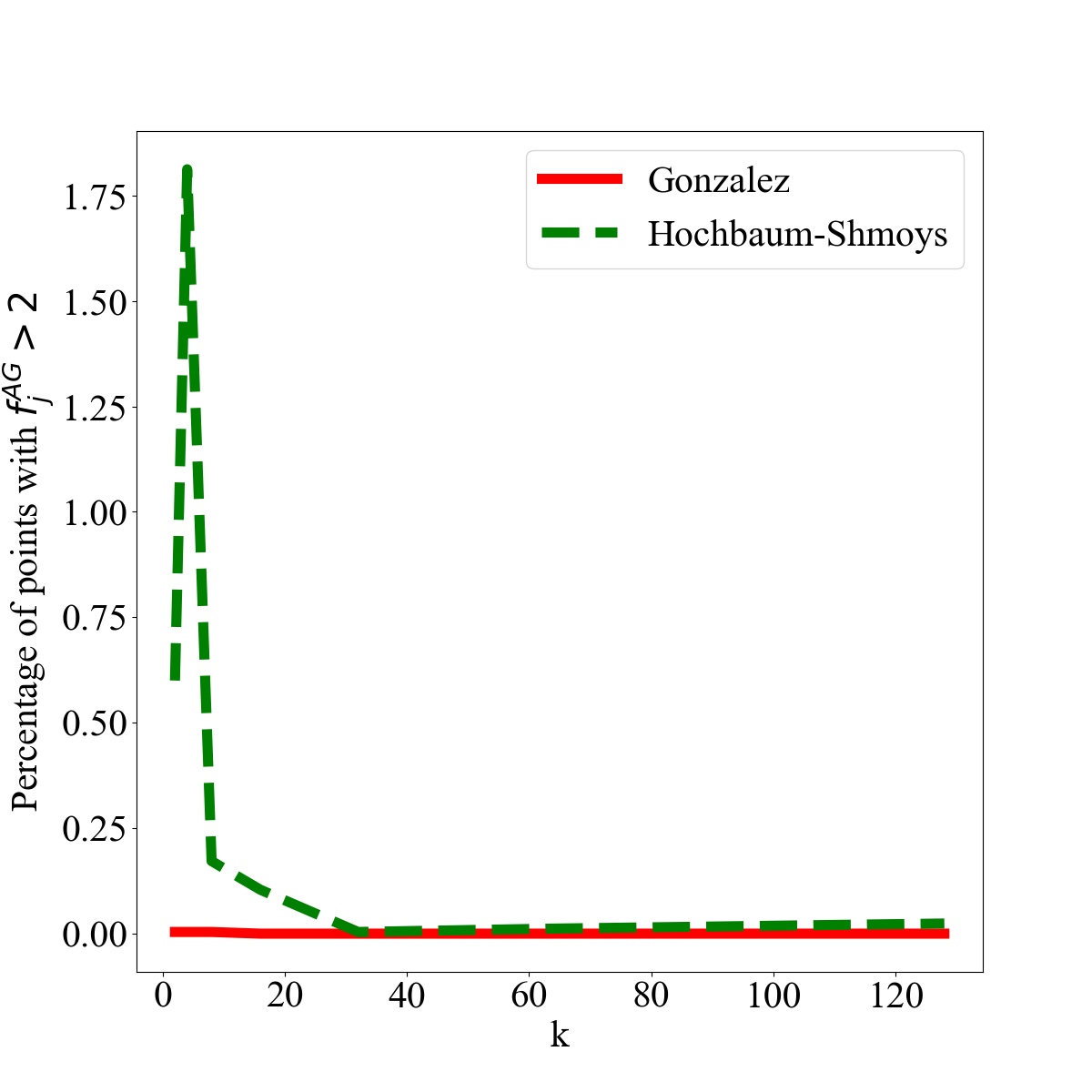}
  \caption{}
  \label{creditcard-3b}
\end{subfigure}
\begin{subfigure}{.33\textwidth}
  \centering
  \includegraphics[width=.99\linewidth]{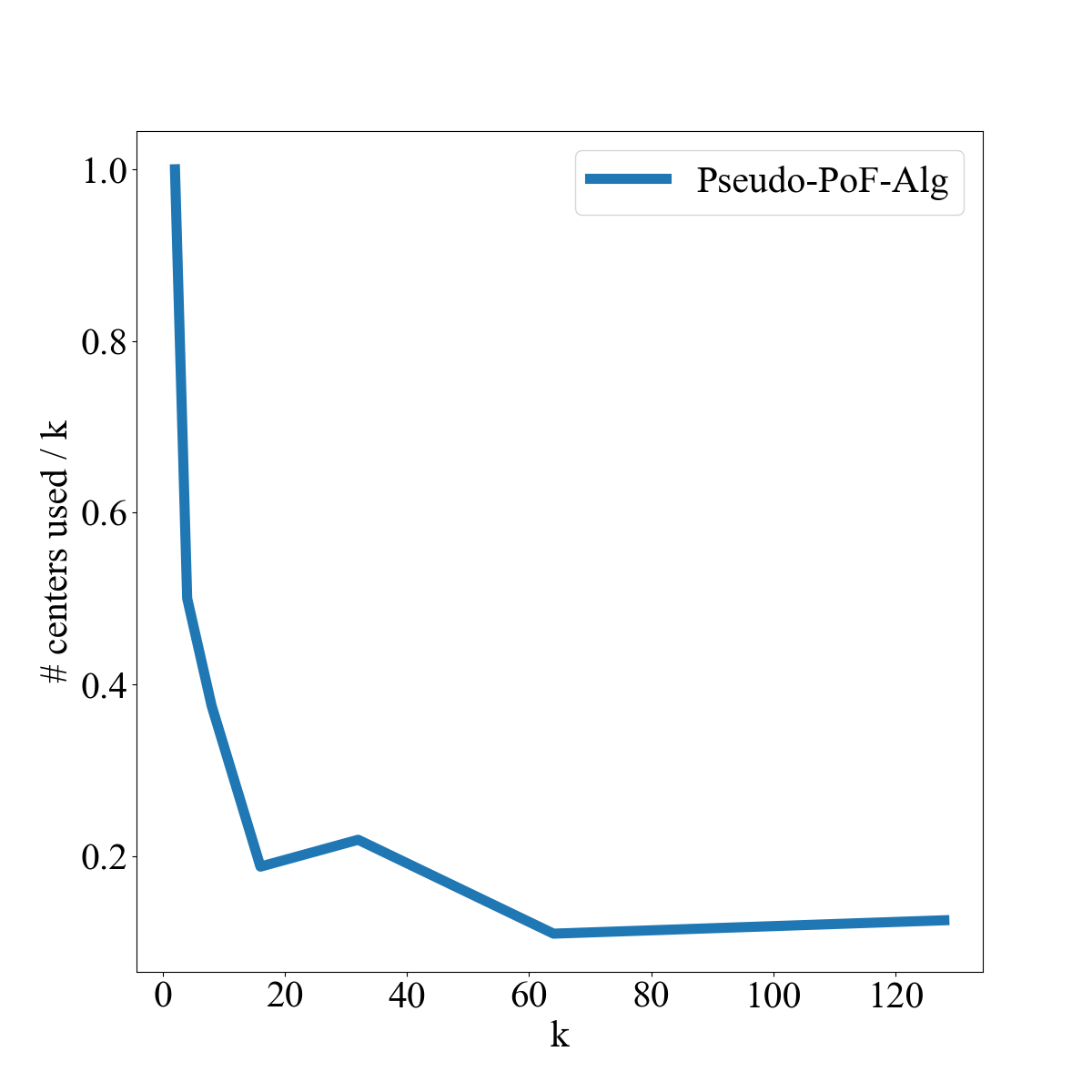}
  \caption{}
  \label{creditcard-3c}
\end{subfigure}
\caption{Amount of constraint violation}
\label{creditcard-3}
\end{figure}

\pagebreak

\textbf{Experimental results for Census1990:}

\begin{figure}[h!]
\centering
\includegraphics[scale=0.18]{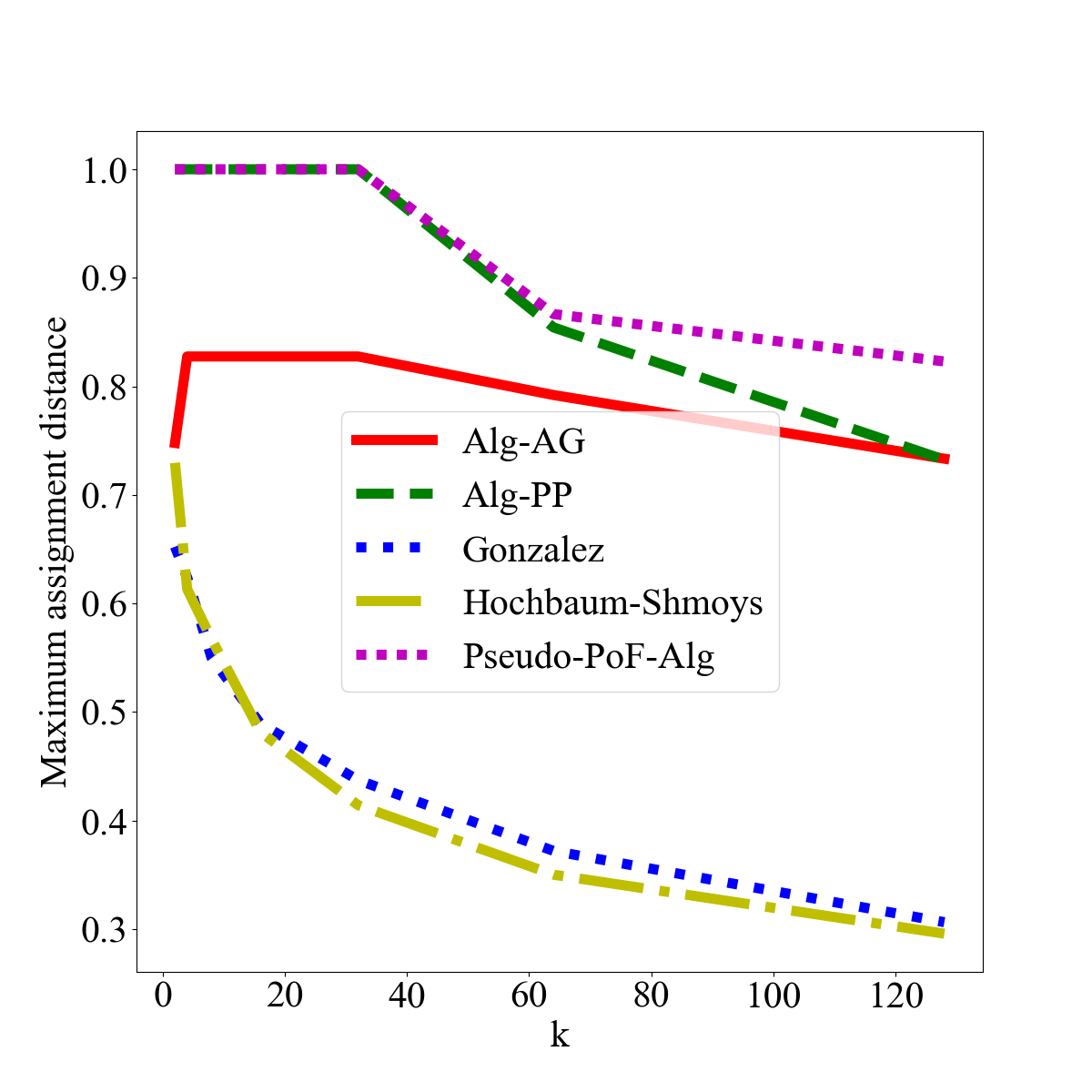}
\caption{Maximum assignment distance for all algorithms}
\label{census1990-1}
\end{figure}

\begin{figure}[h!]
\begin{subfigure}{.33\textwidth}
  \centering
  \includegraphics[width=.99\linewidth]{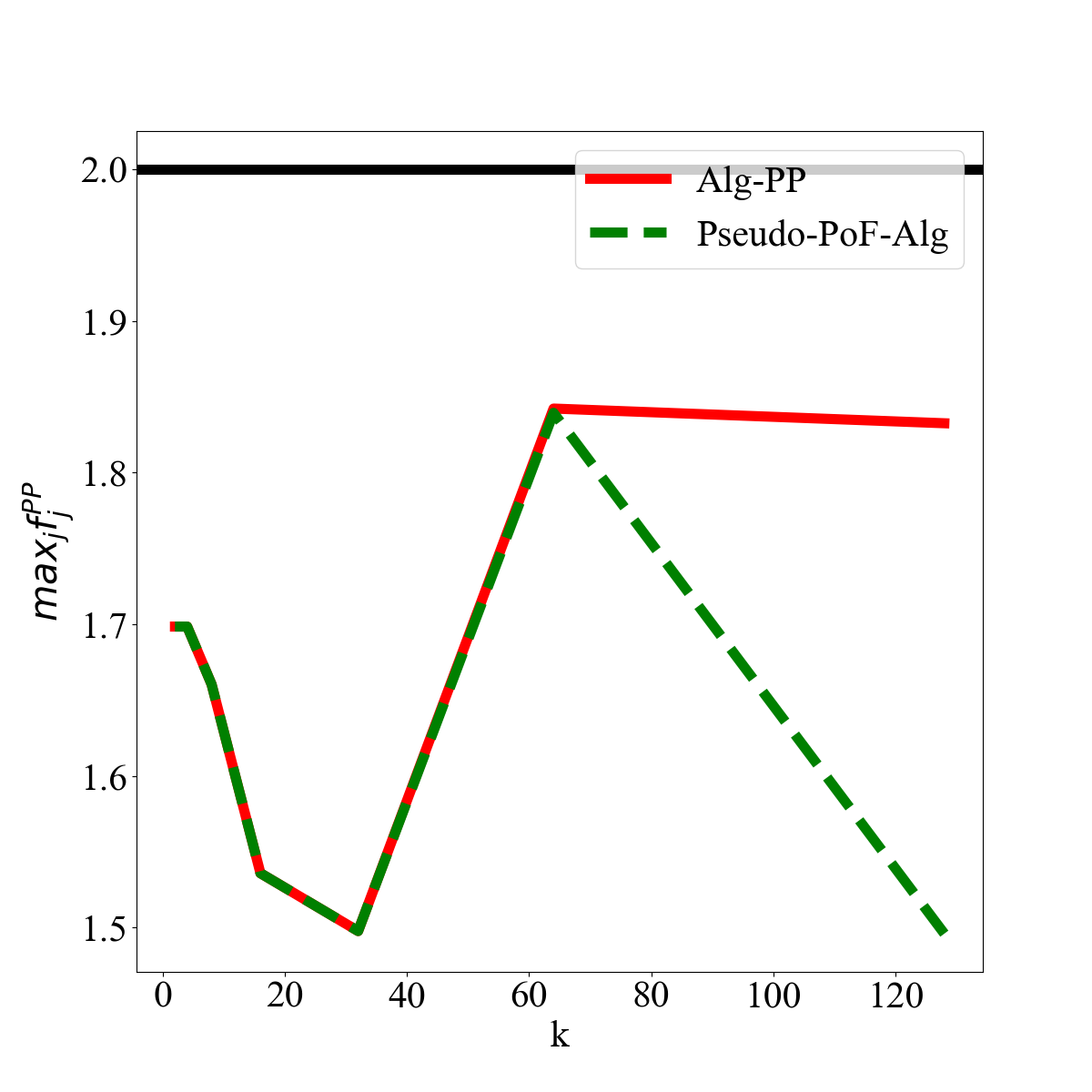}
  \caption{}
  \label{census1990-2a}
\end{subfigure}%
\begin{subfigure}{.33\textwidth}
  \centering
  \includegraphics[width=.99\linewidth]{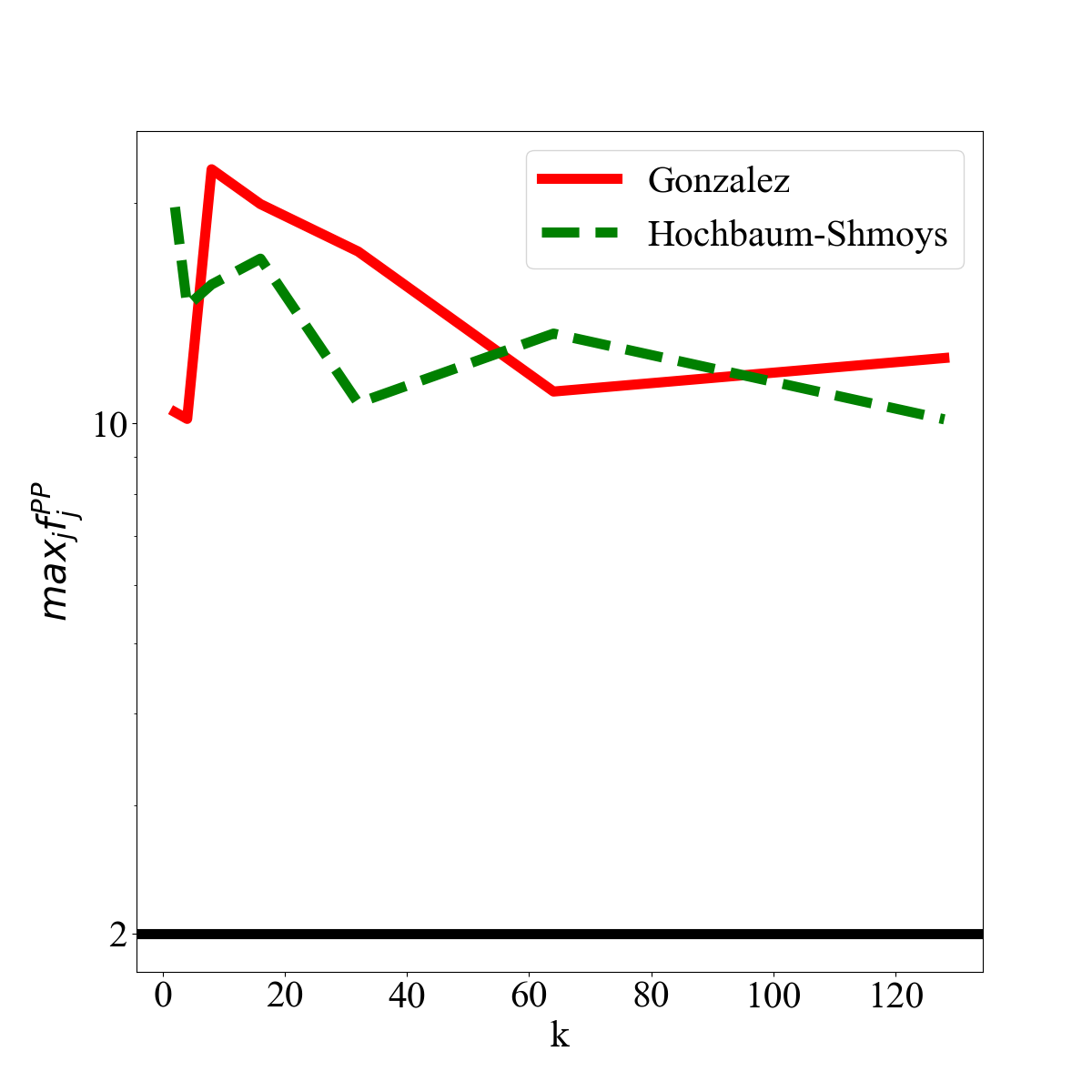}
  \caption{}
  \label{census1990-2b}
\end{subfigure}
\begin{subfigure}{.33\textwidth}
  \centering
  \includegraphics[width=.99\linewidth]{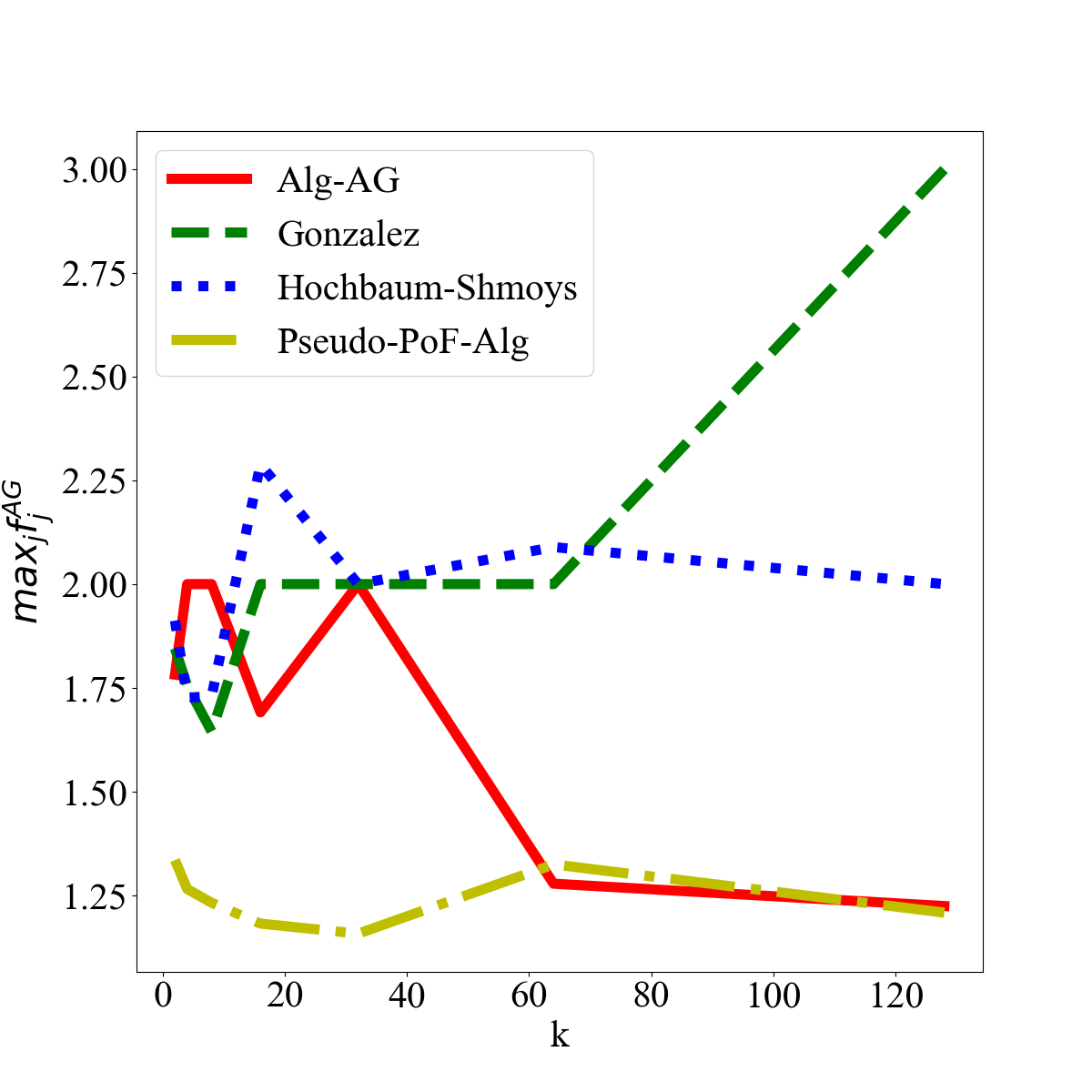}
  \caption{}
  \label{census1990-2c}
\end{subfigure}
\caption{Satisfaction of fairness constraints}
\label{census1990-2}
\end{figure}

\begin{figure}[h!]
\begin{subfigure}{.33\textwidth}
  \centering
  \includegraphics[width=.99\linewidth]{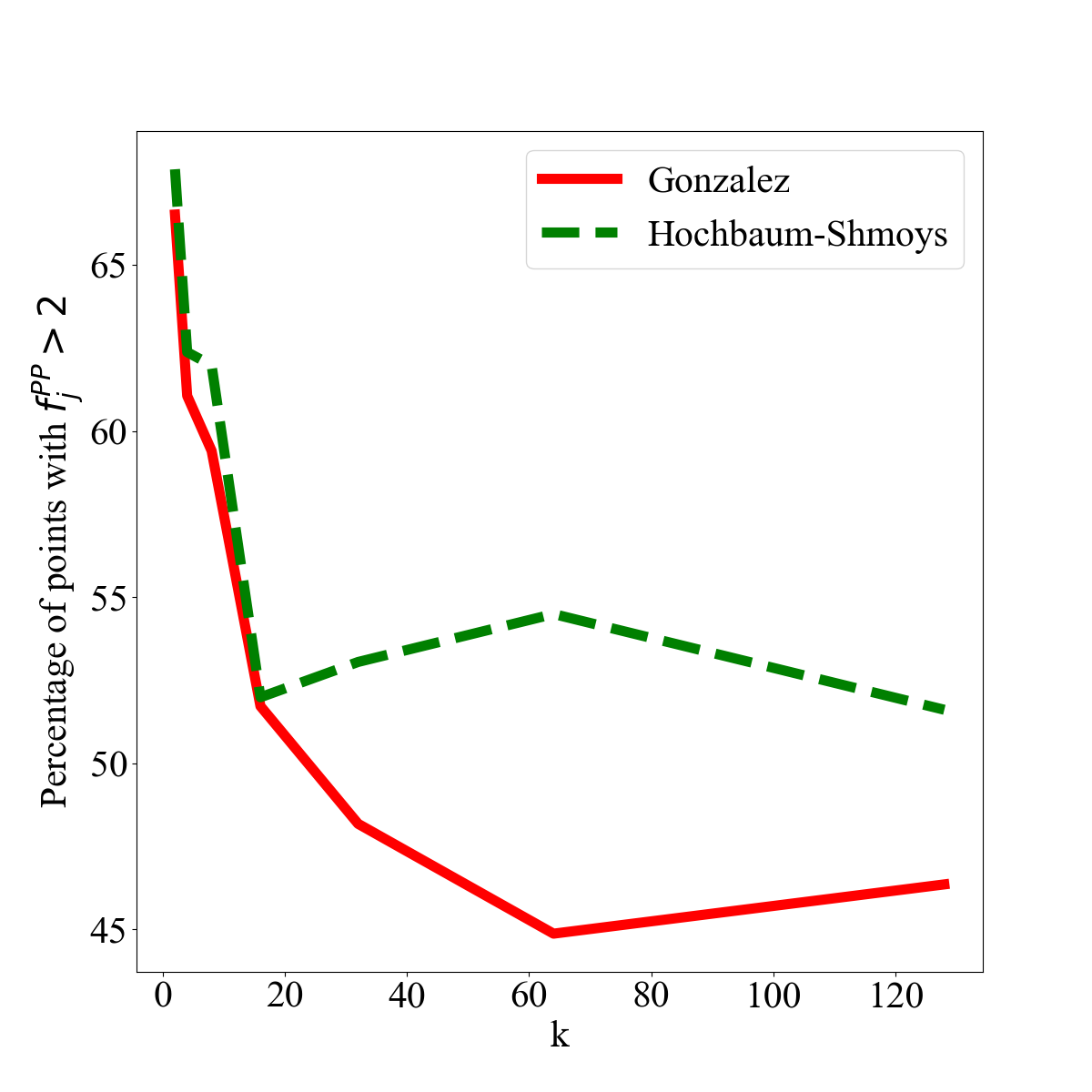}
  \caption{}
  \label{census1990-3a}
\end{subfigure}%
\begin{subfigure}{.33\textwidth}
  \centering
  \includegraphics[width=.99\linewidth]{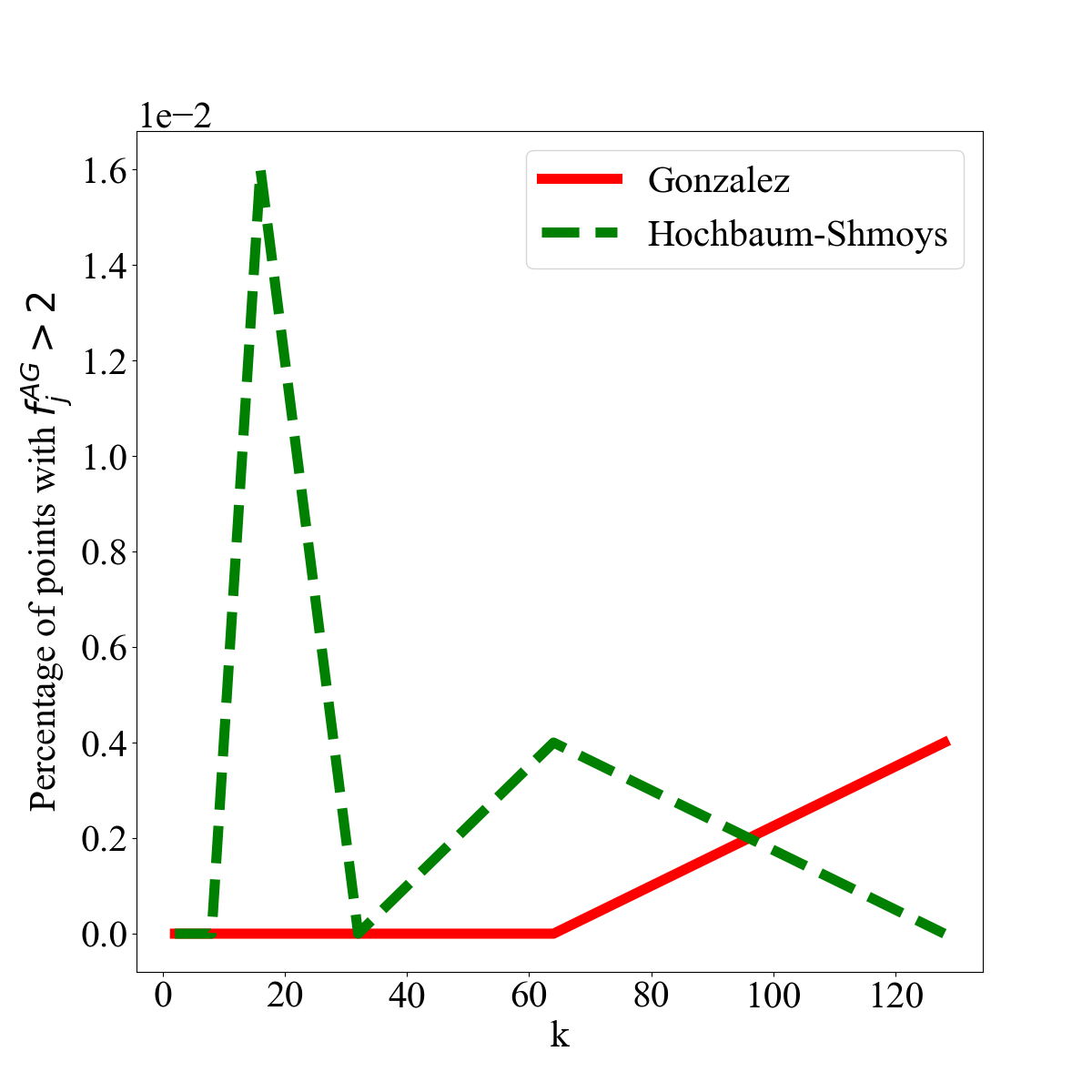}
  \caption{}
  \label{census1990-3b}
\end{subfigure}
\begin{subfigure}{.33\textwidth}
  \centering
  \includegraphics[width=.99\linewidth]{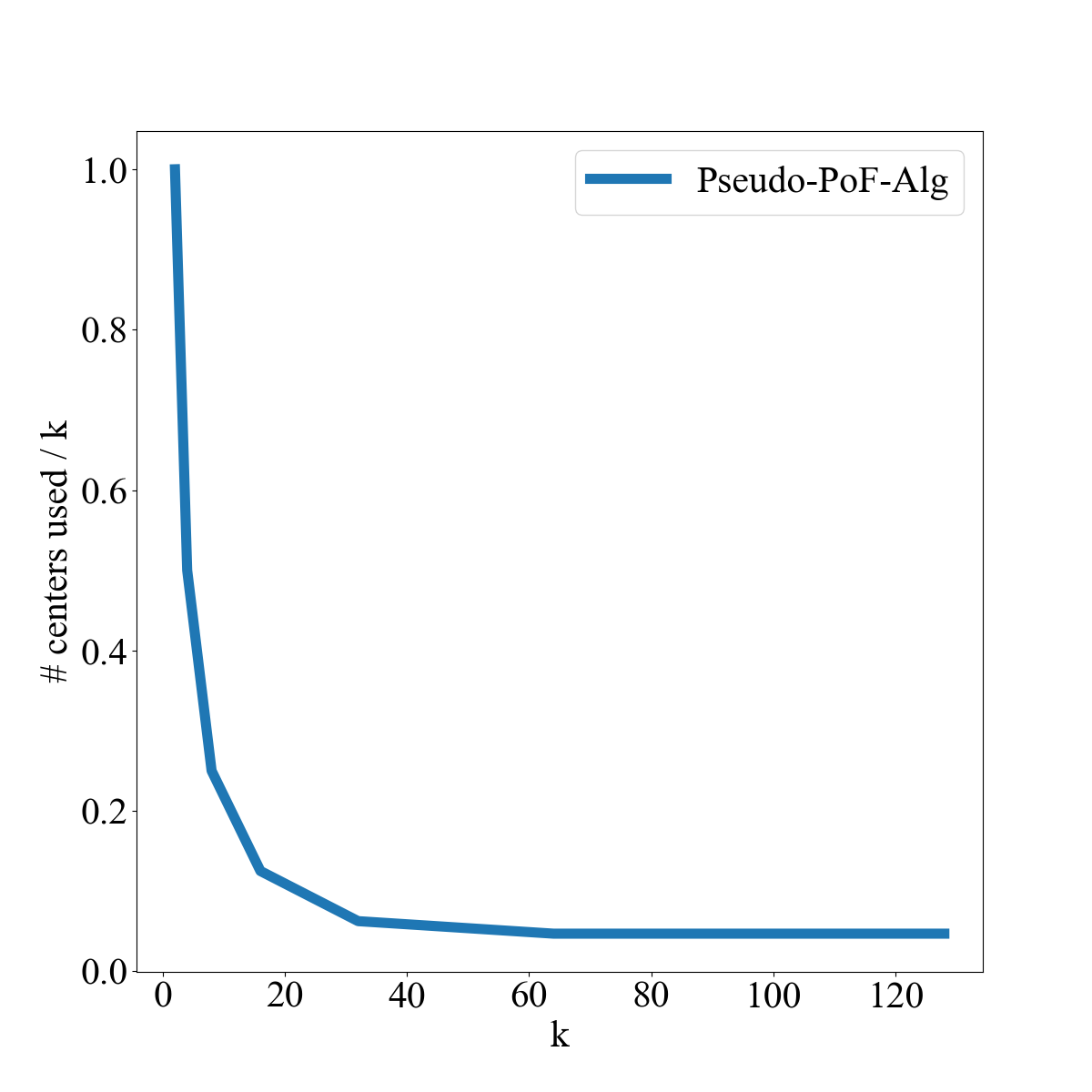}
  \caption{}
  \label{census1990-3c}
\end{subfigure}
\caption{Amount of constraint violation}
\label{census1990-3}
\end{figure}
\pagebreak

\textbf{Experimental results for Diabetes:}

\begin{figure}[h!]
\centering
\includegraphics[scale=0.18]{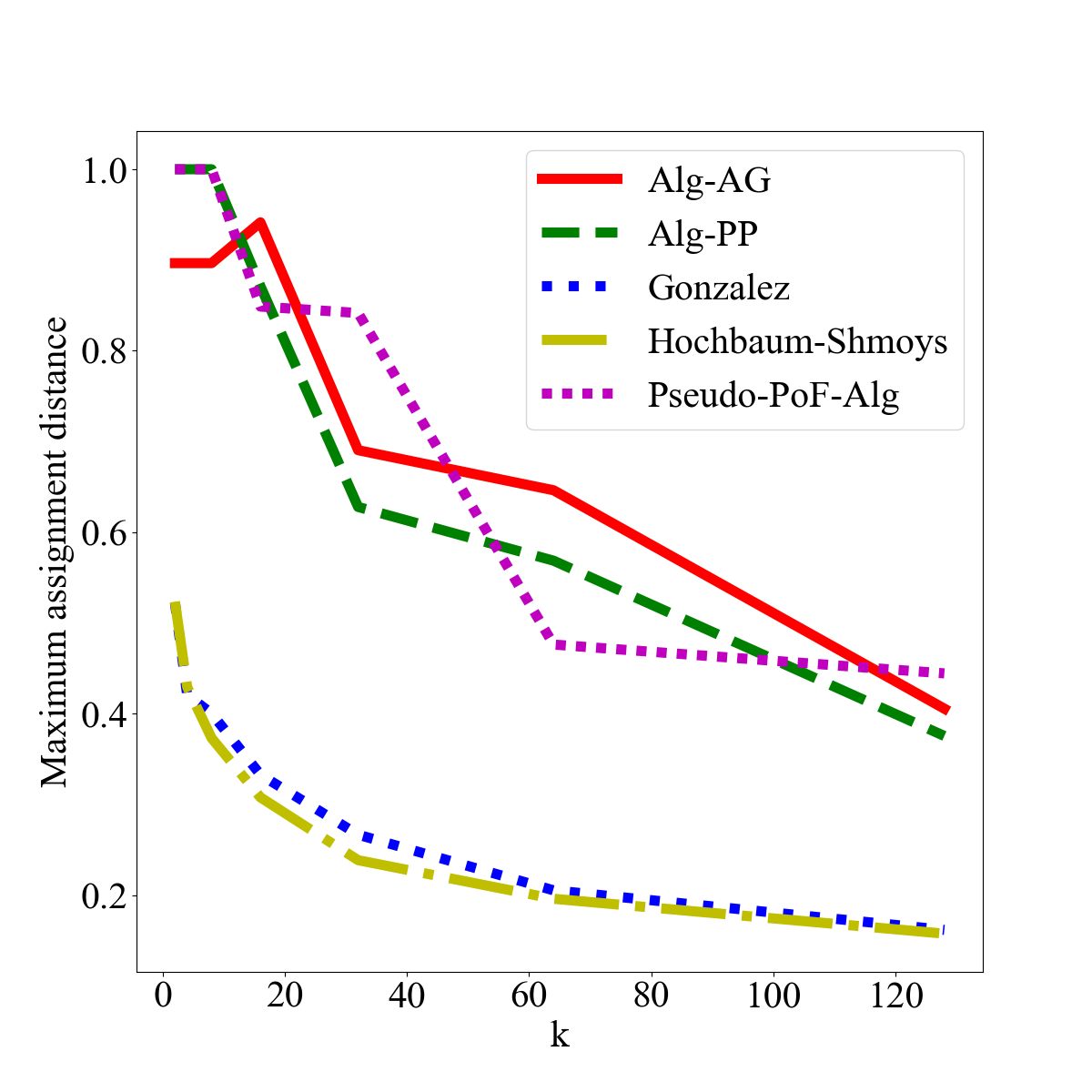}
\caption{Maximum assignment distance for all algorithms}
\label{diabetes-1}
\end{figure}

\begin{figure}[h!]
\begin{subfigure}{.33\textwidth}
  \centering
  \includegraphics[width=.99\linewidth]{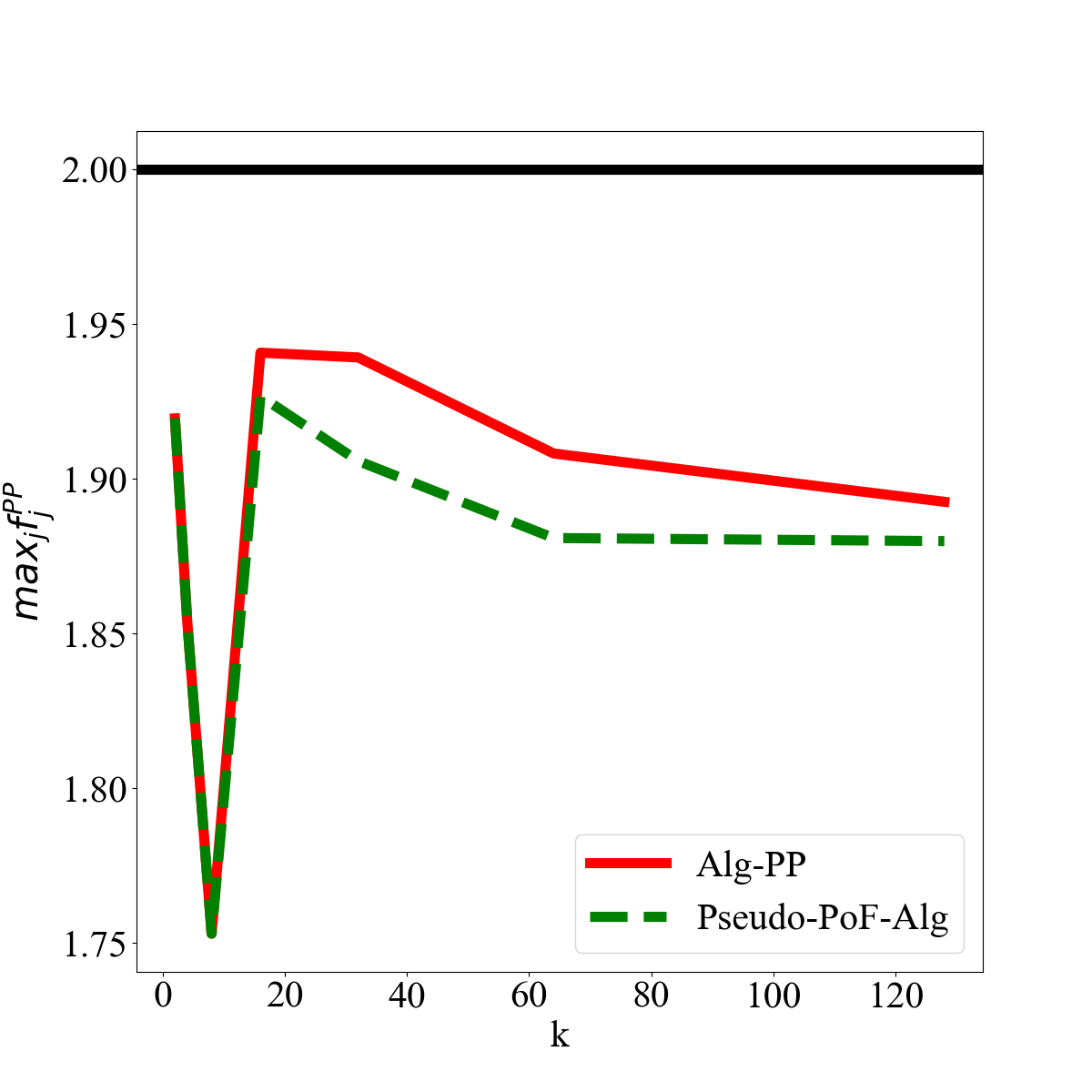}
  \caption{}
  \label{diabetes-2a}
\end{subfigure}%
\begin{subfigure}{.33\textwidth}
  \centering
  \includegraphics[width=.99\linewidth]{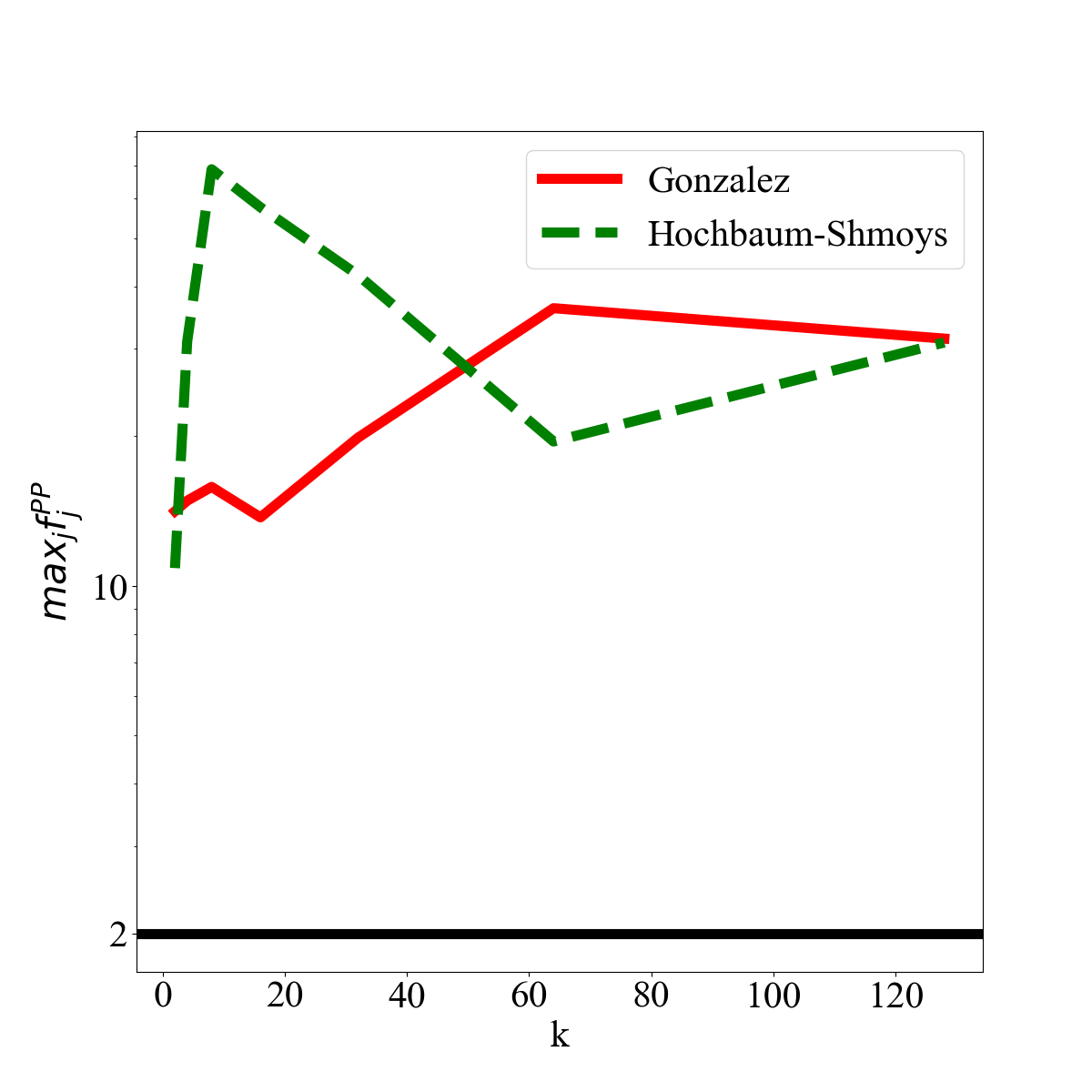}
  \caption{}
  \label{diabetes-2b}
\end{subfigure}
\begin{subfigure}{.33\textwidth}
  \centering
  \includegraphics[width=.99\linewidth]{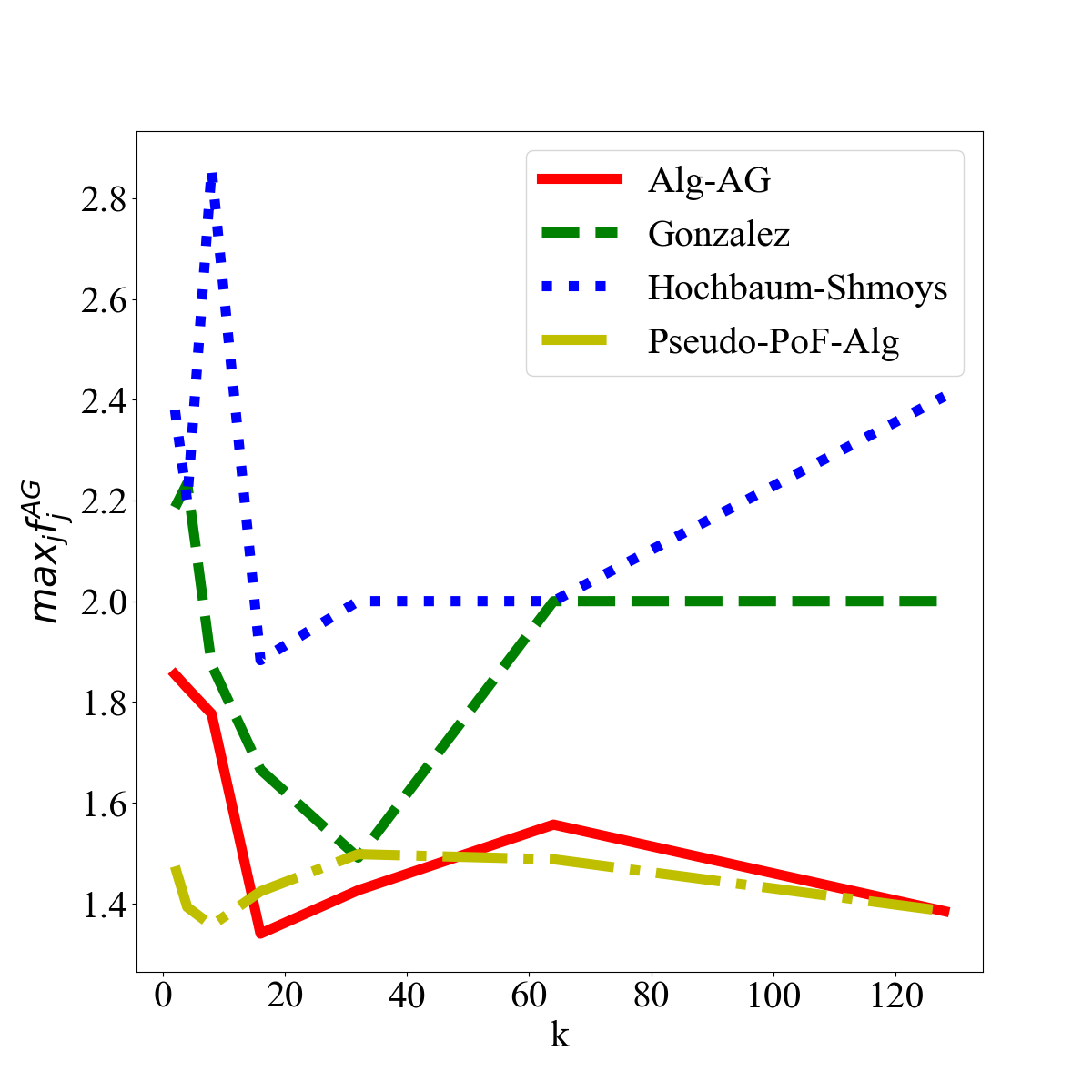}
  \caption{}
  \label{diabetes-2c}
\end{subfigure}
\caption{Satisfaction of fairness constraints}
\label{diabetes-2}
\end{figure}

\begin{figure}[h!]
\begin{subfigure}{.33\textwidth}
  \centering
  \includegraphics[width=.99\linewidth]{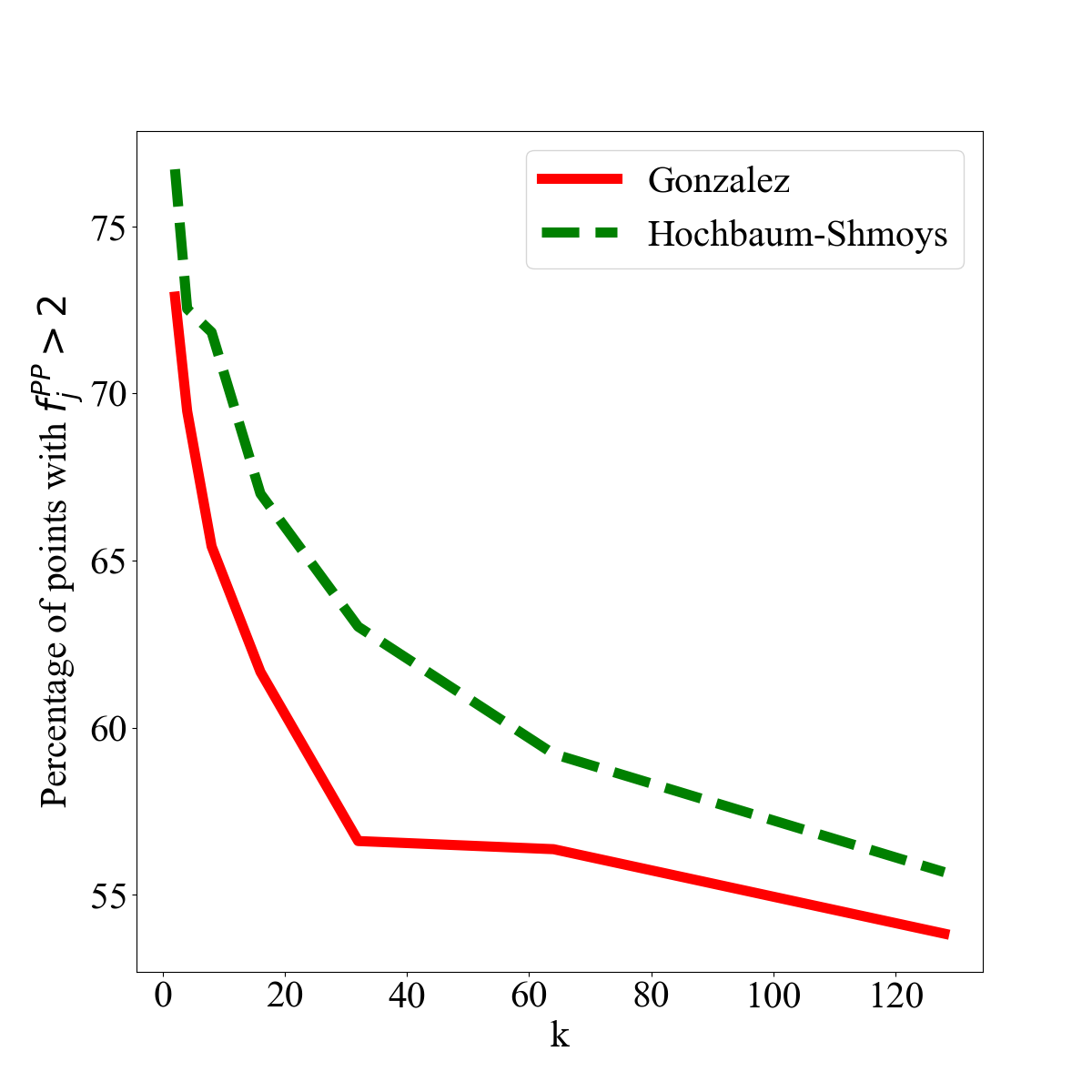}
  \caption{}
  \label{diabetes-3a}
\end{subfigure}%
\begin{subfigure}{.33\textwidth}
  \centering
  \includegraphics[width=.99\linewidth]{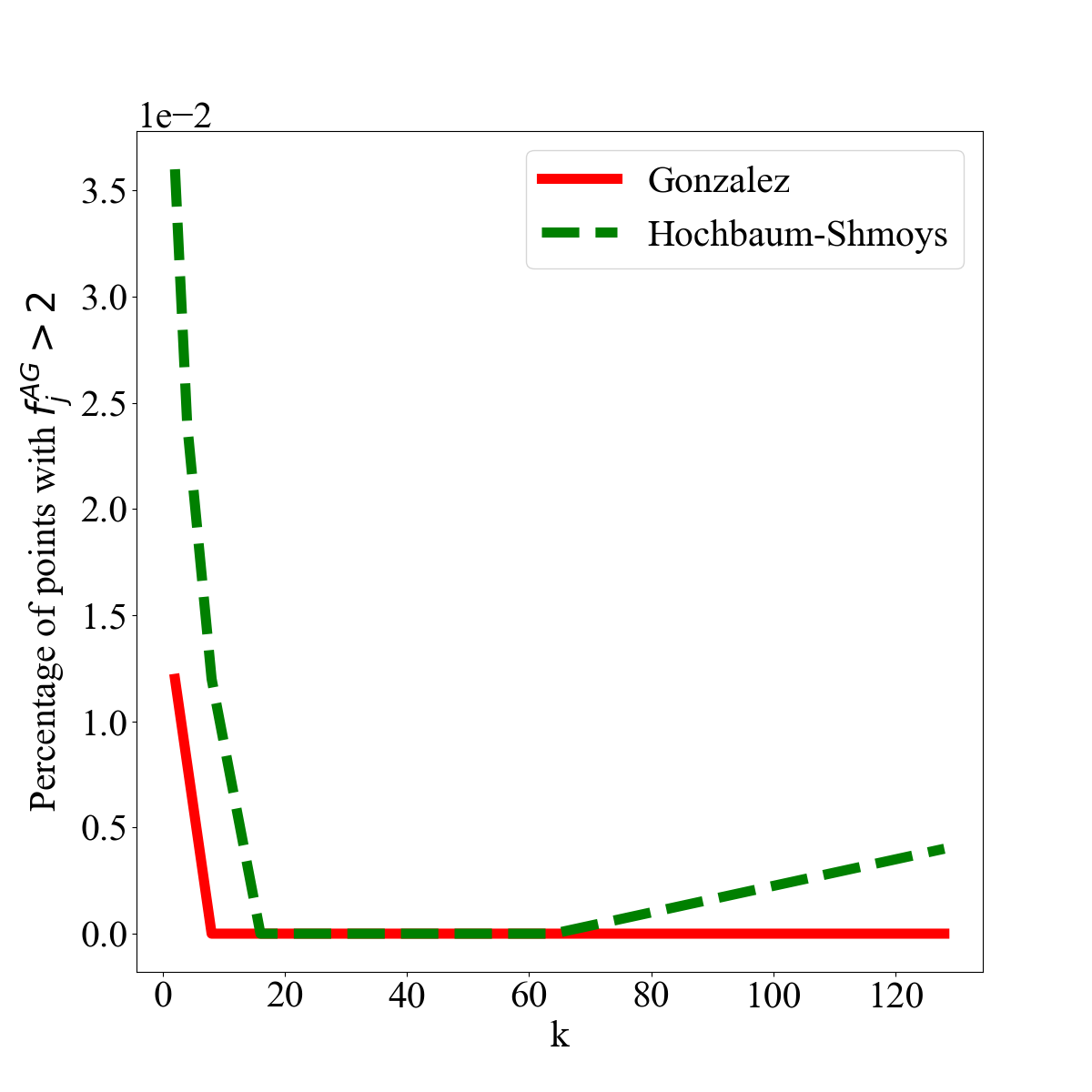}
  \caption{}
  \label{diabetes-3b}
\end{subfigure}
\begin{subfigure}{.33\textwidth}
  \centering
  \includegraphics[width=.99\linewidth]{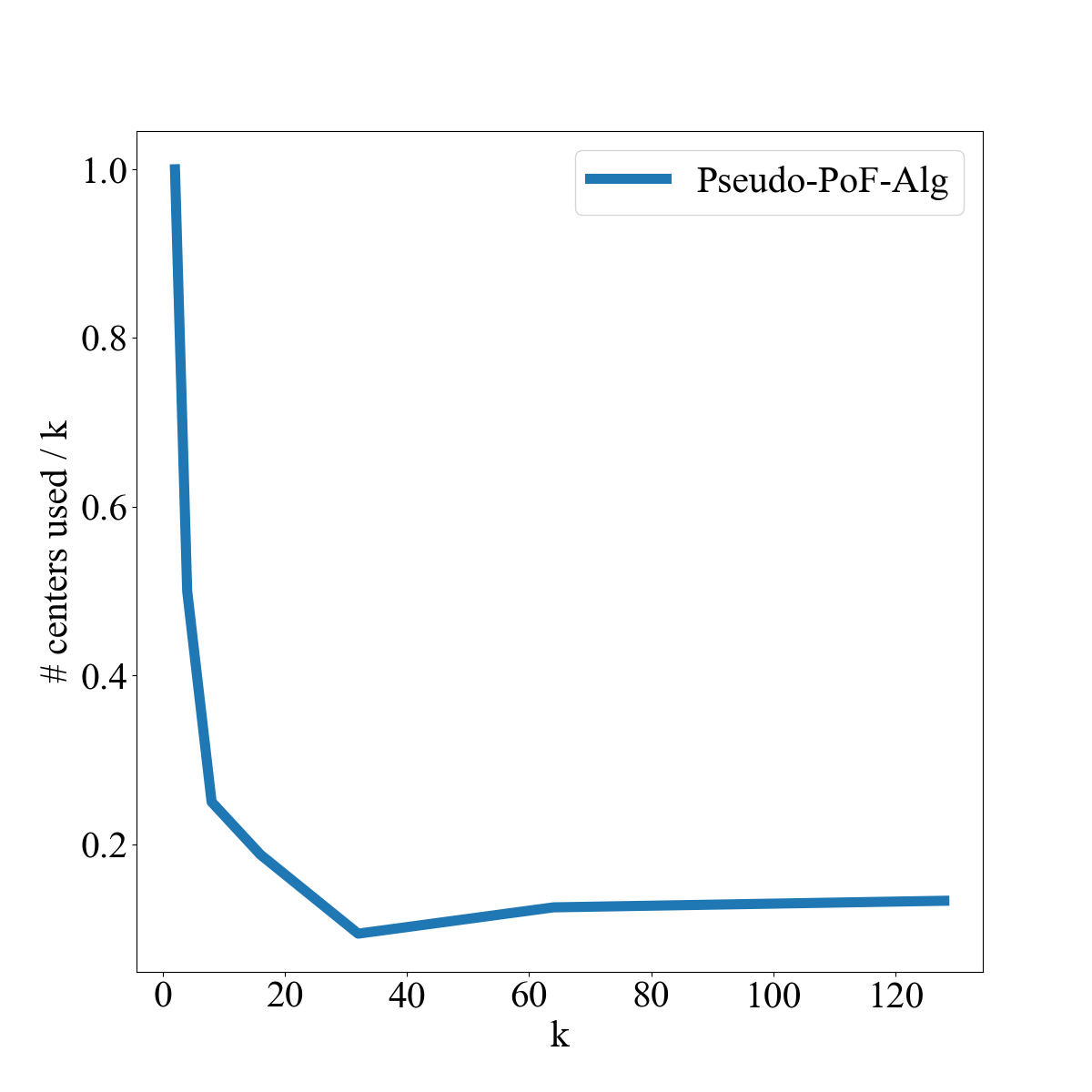}
  \caption{}
  \label{diabetes-3c}
\end{subfigure}
\caption{Amount of constraint violation}
\label{diabetes-3}
\end{figure}

\end{document}